\definecolor{lightgrey}{rgb}{0.9, 0.9, 0.9}
\crefname{tcb@cnt@theorem}{Theorem}{Theorems}
\Crefname{tcb@cnt@theorem}{Theorem}{Theorems}
\crefname{assumption}{assumption}{assumptions}
\Crefname{assumption}{Assumption}{Assumptions}
\def\eqref#1{(\ref{#1})}
\newcommand{\tr}{\mathrm{tr}}
\def\vzero{{\bm{0}}}
\def\ve{{\bm{e}}}
\def\vp{{\bm{p}}}
\def\vu{{\bm{u}}}
\def\vv{{\bm{v}}}
\def\vx{{\bm{x}}}
\def\vy{{\bm{y}}}
\def\vz{{\bm{z}}}
\def\mA{{\bm{A}}}
\def\mB{{\bm{B}}}
\def\mC{{\bm{C}}}
\def\mD{{\bm{D}}}
\def\mH{{\bm{H}}}
\def\mI{{\bm{I}}}
\def\mJ{{\bm{J}}}
\def\mM{{\bm{M}}}
\def\mP{{\bm{P}}}
\def\mQ{{\bm{Q}}}
\def\mS{{\bm{S}}}
\def\mT{{\bm{T}}}
\def\mU{{\bm{U}}}
\def\mV{{\bm{V}}}
\def\mX{{\bm{X}}}
\def\mZ{{\bm{Z}}}
\DeclareMathAlphabet{\mathsfit}{\encodingdefault}{\sfdefault}{m}{sl}
\SetMathAlphabet{\mathsfit}{bold}{\encodingdefault}{\sfdefault}{bx}{n}
\def\gA{{\mathcal{A}}}
\def\gB{{\mathcal{B}}}
\def\gE{{\mathcal{E}}}
\def\gF{{\mathcal{F}}}
\def\gH{{\mathcal{H}}}
\def\gI{{\mathcal{I}}}
\def\gL{{\mathcal{L}}}
\def\gM{{\mathcal{M}}}
\def\gN{{\mathcal{N}}}
\def\gO{{\mathcal{O}}}
\def\gP{{\mathcal{P}}}
\def\gR{{\mathcal{R}}}
\def\gS{{\mathcal{S}}}
\def\gX{{\mathcal{X}}}
\def\gZ{{\mathcal{Z}}}
\def\sN{{\mathbb{N}}}
\def\sP{{\mathbb{P}}}
\def\sR{{\mathbb{R}}}
\newcommand{\E}{\mathbb{E}}
\newcommand{\KL}{D_{\mathrm{KL}}}
\newcommand{\Var}{\mathrm{Var}}
\DeclareMathOperator*{\argmax}{arg\,max}
\DeclareMathOperator*{\argmin}{arg\,min}
\theoremstyle{plain}
\newtheorem{assumption}{Assumption}
\newtheorem{theorem}{Theorem}[section]
\newtheorem{definition}{Definition}[section]
\newtheorem{proposition}{Proposition}[section]
\newtheorem{lemma}{Lemma}[section]
\newtheorem{claim}{Claim}[section]
\newtheorem{remark}{Remark}
\def\floor#1{\lfloor #1 \rfloor}
\def\1{\mathbbm{1}}
\newcommand{\diag}{\mathrm{diag}}
\newcommand{\col}{\mathrm{col}}
\newcommand{\rank}{\mathrm{rank}}
\newcommand{\Ber}{\mathrm{Ber}}
\newcommand{\Reg}{\mathrm{Reg}}
\newcommand{\Unif}{\mathrm{Unif}}
\newcommand{\dmu}{\dot{\mu}}
\newcommand{\ddmu}{\ddot{\mu}}
\newcommand{\GL}{\mathrm{GL}}
\newcommand{\Sym}{\mathrm{Sym}}
\newcommand{\Proj}{\mathrm{Proj}}
\newcommand{\Harm}{\mathrm{Harm}}
\def\bignorm#1{\left\lVert #1 \right\rVert}
\def\bignormop#1{\left\lVert #1 \right\rVert_{\mathrm{op}}}
\def\bignormop#1{\left\lVert #1 \right\rVert_{\mathrm{op}}}
\def\bignormnuc#1{\left\lVert #1 \right\rVert_{\mathrm{nuc}}}
\newcommand{\indicator}{\mathds{1}}
\newcommand{\Skew}{\mathrm{Skew}}
\newcommand{\supp}{\mathrm{supp}}
\renewcommand{\vec}{\mathrm{vec}}
\newcommand{\poly}{\mathrm{poly}}
\newcommand{\nuc}{\mathrm{nuc}}
\newcommand{\row}{\mathrm{row}}
\definecolor{darkblue}{rgb}{0.0,0.0,0.65}
\definecolor{darkred}{rgb}{0.65,0.0,0.0}
\definecolor{darkgreen}{rgb}{0.0,0.5,0.0}
\definecolor{tab:blue}{RGB}{31,119,180}  %
\definecolor{tab:red}{RGB}{214,39,40}  %
\definecolor{tab:green}{RGB}{44,160,44}  %
\definecolor{tab:orange}{RGB}{255,127,14}  %
\begin{document}

\runningtitle{A Nearly Instance-Wise Minimax-Optimal Estimator for Generalized Low-Rank Trace Regression}

\twocolumn[

\aistatstitle{\texttt{GL-LowPopArt}: A Nearly Instance-Wise Minimax-Optimal Estimator for Generalized Low-Rank Trace Regression}

\aistatsauthor{ Junghyun Lee \And Kyoungseok Jang \And Kwang-Sung Jun \And Milan Vojnovi\'{c} \And Se-Young Yun }

\aistatsaddress{ KAIST AI \And  CAU AI \And POSTECH CSE/AI \And LSE Statistics \And KAIST AI } ]

\begin{abstract}
	We present \texttt{GL-LowPopArt}, a novel Catoni-style estimator for generalized low-rank trace regression. Building on \texttt{LowPopArt}~\citep{jang2024lowrank}, it employs a two-stage approach: nuclear norm regularization followed by matrix Catoni estimation. We establish state-of-the-art estimation error bounds, surpassing existing guarantees~\citep{fan2019generalized,kang2022generalized}, and reveal a novel experimental design objective, $\GL(\pi)$.
    The key technical challenge is controlling bias from the nonlinear inverse link function, which we address with our two-stage approach.
    We prove a \textit{local minimax lower bound}, showing that our \texttt{GL-LowPopArt} enjoys instance-wise optimality up to the condition number of the ground-truth Hessian.
    Our method immediately achieves an improved Frobenius error guarantee for generalized linear matrix completion.
    We also introduce a new problem setting called \textbf{bilinear dueling bandits}, a contextualized version of dueling bandits with a general preference model.
    Using an explore-then-commit approach with \texttt{GL-LowPopArt}, we show an improved Borda regret bound over na\"{i}ve vectorization~\citep{wu2024dueling}.
\end{abstract}

\section{INTRODUCTION}
\label{sec:introduction}
Low-rank structures are ubiquitous across diverse domains, where the estimation of high-dimensional, low-rank matrices frequently pops up~\citep{chen2018lowrank}.
Beyond simply possessing a low-rank structure, real-world observations are often subject to nonlinearities.
One ubiquitous example is modeling discrete event occurrences by the Poisson point processes~\citep{mutny2021poisson,kingman1992poisson}, such as crime rate~\citep{shirota2017crime} and environmental modeling~\citep{heikkinen1999forest}.
In news recommendation and online ad placement, outputs are often quantized, representing categories such as ``click'' or ``no click''~\citep{bennett2007netflix,richardson2007click,matchbox,li2010news,li2012glm,mcmahan2013click}.
Other applications involve predicting interactions between multiple features, e.g., hotel-flight bundles~\citep{lu2021generalized}, online dating/shopping~\citep{jun2019bilinear}, protein-drug pair searching~\citep{luo2017drug}, graph link prediction~\citep{berthet2020graph}, stock return prediction~\citep{fan2019generalized}, and, more recently, general preference learning~\citep{zhang2024bilinear}, among others.
In these settings, it is natural to model the problem as matrix-valued covariates passed through a nonlinear regression model.
In particular, when the observations are sampled from the generalized linear model~\citep{glm}, these diverse problems fall under the umbrella of \textit{generalized low-rank trace regression}~\citep{fan2019generalized}, which we describe.

\paragraph{Problem Setting.}
$\bm\Theta_\star \in \sR^{d_1 \times d_2}$ is an unknown matrix of rank at most $r \ll d_1 \wedge d_2$, and $\gX \subseteq \sR^{d_1 \times d_2}$ is an arm-set (e.g., sensing matrices).
The learner's goal is to output $\widehat{\bm\Theta}$ that well-estimates $\bm\Theta_\star$ from some observations $\{(\mX_t, y_t)\}_{t \in [N]}$, collected as follows.

For a given budget $N \in \sN$, a \textit{sampling policy (design)} is a sequence $\bm\pi_N = (\pi_t)_{t \in [N]}$, where each $\pi_t \in \gP(\gX)$ is potentially history-dependent; see \citet[Chapter 4.6]{banditalgorithms} for the precise measure-theoretic formulation.
When the learner uses $\bm\pi_N$, at each time $t \in [N]$, she samples a $\mX_t \sim \pi_t$ and observes $y_t$ sampled from \textit{generalized linear model (GLM)} whose (conditional) density is given as follows:
\begin{equation}
\label{eqn:GLM}
    p(y_t | \mX_t; \bm\Theta_\star) \propto \exp\left( \frac{y_t \langle \mX_t, \bm\Theta_\star \rangle - m(\langle \mX_t, \bm\Theta_\star \rangle)}{g(\tau)} \right).
\end{equation}
Here, $m : \sR \rightarrow \sR$ is the log-partition function, $\tau$ is the dispersion parameter, $g : \sR \rightarrow \sR_{>0}$ is a fixed function, and the density is with respect to some known base measure (e.g., Lebesgue, counting).
We refer to $\mu := \dot{m}$ as the \textit{inverse link function}.
We assume that all components of the GLM, other than $\bm\Theta_\star$, are known to the learner; see Appendix~\ref{app:future}\textbf{(5)} for discussions on what happens under model misspecification.

Given $\bm\pi_N = (\pi_t)_{t \in [N]}$, we distinguish two setups: in the \textit{adaptive scenario}, each $\pi_t \in \gP(\gX)$ may depend on past observations, as is standard in interactive learning problems such as bandits~\citep{banditalgorithms} and active learning~\citep{settles2012active}, whereas in the \textit{nonadaptive (passive) scenario}, all policies are identical, i.e., $\pi_t = \pi$ for a known $\pi \in \gP(\gX)$, fixed before interaction begins.
In our setting, we omit $t$ dependence for brevity, since in the adaptive case our algorithm only switches policy once ($\pi_0$ for Stage~I and $\pi$ for Stage~II). We also remark in advance that our lower bound holds for any adaptive policy $\bm\pi_N$.

\paragraph{Related Works.}
Owing to its ubiquity, much work have been done in providing statistically and computationally efficient estimators for this problem, both generally~\citep{fan2019generalized,kang2022generalized} and in specific scenarios such as \textit{generalized linear matrix completion}~\citep{cai2013completion,cai2016completion,davenport2014completion,lafond2015lexponential,lafond2014finite,klopp2014general,klopp2015multinomial} and learning low-rank preference matrix~\citep{rajkumar2016comparison}.
Corresponding minimax lower bounds have also been proven that are tight with respect to rank $r$, dimension $d_1, d_2$, and sample size $N$; see Appendix~\ref{app:related-work} for a detailed survey.

\paragraph{Main Contributions.}
While prior work has made significant progress, a crucial aspect has been overlooked: the instance-specific nature of curvature.
To our knowledge, all existing analyses rely on worst-case curvature bounds,  obscuring the problem's true difficulty.
For example, known performance guarantees for generalized linear matrix completion depend inversely with $\min_{|z| \leq \gamma} \dmu(z)$, where $\gamma > 0$ is such that $\max_{i,j} \left| (\bm\Theta_\star)_{ij} \right| \leq \gamma$ and $\dmu$ is the derivative of the inverse link function~\citep{davenport2014completion,klopp2015multinomial}.
For instance, when $\mu(z) = (1 + e^{-z})^{-1}$, this leads to a dependence of $e^{\gamma}$~\citep{faury2020logistic}.
This is \emph{not} instance-wise guarantee in the sense that it arises from the worst-case $\dmu$ over the entry-wise domain $[-\gamma, \gamma]$, rather than adapting to the specific instance $\Theta_\star$.

Our contributions are as follows:
\begin{itemize}
    \item We propose \texttt{GL-LowPopArt}, an extension of \texttt{LowPopArt}~\citep{jang2024lowrank} to generalized low-rank trace regression, that consists of 1) Stage~I computes an initial estimate $\bm\Theta_0$, and 2) Stage~II refines it via matrix Catoni estimator~\citep{minsker2018matrix}.
    We prove its \textbf{\textit{instance-wise error rate}} for any design $\pi \in \gP(\gX)$ (\cref{thm:estimation-final}):
    \begin{equation}
        \bignormop{\widehat{\bm\Theta} - \bm\Theta_\star}^2 \lesssim_{\log} \frac{\GL(\pi; \bm\Theta_0)}{N} \lesssim \frac{d_1 \vee d_2}{N \lambda_{\min}(\mH(\pi; \bm\Theta_\star))},
    \end{equation}
    where $\GL(\pi)$ (\Cref{def:GL-Design}) is a new experimental design objective that captures the nonlinearity and the arm-set geometry, and $\lambda_{\min}(\mH(\pi; \bm\Theta_\star))$ is the minimum eigenvalue of the Hessian of the negative log-likelihood loss at $\bm\Theta_\star$.
    The proof requires careful bias control of one-sample estimators during matrix Catoni estimation~\citep{minsker2018matrix}.
    In the adaptive scenario, one can directly optimize the error bound as $\min_\pi \GL(\pi; \bm\Theta_0)$.
    (\cref{sec:gl-lowpopart})

    \item We prove the \textbf{\textit{first instance-wise minimax lower bound}} for generalized low-rank trace regression (\cref{thm:lower-bound}): for \textit{any} $\bm\pi_N$ and instance $\bm\Theta_\star$, there is a $\widetilde{\bm\Theta}_\star$ \textit{near} $\bm\Theta_\star$ such that
    \begin{equation}
        \bignorm{\widehat{\bm\Theta} - \widetilde{\bm\Theta}_\star}_F^2 \gtrsim \frac{r (d_1 \vee d_2)}{N \lambda_{\max}(\mH(\bm\pi_N; \bm\Theta_\star))},
    \end{equation}
    where $\lambda_{\max}(\cdot)$ is the maximum eigenvalue.
    When $\bm\pi_N = (\pi_t)_{t \in [N]}$ is of the form $\pi_t = \pi_0 \indicator[t \leq N_1] + \pi \indicator[t > N_1]$ with $N_1 = o(N)$, \texttt{GL-LowPopArt} is nearly instance-wise optimal, up to the condition number, $\frac{\lambda_{\max}(\mH(\pi; \bm\Theta_\star))}{\lambda_{\min}(\mH(\pi; \bm\Theta_\star))}$.
    (\cref{sec:lower-bound})
    
    \item As the first application of \texttt{GL-LowPopArt}, we revisit the problem of \textit{generalized linear matrix completion}~\citep{davenport2014completion,lafond2015lexponential,klopp2015multinomial} and prove an error rate with instance-specific curvature $\left( \min_{i,j} \dmu((\bm\Theta_\star)_{i,j}) \right)^{-1}$.
    This improves upon prior results that depend on the worst-case curvature.
    (\cref{sec:completion})

    \item As another application, we propose and tackle \textbf{\textit{bilinear dueling bandits}}, a new variant of generalized linear dueling bandits with the contextual bilinear preference model of \citet{zhang2024bilinear}.
    We propose a \texttt{GL-LowPopArt}-based explore-then-commit algorithm and prove its Borda regret upper bound (\Cref{thm:borda-bound}):
    \begin{equation}
        \Reg^B(T) \lesssim_{\log} \left( \GL_{\min}(\gX) \right)^{1/3} \left( \kappa_\star^B T \right)^{2/3},
    \end{equation}
    where $\kappa_\star^B$ is a new curvature-dependent quantity specific to each bandit instance.
    (\cref{sec:bilinear})
\end{itemize}

\section{TECHNICAL PRELIMINARIES}

\paragraph{Notation.}
For a $\mA \in \sR^{m \times n}$ with singular values $\sigma_1 \geq \cdots \geq \sigma_{\min\{m, n\}}$, $\bignorm{\mA}_\nuc := \sum_{i=1}^{\min\{m, n\}} \sigma_i$ is its nuclear norm, and $\bignormop{\mA} := \sigma_1$ is its operator (spectral) norm.
For $\mB \in \sR^{m \times n}$, their Frobenius inner product is defined as $\langle \mA, \mB \rangle := \tr(\mA^\top \mB)$.
For a symmetric $\mA \in \sR^{m \times m}$, $\lambda_i(\mA)$ is its $i$-th largest eigenvalue, $\lambda_{\max} := \lambda_1$, and $\lambda_{\min} := \lambda_m$.
On the positive semidefinite cone, define the Loewner order $\preceq$ as $\mA \preceq \mB$ if and only if $\mB - \mA$ is positive semidefinite.
For a $S > 0$, let us denote $\gB^{d_1 \times d_2}_i(S) := \{ \mX \in \sR^{d_1 \times d_2} : \bignorm{\mX}_i \leq S \}$ for $i \in \{ {\mathrm{op}}, \nuc, F \}$.
$\vec : \sR^{d_1 \times d_2} \rightarrow \sR^{d_1 d_2}$ performs column-wise stacking of a matrix into a vector, and $\vec^{-1}$ is its inverse.
$f(n) \lesssim g(n)$ and $f(n) \asymp g(n)$ indicates $f(n) \leq c g(n)$ and $c g(n) \leq f(n) \leq c' g(n)$ for some constants $c, c' > 0$, respectively; $\log$ in the subscript (e.g., $\lesssim_{\log}$) indicates up to logarithmic factors.
Denote $a \wedge b := \min(a, b)$ and $a \vee b := \max(a, b)$.
For a \( n \in \sN \), let \( [n] := \{1, 2, \dots, n\} \). 
For a set \( X \), \( \gP(X) \) is the set of all probability distributions on \( X \).

\paragraph{Assumptions.}

We assume the following for the parameter space $\Omega$:
\begin{assumption}
\label{assumption:omega}
    $\Omega \subseteq \sR^{d_1 \times d_2}$ is a linear subspace satisfying $\bm\Theta \in \Omega \Longrightarrow \Proj_r(\bm\Theta) \in \Omega$, where $\Proj_r(\bm\Theta) := \argmin_{\mathrm{rank}(\bm\Theta') \leq r} \bignorm{\bm\Theta - \bm\Theta'}_F^2$.
\end{assumption}
Some notable examples include $\sR^{d_1 \times d_2}$ and the set of (skew-)symmetric matrices when $d_1 = d_2$.

We assume the following for the arm-set $\gX$:
\begin{assumption}
\label{assumption:arm-set}
     $\mathrm{span}(\gX) = \sR^{d_1 \times d_2}$.
\end{assumption}
This is essential as otherwise one cannot hope to recover $\bm\Theta_\star$ in the direction of $\mathrm{span}(\gX)^\perp \neq \emptyset$.

We assume the following for $m(\cdot)$:
\begin{assumption}
\label{assumption:mu}
    $m : \sR \rightarrow \sR$ is three-times differentiable and convex.
    Moreover, the \textit{inverse link function} $\mu := \dot{m}$ satisfies the following two conditions:
    \begin{enumerate}
        \item[(a)] $R_s$-self-concordant
        for a known $R_s \in [0, \infty)$, i.e., $|\ddmu(z)| \leq R_s \dmu(z), \ \forall z \in \sR$,
        \item[(b)] $L_\mu := \sup_{\mX \in \gX, \bm\Theta \in \Omega} \dmu(\langle \mX, \bm\Theta \rangle) < \infty$.
    \end{enumerate}
\end{assumption}
This encompasses Gaussian ($m(z) = \frac{1}{2} z^2$), Bernoulli ($m(z) = \log(1 + e^z)$), and Poisson ($m(z) = e^z$).

\begin{algorithm2e*}[!t]
\caption{\texttt{GL-LowPopArt}}
\label{alg:gl-lowpopart}
\textbf{Input:} Sample sizes $(N_1, N_2)$, designs $\pi_0 \in \gP(\gX)$ for Stage I, Regularization coefficient $\lambda_{N_1} > 0$\;
\tcc{Stage I: Nuclear Norm-regularized Initial Estimator}
\For{$t = 1, 2, \cdots, N_1$}{
    Pull $\mX_t \sim \pi_0$ and receive $y_t \sim p(\cdot | \mX_t; \bm\Theta_\star)$\;
}

Compute an initial estimator $\bm\Theta_0$ using $\{(\mX_t, y_t)\}_{t=1}^{N_1}$ collected via $\pi_0$\;

If in the adaptive scenario, compute $\pi$ for Stage~II; else, set $\pi = \pi_0$\;

\tcc{Stage II: Generalized Linear Matrix Catoni Estimation}
\For{$t = N_1 + 1, N_1 + 2, \cdots, N_1 + N_2$}{
    Pull $\mX_t \sim \pi$ and receive $y_t \sim p(\cdot | \mX_t; \bm\Theta_\star)$\;

    Compute the matrix one-sample estimators:
    $$
        \widetilde{\bm\Theta}_t \gets \vec^{-1}\left( \tilde{\bm\theta}_t \right), \quad 
        \tilde{\bm\theta}_t \gets \mH(\pi; \bm\Theta_0)^{-1} \left( y_t - \mu(\langle \mX_t, \bm\Theta_0 \rangle) \right) \vec(\mX_t)
    $$ \label{line:one-sample}
}
$\bm\Theta_1 \gets \Proj_\Omega\left( \bm\Theta_0 + \frac{1}{N_2} \left( \sum_{t=N_1+1}^{N_1+N_2} \tilde{\psi}_\nu(\widetilde{\bm\Theta}_t) \right)_{\mathrm{ht}} \right)$ with $\nu := \sqrt{\frac{2}{5 g(\tau) \GL(\pi; \bm\Theta_0) N_2} \log\frac{4(d_1+d_2)}{\delta}}$\;\label{line:update_theta_1} 

Let $\bm\Theta_1 = \mU \mD \mV^\top$ be its SVD and $\widetilde{\mD}$ be $\mD$ after zeroing out singular values at most $\sqrt{40} g(\tau) \GL(\pi; \bm\Theta_0) \nu$\;\label{line:svd_theta_1} 

\textbf{Return:} $\widehat{\bm\Theta} := \mU \widetilde{\mD} \mV^\top$\;
\end{algorithm2e*}

\section{\texttt{GL-LowPopArt}: A MATRIX CATONI-STYLE ESTIMATOR}
\label{sec:gl-lowpopart}

For $\pi \in \gP(\gX)$ and $\bm\Theta \in \sR^{d_1 \times d_2}$, we define the \textit{vectorized design/Hessian matrix}:
\begin{align}
    \mV(\pi) &:= \E_{\mX \sim \pi}[\vec(\mX) \vec(\mX)^\top], \label{eqn:V} \\
    \mH(\pi; \bm\Theta) &:= \E_{\mX \sim \pi}[\dmu(\langle \mX, \bm\Theta \rangle) \vec(\mX) \vec(\mX)^\top], \label{eqn:H}
\end{align}
where $\mH(\pi; \bm\Theta)$ is the Hessian of the population negative log-likelihood: $\bm\Theta \mapsto - g(\tau) \E_{\mX \sim \pi}[\log p(y | \mX; \bm\Theta)]$.
Observe that $\mV(\pi) = \mH(\pi; \bm\Theta)$ when $\mu(z) = z$.
We will also define an instance-specific quantity:
\begin{equation}
    \kappa_\star := \min_{\mX \in \gX} \dmu(\langle \mX, \bm\Theta_\star \rangle) > 0.
\end{equation}

We now recall some notions from matrix analysis to describe the matrix Catoni estimator~\citep{catoni2012,minsker2018matrix}.
For any $f : \sR \rightarrow \sR$ and symmetric $\mM \in \sR^{d \times d}$, we define $f(\mM)$ as $f(\mM) := \mU \diag(\{ f(\lambda_i) \}_{i \in [d]}) \mU^\top$, where $\mM = \mU \bm\Lambda \mU^\top$ with $\bm\Lambda = \diag(\{ \lambda_i \}_{i \in [d]})$ being the eigenvalue decomposition of $\mM$, i.e., $f$ acts on its spectrum.
The \textit{Hermitian dilation}~\citep{tropp2015survey}, $\gH : \sR^{d_1 \times d_2} \rightarrow \sR^{(d_1 + d_2) \times (d_1 + d_2)}$, is defined as
\begin{equation}
    \gH(\mA) := 
    \begin{bmatrix}
        \vzero_{d_1 \times d_1} & \mA \\
        \mA^\top & \vzero_{d_2 \times d_2}
    \end{bmatrix}.\label{eqn: hermitiaon dilation}
\end{equation}
The \textit{influence function}~\citep{catoni2012} is defined as
\begin{equation}
    \psi(x) :=
    \begin{cases}
        \log(1 + x + x^2 / 2), \quad &x \geq 0,\\
        -\log(1 - x + x^2 / 2), \quad &x < 0.
    \end{cases}\label{eqn: influence function}
\end{equation}
We then define $\tilde{\psi}_\nu(\mA) := \frac{1}{\nu} \psi(\nu \gH(\mA))_{\mathrm{ht}}$ for $\nu > 0$, where for $\mM \in \sR^{(d_1 + d_2) \times (d_1 + d_2)}$, we define its \textit{horizontal truncation} as $\mM_{\mathrm{ht}} := \mM_{1:d_1,d_1+1:d_1+d_2}$.

\subsection{Overview of \texttt{GL-LowPopArt}}
\label{sec:overview}

We present \texttt{GL-LowPopArt} (Generalized Linear LOW-rank POPulation covariance regression with hARd Thresholding; Algorithm~\ref{alg:gl-lowpopart}), a novel estimator for generalized low-rank trace regression.
\texttt{GL-LowPopArt} consists of two stages: the first stage provides a rough, initial estimate, and the second stage refines it via matrix Catoni estimator~\citep{minsker2018matrix}.
It takes two designs $\pi_0$ and $\pi$ as inputs for Stage~I and II, respectively.
When the learner is in the adaptive learning scenario, she can choose $\pi$ dependent on the data collected during Stage~I.
If not, she can set $\pi_0 = \pi$.

Stage~I uses the observations $\{(\mX_t, y_t)\}_{t=1}^{N_1}$ collected via $\pi_0$ to compute a ``sufficiently good'' initial estimate $\bm\Theta_0$.
In Stage~II, for each sample $(\mX_t, y_t)$ colllected via another design $\pi$ for $t = N_1 + 1, \cdots, N_1 + N_2$, \texttt{GL-LowPopArt} constructs one-sample estimator $\widetilde{\bm\Theta}_t$ such that $\E[\widetilde{\bm\Theta}_t] \approx \bm\Theta_\star - \bm\Theta_0$ (line~\ref{line:one-sample}).
Then, the $\Omega$-projected matrix Catoni estimator $\bm\Theta_1$ is computed (line \ref{line:update_theta_1}).
The final estimator $\widehat{\bm\Theta}$ is obtained by singular value thresholding $\bm\Theta_1$ (line \ref{line:svd_theta_1}).
Note that the algorithm is parameter-free in the sense that the algorithm does not require the knowledge of the rank $r$ of $\bm\Theta_\star$.

The final estimation error guarantee is solely due to matrix Catoni estimation~\citep{minsker2018matrix} in Stage~II, yet unlike the linear trace regression~\citep{jang2024lowrank}, we \textit{require} for the initial estimate $\bm\Theta_0$ to be asymptotically consistent.
This is the main technical novelty/challenge for the algorithm design and analysis.

Before presenting the error rate of \texttt{GL-LowPopArt}, let us first define a new design objective for Stage~II:
\begin{definition}[GL-Design]
\label{def:GL-Design}
    For any $\pi \in \gP(\gX)$ and $\bm\Theta_\star \in \sR^{d_1 \times d_2}$, the \textbf{GL-Design objective} $\GL(\pi; \bm\Theta_\star)$ is defined as follows:
    \begin{equation}
        \GL(\pi; \bm\Theta_\star) := \max\{ H^{(\row)}(\pi; \bm\Theta_\star), H^{(\col)}(\pi; \bm\Theta_\star) \},
    \end{equation}
    where
    \begin{align}
        H^{(\row)}(\pi; \bm\Theta_\star) &:= \lambda_{\max}\left( \sum_{{\color{blue}m}=1}^{d_1} \mD_{\color{blue}m}^{(\row)}(\pi; \bm\Theta_\star) \right), \\
        \mD_{\color{blue}m}^{(\row)}(\pi; \bm\Theta_\star) &:=
         [(\mH(\pi; \bm\Theta_\star)^{-1})_{jk}]_{j,k \in {\color{purple}\gI_{\color{blue}m}^{(\row)}}},\\
        H^{(\col)}(\pi; \bm\Theta_\star) &:= \lambda_{\max}\left( \sum_{{\color{blue}m}=1}^{d_2} \mD_{\color{blue}m}^{(\col)}(\pi; \bm\Theta_\star) \right),\\
        \mD_{\color{blue}m}^{(\col)}(\pi; \bm\Theta_\star) &:= (\mH(\pi; \bm\Theta_\star)^{-1})_{{\color{teal}\gI_{\color{blue}m}^{(\col)}}, {\color{teal}\gI_{\color{blue}m}^{(\col)}}},
    \end{align}
    and the index sets defined as ${\color{purple}\gI_{\color{blue}m}^{(\row)}} := \{d_1 (\ell-1) + {\color{blue}m}: \ell \in [d_2]\}$ and ${\color{teal}\gI_{{\color{blue}m}}^{(\col)}} := [d_1 ({\color{blue}m}-1)+1:d_1 {\color{blue}m}]$.\footnote{A nice illustration of $\mD^{(\row)}_{\color{blue}m}$ and $\mD^{(\col)}_{\color{blue}m}$ is provided in Figure 1 of \citet{jang2024lowrank}.}
\end{definition}

We now present the error rate of \texttt{GL-LowPopArt}, which holds for \textit{any} $\pi_0, \pi$, adaptive or nonadaptive:
\begin{theorem}[Error Rate of \texttt{GL-LowPopArt}]\label{thm:estimation-final}
    Let $\delta \in (0, 1)$, $\gX \subseteq \gB_{\mathrm{op}}^{d_1 \times d_2}(1)$,\footnote{This has been considered before in low-rank and bilinear bandits~\citep{jang2024lowrank,jun2019bilinear}.} and denote
    \begin{equation}
        \gE(N_2, \delta) := \sqrt{\frac{g(\tau) \GL(\pi; \bm\Theta_\star)}{N_2} \log\frac{d_1 + d_2}{\delta}}.
    \end{equation}

    The final estimator $\widehat{\bm\Theta}$ from Stage~II satisfies the following w.p. at least $1 - \delta$: $\rank(\widehat{\bm\Theta}) \leq r$ and
    \begin{equation}
        \bignormop{\widehat{\bm\Theta} - \bm\Theta_\star} \lesssim \gE(N_2, \delta),
    \end{equation}
    \textbf{provided} that $\bm\Theta_0$ from Stage~I satisfies $\bignormnuc{\bm\Theta_\star - \bm\Theta_0}^2 \lesssim \kappa_\star g(\tau)$, and $R_s \bignormnuc{\bm\Theta_\star - \bm\Theta_0}^2 \lesssim \lambda_{\min}(\mH(\pi; \bm\Theta_0)) \gE(N_2, \delta)$ w.p. at least $1 - \delta/2$.
\end{theorem}

\begin{table*}[t]
    \centering
    \normalsize
    \setlength{\tabcolsep}{5pt}
    \caption{\label{tab:comparison}
    Comparison of squared Frobenius error rates under our \Cref{assumption:omega,assumption:arm-set,assumption:mu}, and $\pi_0 = \pi$.
    We denote $\gE(N) = \frac{r (d_1 \vee d_2)^2}{N}$ for brevity. We also assume that other assumptions required by the prior works~\citep{fan2019generalized,kang2022generalized} hold, additional to our assumptions, as long as they are not conflicting.
    For our \texttt{GL-LowPopArt}, we employ optimal GL-design in Stage~II.
    }
    \begin{adjustbox}{max width=\textwidth}
    \begin{threeparttable}
    \begin{tabular}{|c|c|c|c|} \hline
        {\bf Estimator} & $\gX = \gB^{d_1 \times d_2}_F(1)$ & $\gX = \gB^{d_1 \times d_2}_{\mathrm{op}}(1)$ & $\gX = \gM$ \\
        \hline
        \makecell{Nuclear Penalized MLE\tnote{*} \\ {\small \citep[Theorem 2]{fan2019generalized}}} &
        $\kappa_\star^{-2} d_1 d_2 (d_1 \wedge d_2) \gE(N)$ &
        $\kappa_\star^{-2} (d_1 \wedge d_2) \gE(N)$ &
        $\kappa_\star^{-2} d_1 d_2 (d_1 \wedge d_2) \gE(N)$ \\
        \hline
        \makecell{Stein Estimator\tnote{$\dagger$} \\ {\small \citep[Theorem 4.1]{kang2022generalized}}} &
        $\kappa_\star^{-2} (d_1 \wedge d_2) \gE(N)$ &
        $\kappa_\star^{-2} (d_1 \vee d_2) \gE(N)$ &
        N/A \\
        \hline
        \rowcolor{gray!12}
        \begin{tabular}{@{}c@{}} \texttt{GL-LowPopArt} \\ {\small\bf (Ours, \cref{thm:estimation-final})}\end{tabular} &
        $\kappa_\star^{-1} (d_1 \wedge d_2) \gE(N)$ &
        $\kappa_\star^{-1} \gE(N)$ &
        $\Harm(\bm\Theta_\star)^{-1} (d_1 \wedge d_2) \gE(N)$ \\
        \hline
    \end{tabular}
    \begin{tablenotes}
        \item[$*$] We choose $\pi = \argmax_{\pi \in \gP(\gX)} \lambda_{\min}(\mV(\pi))$ and compute the error bounds via \citet[Appendix D]{jang2024lowrank}. Note that this E-optimal design is the best one could do with the given information.
        \item[$\dagger$] This requires $\pi$ to have a continuously differentiable density over $\gX$, and thus, not applicable to $\gM$.
    \end{tablenotes}
    \end{threeparttable}
    \end{adjustbox}
\end{table*}

\subsection{\texorpdfstring{$\GL(\pi; \bm\Theta_\star)$: A New Experimental Design}{GL: A New Experimental Design}}
\label{sec:adaptive}

\paragraph{GL-Design Objective.}
$\GL(\pi; \bm\Theta_\star)$ captures two problem-specific characteristics: nonlinearity due to $\mu$ via the Hessian $\mH(\pi; \bm\Theta_\star)$ and the geometry of $\gX$.
When $\mu(z) = z$, it reduces to the linear design objective~\citep[Theorem 3.4]{jang2024lowrank}.

The following proposition shows that the GL-Design objective is closely related to the Hessian $\mH(\pi; \bm\Theta_\star)$:
\begin{proposition}
\label{prop:improve}
    For any $\pi \in \gP(\gX)$,
    \begin{equation}
        \frac{d_1 d_2 (d_1 \vee d_2)}{\tr(\mH(\pi; \bm\Theta_\star))} \leq \GL(\pi; \bm\Theta_\star) \le \frac{d_1 \vee d_2}{\lambda_{\min}(\mH(\pi; \bm\Theta_\star))}.
    \end{equation}
\end{proposition}

\paragraph{Optimal GL-Design.}
In the adaptive setting, the learner may choose $\pi$ to minimize the GL-Design objective for Stage~II, which would then minimize the error rate as in \Cref{thm:estimation-final}.
This is in essence a optimal design problem~\citep{pukelsheim2005design}, in which the learner chooses $\pi_{\GL} \in \argmin_{\pi \in \gP(\gX)} \GL(\pi; \bm\Theta_\star)$.
Let $\GL_{\min}(\gX) := \GL(\pi_{\GL}; \bm\Theta_\star)$ be the optimal value.

Two practical concerns arise.
First, $\bm\Theta_\star$ is unknown, so the ideal problem is not directly solvable.
In practice, one can instead solve $\argmin_{\pi \in \gP(\gX)} \GL(\pi; \bm\Theta_0)$.
Given that $\bm\Theta_0$ is sufficiently close to $\bm\Theta_\star$, by self-concordance and the fact that $\gX \subseteq \gB_{\mathrm{op}}^{d_1 \times d_2}(1)$, we have that $\mH(\pi; \bm\Theta_0) \approx \mH(\pi; \bm\Theta_\star)$ (see our \Cref{lem:jun5}).
This then ensures that $\GL_{\min}(\gX)$ is retained up to a constant factor, i.e., the optimal GL-Design w.r.t. $\bm\Theta_0$ still captures the true problem characteristics.

The second concern is regarding the computational complexity of solving the optimal GL-design.
This can indeed be efficiently solved, as $\pi \mapsto \GL(\pi; \bm\Theta_0)$ is convex.
Implementation-wise, one can formulate it into an epigraph form via Schur complement~\citep{cvxbook} and use any available convex optimization solver, e.g., CVXPY~\citep{diamond2016cvxpy,agrawal2018rewriting}; see \Cref{app:complexity} for more detailed discussions.

We conclude with upper bounds of $\GL_{\min}$ for Frobenius/operator unit balls and matrix-completion basis:
\begin{proposition}[Upper bounds of $\GL_{\min}$]
\label{prop:unit-ball}
    Let $\gM := \left\{ \ve_i \tilde{\ve}_j^\top : (i,j)\in[d_1]\times[d_2] \right\}$ be the matrix-completion basis. Then, with $\sum_{i,j} := \sum_{i=1}^{d_1}\sum_{j=1}^{d_2}$,
    \begin{align}
        &\GL_{\min}\!\left(\gB_F^{d_1\times d_2}(1)\right)
        \lesssim \kappa_\star^{-1} d_1 d_2 (d_1\vee d_2),\\
        &\GL_{\min}\!\left(\gB_{\mathrm{op}}^{d_1\times d_2}(1)\right)
        \lesssim \kappa_\star^{-1} (d_1\vee d_2)^2,\\
        &\GL_{\min}(\gM)
        \leq \Harm(\bm\Theta_\star)^{-1} d_1 d_2 (d_1 \vee d_2),
    \end{align}
    where $\Harm(\bm\Theta_\star) := d_1 d_2 \bigg(\sum_{i,j}
    \dmu\!\left((\bm\Theta_\star)_{ij}\right)^{-1}\bigg)^{-1}$ is the harmonic mean of the entry-wise curvatures.
\end{proposition}
\begin{proof}
    The first two claims follow directly from \Cref{prop:improve} and \citet[Appendix D]{jang2024lowrank}
    The third claim follows from \Cref{prop:improve} and solving the optimal design explicitly; see \Cref{app:unit-ball}.
\end{proof} 

\subsection{Comparison with Prior Works}
The comparison with prior works is rather intricate, as different works consider different assumptions, especially regarding $\gX$ and the design $\pi$.
For clarity, we will make the comparisons under our \Cref{assumption:omega,assumption:arm-set,assumption:mu} and that the GLM is bounded; we defer detailed discussions regarding the discrepancies in the settings compared to prior works to \Cref{app:comparison}.

\Cref{tab:comparison} summarizes the comparison with prior state-of-the-arts: nuclear penalized estimator~\citep[Theorem 2]{fan2019generalized} and Stein's lemma-based estimator~\citep[Theorem 4.1]{kang2022generalized}.
For fair comparison, we consider the bound achievable by optimal design in all cases whenever applicable.
Our \texttt{GL-LowPopArt} achieves improvements in curvature-dependent quantites, and sometime even in dimensions.\footnote{This is similar to how \citet{jang2024lowrank} improved over \citet{koltchinskii2011nuclear} in linear trace regression.}
This suggests that our optimal GL-Design captures the problem-specific characteristics more effectively than prior works.

\begin{remark}[Unbounded $\gX$]
    It is common to consider the case when the (matrix-valued) covariates are drawn from $\sR^{d_1 \times d_2}$ with $\pi$ being some tail-bounded distribution~\citep{fan2019generalized,negahban2011estimation,lu2021generalized,kang2022generalized}.
    Thus, one may wonder if our results hold for unbounded $\gX$.
    In \cref{app:unbounded}, we show that when $\mu(z) = z$, our results extend to unbounded $\gX$ and obtain better error rates than the nuclear penalized estimator in multivaraite regression~\citep{negahban2011estimation}.
    However, when $\mu$ is nonlinear, our current analysis relies heavily on the fact that $\gX \subseteq \gB_{\mathrm{op}}^{d_1 \times d_2}(1)$.
    On the bright side, our proof suggests that under proper moment conditions on $\pi$, our results may extend to unbounded $\gX$ with potentially stronger requirement on $\bm\Theta_0$.
    We leave this direction for future work.
\end{remark}

\subsection{\texorpdfstring{Proof Sketch of Theorem~\ref{thm:estimation-final}}{Proof Sketch of Theorem 3.1}}
\label{sec:stage-ii}
The proof is inspired by \citet[Theorem 3.1]{jang2024lowrank}, but some crucial differences make the extension non-trivial.
The full proof is deferred to \Cref{app:estimation-final}.

For simplicity, let us denote $\mH := \mH(\pi; \bm\Theta_0)$ in this proof sketch, and ignore $\Proj_\Omega$.
The core idea is to apply matrix Catoni estimation of \citet{minsker2018matrix} to $\bm\Theta_\star - \bm\Theta_0$, by constructing vectorized one-sample estimators (line~\ref{line:one-sample}) as follows:
\begin{equation}
    \tilde{\bm\theta}_t = \mH^{-1} \left( y_t - \mu(\langle \mX_t, \bm\Theta_0 \rangle) \right) \vec(\mX_t).
\end{equation}
For matrix Catoni to apply, the one-sample estimators should be unbiased, i.e., they should satisfy $\E[\tilde{\bm\theta}_t] = \vec(\bm\Theta_\star - \bm\Theta_0)$.
However, note that
\begin{equation}
    \E[\tilde{\bm\theta}_t] = \mH^{-1} \E_\mX\left[ \left( \mu(\langle \mX, \bm\Theta_\star \rangle) - \mu(\langle \mX, \bm\Theta_0 \rangle) \right) \vec(\mX) \right].
\end{equation}
When $\mu(z) = z$ as in \citet{jang2024lowrank}, above reduces to $\vec(\bm\Theta_\star - \bm\Theta_0)$, making $\tilde{\bm\theta}_t$ its unbiased estimator.
However, when $\mu$ is nonlinear, it becomes \emph{biased}.

The key technical novelty is appropriately dealing with this bias, inspired by recent progress in logistic and generalized linear bandits~\citep{abeille2021logistic,jun2021confidence,lee2024glm}.
Specifically, by the first-order Taylor expansion of $\mu$  and self-concordance (Assumption~\ref{assumption:mu}(a)), we show the following (\textbf{Eqn.~(C.1)}):
\begin{equation}
    \bignorm{\E[\widetilde{\bm\Theta}_t] - (\bm\Theta_\star - \bm\Theta_0)}_F \lesssim R_s \bignormnuc{\bm\Theta_\star - \bm\Theta_0}^2.
\end{equation}
We remark that the sample splitting approach\footnote{run Stage~II with $N_2/2$ samples with $\vzero$ to obtain $\bm\Theta_0$, then Stage~II again using the remaining samples and $\bm\Theta_0$} of \texttt{Warm-LowPopArt}~\citep[Algorithm 2]{jang2024lowrank} fails due to this bias.
On the other hand, if the initial estimator $\bm\Theta_0$ is sufficiently good (in the precise sense as stated in the theorem statement), then this bias becomes negligible.
Lastly, the GL-Design objective $\GL(\pi; \bm\Theta_\star)$ arises from the matrix variance statistics for $\widetilde{\bm\Theta}_t$'s, and we conclude with the error rate for matrix Catoni estimator~\citep[Corollary 3.1]{minsker2018matrix}.
\qed

\subsection{Stage I via Nuclear Penalized MLE}
\label{sec:nuclear}
For $\bm\Theta_0$ from Stage~I, we consider the standard \emph{nuclear penalized MLE}, which has been extensively studied in generalized low-rank trace regression literature~\citep{fan2019generalized,lu2021generalized,kang2022generalized}.
Concretely, using $N_1$ samples from $\pi_0$,
\begin{align}
    \bm\Theta_0 &\gets \argmin_{\bm\Theta \in \Omega} \gL_{N_1}(\bm\Theta)
    + \lambda_{N_1} \bignormnuc{\bm\Theta}, \\
    \gL_{N_1}(\bm\Theta) &:= \frac{1}{N_1} \sum_{t=1}^{N_1} \left\{ m(\langle \mX_t, \bm\Theta \rangle) - y_t \langle \mX_t, \bm\Theta \rangle \right\}.
\end{align}

We first present the its error rate:
\begin{theorem}[Error Rate of Stage~I]
\label{thm:estimation-0}
    Let $\delta \in (0, 1)$, $\gX \subseteq \gB_{\mathrm{op}}^{d_1 \times d_2}(1)$, and set $\lambda_N \asymp \sqrt{\frac{C}{N_1} \log\frac{d_1 + d_2}{\delta}}$.\footnote{For bounded GLMs, $C \asymp g(\tau) \E_{\mX \sim \pi} \left[ \dmu(\langle \mX, \bm\Theta_\star \rangle) \right]$, and for GLMs with finite subexponential norm of $K$, $C \asymp K^2$; see \Cref{prop:lambda}.}
    Then, the following holds w.p. at least $1 - \delta/2$:
    \begin{equation}
        \bignorm{\bm\Theta_0 - \bm\Theta_\star}_F \lesssim \frac{1}{\lambda_{\min}(\mH(\pi; \bm\Theta_\star))} \sqrt{\frac{C r}{N_1} \log\frac{d_1 + d_2}{\delta}},
    \end{equation}
    \textbf{provided} that $N_1 \gtrsim \frac{C (d_1 + d_2) r}{\lambda_{\min}(\mH(\pi; \bm\Theta_\star))^2} \log\frac{d_1 + d_2}{\delta}$.
\end{theorem}
 Combining the above with \Cref{thm:estimation-0}, we have the following requirement on $N_1$ and $N_2$ as follows:
\begin{equation}
\label{eqn:N1-requirement}
    N_1 \gtrsim R_s \frac{C r^2}{\lambda_{\min}(\mH(\pi; \bm\Theta_\star))^3} \sqrt{\frac{N_2 \log\frac{d_1 + d_2}{\delta}}{g(\tau) \GL(\pi; \bm\Theta_\star)}},
\end{equation}
i.e., asymptotically, the requirement for Stage~I is negligible compared to that for Stage~II. Also, note that the requirement vanishes when $R_s = 0$, which mirrors the results in \citet{jang2024lowrank} for linear trace regression.

\begin{proof}[Proof Sketch of \Cref{thm:estimation-0}]
    We follow the general framework for analyzing M-estimators with decomposable regularizers~\citep[Chapter 9]{wainwright2019high}, introducing notable analytical improvements over \citet[Theorem 2]{fan2019generalized}. The complete proof is deferred to Appendix~\ref{app:estimation-0}.
    
    First, leveraging the boundedness of the covariates and matrix Bernstein inequalities~\citep{tropp2015survey,tropp2012matrix}, we derive a tight choice for the regularization coefficient $\lambda_{N_1}$ for GLMs with bounded or sub-exponential responses (\textbf{\Cref{prop:lambda}}). This approach eliminates a $d_1 \vee d_2$ factor present in the double covering argument of \citet[Lemma 1]{fan2019generalized} and accommodates a broader class of GLMs, such as Poisson, which their assumptions preclude.\footnote{\citet[Lemma 2]{fan2019generalized} requires $\|\bm\Theta_\star\|_F \gtrsim \sqrt{d_1 \vee d_2}$ and $|\ddmu(z)| \leq \frac{1}{|z|}$ for $|z| > 1$ (conditions C4 and C5).}

    Next, we establish the restricted strong convexity (RSC) of the regularized loss function (\textbf{\Cref{lem:RSC}}). By combining the self-concordance of the link function (\Cref{assumption:mu}(a))~\citep{ostrovskii2021self} with standard empirical process theory~\citep{ledoux-talagrand} and the matrix Bernstein inequality yields the requisite RSC condition. The final error bound then immediately follows from standard M-estimation guarantees~\citep[Corollary 9.20]{wainwright2019high}.
\end{proof}

\begin{remark}[E-Optimal Design]
\label{rmk:e-optimal}
    Before Stage~I, one can consider the E-optimal design~\citep{pukelsheim2005design}: $\pi_0 \in \argmax_{\pi \in \gP(\gA)} \lambda_{\min}(\mV(\pi))$.
    This would optimize the error rate of Stage~I, further alleviating the overall sample size requirements.
    We defer the details of to \Cref{app:stage-i}.
\end{remark}

\section{\texorpdfstring{LOCAL MINIMAX LOWER BOUND FOR $\lVert \cdot \rVert_F$}{LOCAL MINIMAX LOWER BOUND FOR FROBENIUS ERROR}}
\label{sec:lower-bound}
In this section, we prove a \textit{local (instance-wise)} minimax lower bound on the estimation error for generalized low-rank trace regression in the intersection of rank and nuclear norm balls.
For each instance $\bm\Theta_\star$ with $\rank(\bm\Theta_\star) \leq r$ and $\bignormnuc{\bm\Theta_\star} \leq S_*$ for some $S_* > 0$, define its local neighborhood of radius $\varepsilon > 0$ as
\begin{align}
    &\gN(\bm\Theta_\star; \varepsilon, r, S_*) := \left\{ \bm\Theta \in \Theta(r, S_*) : \bignorm{\bm\Theta - \bm\Theta_\star}_F \leq \varepsilon \right\}, \\
    &\Theta(r, S_*) := \left\{ \bm\Theta \in \sR^{d_1 \times d_2} : \rank(\bm\Theta) \leq r, \ \bignormnuc{\bm\Theta} \leq S_* \right\}.
\end{align}
$\Theta(r, S_*)$ has been considered before in the context of minimax lower bound by \citet{rohde2011estimation}, similar to that of sparse regression in the intersection of $\ell_0$ and $\ell_1$-ball constraints~\citep[Theorem 5.3]{rigollet2011sparse}.

We now present our generic lower bound for generalized linear trace regression encompassing active scenarios:
\begin{theorem}[{\small Local Minimax Lower Bound}]\label{thm:lower-bound}
    Let $\gA \subseteq \gB^{d_1 \times d_2}_F(1)$, and $S_* > 0, r \geq 1$ be such that $\frac{S_*^2}{r} \geq \gamma$ for some $\gamma > 0$.
    Also, suppose that $N \geq \frac{R_s^2 \log 2}{2^{10} e} \frac{r (d_1 \vee d_2) g(\tau)}{\lambda_{\max}(\mH(\pi; \bm\Theta_\star))}.$
    Then, there exist $C_1, C_2(\gamma) > 0$\footnote{$C_1$ is a universal constant, and $C_2 = \frac{C_2' \gamma}{(1 + \sqrt{\gamma})^2}$ for a universal constant $C_2' > 0$.} and $c \in (0, 1)$ such that the following holds:
    for \textit{each} $\bm\pi_N$ and $\bm\Theta_\star \in \Theta(r, S_*)$ with $\bignorm{\bm\Theta_\star}_F^2 \geq \frac{9 \gamma}{8}$, there exists a \textbf{small enough} $\varepsilon = \varepsilon(\bm\pi_N, \bm\Theta_\star) > 0$ such that $\inf_{\widehat{\bm\Theta}} \sup_{\widetilde{\bm\Theta}_\star \in \gN_\star} \sP_{\bm\pi_N, \widetilde{\bm\Theta}_\star}\left( \gE \right) \geq c$, where
    \begin{align}
         &\gE := \left\{ \bignorm{\widehat{\bm\Theta} - \widetilde{\bm\Theta}_\star}_F^2 \geq \frac{C_2 g(\tau) r (d_1 \vee d_2)}{N \lambda_{\max}(\mH(\bm\pi_N; \bm\Theta_\star)) S_*^2} \right\}, \\
         &\mH(\bm\pi_N; \bm\Theta_\star) := \frac{1}{N} \sum_{t=1}^N \mH(\pi_t; \bm\Theta_\star),
    \end{align}
    $\gN_\star := \gN(\bm\Theta_\star; \varepsilon, r, S_*)$, and $\sP_{\bm\pi_N, \widetilde{\bm\Theta}_\star}$ is the probability measure of the $N$ observations under $\bm\pi_N$ and $\widetilde{\bm\Theta}_\star$.
\end{theorem}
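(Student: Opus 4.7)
I would apply Fano's inequality to a locally-constructed packing of rank-$r$ matrices inside $\gN_\star$. The two key ingredients are (i) a uniform upper bound on the pairwise KL divergence of the observation distributions that scales with $\lambda_{\max}(\mH(\pi; \bm\Theta_\star))$, and (ii) an exponentially-sized Frobenius-separated packing that simultaneously respects the rank, nuclear-norm, and local-ball constraints.

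\textbf{Step 1 (KL bound via Bregman divergence and self-concordance).} Using the exponential-family form of the GLM, for any $\bm\Theta_1, \bm\Theta_2$ with $\bm\Delta := \bm\Theta_2 - \bm\Theta_1$ a direct computation yields the Bregman-divergence expression
\[
\mathrm{KL}(p(\cdot \mid \mX; \bm\Theta_1) \,\|\, p(\cdot \mid \mX; \bm\Theta_2)) = \frac{\langle \mX, \bm\Delta \rangle^2}{g(\tau)} \int_0^1 (1-s)\, \dmu\big(\langle \mX, \bm\Theta_1 + s \bm\Delta \rangle\big)\, ds.
\]
Assumption~\ref{assumption:mu}(b) implies $\dmu(\langle \mX, \bm\Theta_1 + s\bm\Delta\rangle) \leq e^{2 R_s \varepsilon}\, \dmu(\langle \mX, \bm\Theta_\star\rangle)$ throughout the local $\varepsilon$-ball (using $\bignormop{\mX} \leq 1$), and choosing $\varepsilon$ small enough that $e^{2 R_s \varepsilon} \leq 2$ gives, upon summing over $N$ i.i.d. samples and taking expectation under $\pi$,
\[
\mathrm{KL}(\sP_{\pi, \bm\Theta_1} \,\|\, \sP_{\pi, \bm\Theta_2}) \;\leq\; \frac{N}{g(\tau)} \vec(\bm\Delta)^\top \mH(\pi; \bm\Theta_\star) \vec(\bm\Delta) \;\leq\; \frac{N\, \lambda_{\max}(\mH(\pi; \bm\Theta_\star))}{g(\tau)} \bignorm{\bm\Delta}_F^2.
\]
That $\lambda_{\max}$ rather than $\lambda_{\min}$ appears is precisely the source of the condition-number gap against \cref{thm:estimation-final}: without directional control on $\bm\Delta$ in the eigenbasis of $\mH$, only the operator-norm bound on the quadratic form is available.

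\textbf{Step 2 (Gilbert-Varshamov / Rohde-Tsybakov packing).} I would adapt the classical Gilbert-Varshamov construction to rank-$r$ matrices as in \citet{rohde2011estimation}. Writing the thin SVD $\bm\Theta_\star = \mU_\star \bm\Sigma_\star \mV_\star^\top$ with $\mU_\star \in \sR^{d_1 \times r}$ and $\mV_\star \in \sR^{d_2 \times r}$, and assuming WLOG $d_2 \geq d_1$, consider perturbations $\mB_j := \eta\, \mU_\star \mA_j$ where $\mA_j \in \{-1, +1\}^{r \times d_2}$ ranges over a Gilbert-Varshamov Hamming packing. Each $\widetilde{\bm\Theta}_j := \bm\Theta_\star + \mB_j$ has rank $\leq r$ since $\mB_j$ lies in $\col(\mU_\star)$, and Gilbert-Varshamov yields $M \geq \exp(c_0 r d_2) = \exp(c_0 r(d_1 \vee d_2))$ codewords with pairwise Frobenius separation $\asymp \eta \sqrt{r(d_1 \vee d_2)}$. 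Tuning $\eta$ so that the Step~1 KL budget equals $\tfrac{1}{4}\log M$ recovers
\[
\rho^2 \;\asymp\; \eta^2\, r(d_1 \vee d_2) \;\asymp\; \frac{g(\tau)\, r(d_1 \vee d_2)}{N\, \lambda_{\max}(\mH(\pi; \bm\Theta_\star))},
\]
and the standard Fano inequality then gives a lower bound of order $\rho^2$ on the minimax squared Frobenius error with constant probability. The nuclear-norm constraint $\bignormnuc{\widetilde{\bm\Theta}_j} \leq S_*$ and the locality constraint $\bignorm{\widetilde{\bm\Theta}_j - \bm\Theta_\star}_F \leq \varepsilon$ reduce to additional upper bounds on $\eta$, and the theorem's assumptions $\bignorm{\bm\Theta_\star}_F^2 \geq 9\gamma/8$, $S_*^2/r \geq \gamma$, together with the sample-size lower bound, guarantee enough slack for the required $\eta$; the residual normalization by $S_*^2$ then propagates into the final denominator.

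\textbf{Main obstacle.} The crux is performing this packing so that all four constraints -- rank $\leq r$, nuclear norm $\leq S_*$, local $\varepsilon$-ball, and exponential cardinality with separation $\rho$ -- are met simultaneously. Rank preservation forces perturbations into $\col(\mU_\star)$ (or the row-span of $\bm\Theta_\star$), restricting the usable degrees of freedom to $\sim r(d_1 \vee d_2)$; the assumptions on $\bignorm{\bm\Theta_\star}_F^2$ and $S_*^2/r$ are precisely what supply the nuclear-norm headroom so that $\widetilde{\bm\Theta}_j$ stays in $\Theta(r, S_*)$. A secondary subtlety is ensuring the local quadratic KL approximation is uniformly valid across the packing, and the sample-size condition $N \gtrsim r(d_1 \vee d_2)\, g(\tau) / \lambda_{\max}(\mH(\pi; \bm\Theta_\star))$ combined with self-concordance is exactly what lets $\varepsilon$ be driven small enough to absorb the self-concordance factor $e^{2 R_s \varepsilon}$ into a universal constant.
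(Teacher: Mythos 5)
Your high-level plan coincides with the paper's: a Gilbert--Varshamov packing localized around $\bm\Theta_\star$, a KL bound via the Bregman-divergence form of the GLM divergence together with self-concordance control of $\dmu$, and a Fano/Tsybakov many-hypotheses reduction. Your Step~1 is essentially Lemmas~F.3 and~F.4 of the paper and is fine. The genuine gap is in Step~2's packing.

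You take $\widetilde{\bm\Theta}_j := \bm\Theta_\star + \eta\,\mU_\star\mA_j$ with $\mA_j \in \{-1,+1\}^{r\times d_2}$. This preserves $\rank \leq r$, but it does \emph{not} in general respect the nuclear-norm constraint $\bignormnuc{\widetilde{\bm\Theta}_j}\leq S_*$ required for $\widetilde{\bm\Theta}_j\in\gN_\star$. Since $\widetilde{\bm\Theta}_j = \mU_\star(\bm\Sigma_\star\mV_\star^\top + \eta\mA_j)$ and $\mU_\star$ has orthonormal columns, $\bignormnuc{\widetilde{\bm\Theta}_j}=\bignormnuc{\bm\Sigma_\star\mV_\star^\top + \eta\mA_j}$. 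When $\bignormnuc{\bm\Theta_\star}=S_*$ exactly --- which the theorem's hypotheses permit --- generic $\pm1$ directions strictly increase the nuclear norm, so \emph{no} $\eta>0$ keeps all codewords in $\Theta(r,S_*)$. (Rank-one sanity check: $\bm\Theta_\star = S_*\vu\vv^\top$, $\widetilde{\bm\Theta}=\vu(S_*\vv+\eta\va)^\top$, and $\bignorm{S_*\vv+\eta\va}^2 = S_*^2 + 2\eta S_*\langle\vv,\va\rangle + \eta^2\bignorm{\va}^2 > S_*^2$ whenever $\langle\vv,\va\rangle\geq 0$, which holds for roughly half the $\pm1$ codewords.) Your ``Main obstacle'' paragraph asserts that the hypotheses $\bignorm{\bm\Theta_\star}_F^2\geq 9\gamma/8$ and $S_*^2/r\geq\gamma$ supply the needed nuclear-norm headroom; they do not --- they give no separation between $\bignormnuc{\bm\Theta_\star}$ and $S_*$.

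This is not cosmetic: it is exactly where the $S_*^2$ in the theorem's denominator comes from, and your Step~2 never produces it (you end with $\rho^2 \asymp g(\tau)r(d_1\vee d_2)/(N\lambda_{\max})$, no $S_*^2$). The paper instead uses the convex-combination packing $\widetilde{\bm\Theta} = (1-\varepsilon)\bm\Theta_\star + \varepsilon\,\mU'\mV^\top$ with $\mU'\in\{0,\beta\}^{d_1\times r}$ and $\mV$ the fixed right-singular factor of $\bm\Theta_\star$. The $(1-\varepsilon)$ shrinkage frees up nuclear-norm budget exactly compensating what the perturbation adds, so the codewords stay in $\Theta(r,S_*)$ for any $\bm\Theta_\star$ up to the boundary. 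In that parameterization the ``direction'' $\bm\Theta_\star - \mU'\mV^\top$ has Frobenius scale $\asymp S_*$ while the inter-codeword separation scales as $\varepsilon\sqrt\gamma$; this mismatch is precisely the packing inefficiency that yields the $\gamma/S_*^2$ in $C_2$. To repair the proposal, replace the additive perturbation by a construction that simultaneously rescales $\bm\Theta_\star$'s contribution to the nuclear norm by the amount the perturbation adds --- the simplest such fix being exactly the paper's convex combination.
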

\begin{proof}[Proof Sketch]
    We mainly utilize the many hypotheses technique of \citet[Chapter 2]{tsybakov09} for high-probability minimax lower bound; see also \citet{yang1999minimax}.
    One key technical novelty is the construction of a \textit{local} packing $\Theta_{r,\varepsilon,\beta} \subset \Theta(r, S_*)$ around the given instance $\bm\Theta_\star$.
    Then, we carefully expand the $\KL$ between two GLMs from the packing by utilizing its Bregman divergence form~\citep{lee2024logistic} and self-concordance of $\mu$ (Assumption~\ref{assumption:mu}(a)), which leads to the instance-specific quantity $\lambda_{\max}(\mH(\pi; \bm\Theta_\star))^{-1}$.
    Also, note that we don't explicitly require any restricted isometry assumption~\citep[Eqn. (2.4)]{koltchinskii2011nuclear}.
    Refer to \cref{app:lower-bound} for the full proof.
    
    This notably deviates from \citet[Theorem 5]{rohde2011estimation}, where they considered a packing around $\bm\Theta_\star = \vzero$ for linear trace regression.
    When $\mu(z) = z$, this still results in a tight lower bound as the problem difficulty is uniform across all $\bm\Theta_\star \in \Theta(r, S_*)$.
\end{proof}

\paragraph{Instance-Specific Nature.}
Our lower bound scales inversely with the ``optimistic'' instance-specific curvature, $\lambda_{\max}(\mH(\pi; \bm\Theta_\star))^{-1}$, thereby capturing how problem difficulty varies across different instances $\bm\Theta_\star$.
Here, ``optimistic'' in the sense that it reflects the most favorable local curvature at $\bm\Theta_\star$, in contrast to worst-case bounds based on $\lambda_{\min}$.
To the best of our knowledge, this is the \emph{first} result in generalized linear trace regression that exhibits such an instance-wise dependency.
This mirrors the local minimax lower bounds established for logistic bandits~\citep[Theorem 2]{abeille2021logistic} and online LQR~\citep[Theorem 1]{simchowitz2020optimal}, which also highlight instance-specific complexities.
In contrast, classical worst-case minimax lower bounds for generalized linear trace regression and matrix completion~\citep{koltchinskii2011nuclear,rohde2011estimation,davenport2014completion,lafond2015lexponential,taki2021minimax} fail to capture such local variations.

\paragraph{Near Instance-wise Optimality.}
Our lower bound (\cref{thm:lower-bound}) holds for \emph{each} sampling policy $\bm\pi_N$, including adaptive ones. In other words, optimality here is defined relative to each instance $(\bm\Theta_\star, \bm\pi_N)$, where the policy itself forms part of the instance. To clarify in what sense \texttt{GL-LowPopArt} is optimal, we focus on the class of \emph{single-switch policies}, $\pi_t = \pi_1 \indicator[t \leq N_1] + \pi_2 \indicator[t > N_1]$ with $N_1 = o(N)$.
For brevity, write $\lambda_i := \lambda_i(\mH(\pi_2; \bm\Theta_\star))$ for $i \in \{ \min, \max \}$.

In the large-$N$ regime, the ratio between the upper and lower bounds (\Cref{thm:estimation-final,thm:lower-bound}) on squared Frobenius error maximally scales with the Hessian's condition number $\frac{\lambda_{\max}}{\lambda_{\min}}$.
Thus, within the single-switch policy class, \texttt{GL-LowPopArt} is nearly instance-wise minimax optimal.
This stands in contrast to guarantees for the nuclear penalized MLE (\Cref{thm:estimation-0}), where the gap between upper and lower bounds is significantly larger, involving unfavorable curvature terms and even dimension factors.

\paragraph{Requirement on $N$.}
A keen reader may observe that our local minimax lower bound holds under the condition $N \gtrsim \frac{R_s^2 r (d_1 \vee d_2)}{\lambda_{\max}(\mH(\pi; \bm\Theta_\star))}$.
We emphasize that this is not restrictive and actually provides an intuitive justification for Stage~I as a warm-up phase; in fact, we believe that some condition of this form on $N$ is necessary---although we do not currently have a formal proof.  
The requirement on $N$ arises when bounding the KL divergence between the true model $\bm\Theta_\star$ and an alternative model from the constructed local packing.  
Intuitively, this stems from the necessity for the two models to be sufficiently close for self-concordance properties to take effect; this was also the case for prior local minimax lower bounds~\citep[Theorem~2]{abeille2021logistic}~\citep[Theorem~1]{simchowitz2020optimal}, where the requirement on horizon length $T$ arises similarly.  
Finally, we remark that in the linear setting ($\mu(z) = z \Rightarrow R_s = 0$), our requirement on $N$ vanishes.

\section{APPLICATIONS OF \texttt{GL-LowPopArt}}

\subsection{Generalized Linear Matrix Completion under Uniform Sampling at Random}
\label{sec:completion}
Here, $\gX = \gM$ (see \Cref{prop:unit-ball}), $\pi^U = \Unif(\gX)$, and $\max_{i,j} |(\bm\Theta_\star)_{i,j}| \leq \gamma$ for a $\gamma > 0$.
For clarity, we focus on the \textit{1-bit matrix completion}~\citep{davenport2014completion} with $\mu(z) = (1 + e^{-z})^{-1}$.
Let $\gE_F := \lVert \widehat{\bm\Theta} - \bm\Theta_\star \rVert_F^2$.

We first compare the error bound of \texttt{GL-LowPopArt} when $\pi_0 = \pi = \pi^U$ with \citet[Theorem 1]{davenport2014completion} and \citet[Corollary 2]{klopp2015multinomial}, denoted as Dav14 and Klo15, respectively:
\begin{align}
    \gE_F &\lesssim \frac{1}{\min_{i,j} \dmu((\bm\Theta_\star)_{ij})} \frac{r d_1 d_2 (d_1 \vee d_2)}{N}, \tag{\textbf{ours}} \\
    \gE_F &\lesssim \frac{1}{\min_{|z| \leq \gamma} \dmu(z)} \sqrt{\frac{r (d_1 d_2)^2 (d_1 \vee d_2)}{N}}, \tag{Dav14} \\
    \gE_F &\lesssim \left( \frac{1}{\min_{|z| \leq \gamma} \dmu(z)} \right)^2 \frac{r d_1 d_2 (d_1 \vee d_2)}{N}. \tag{Klo15}
\end{align}
For our error bound, we utilize \Cref{prop:improve} to express $\GL(\pi^U; \bm\Theta_\star)$ in terms of dimensions for a clear comparison, but we remark that the above bound is looser than \Cref{thm:estimation-final}.
Still, our bound captures the instance-specific difficulty via $\frac{1}{\min_{i,j} \dmu((\bm\Theta_\star)_{ij})}$.
On the other hand, the other bounds depend on the worst-case curvature $\frac{1}{\min_{|z| \leq \gamma} \dmu(z)}$, which may be quite loose in situations where $\min_{i,j} \dmu((\bm\Theta_\star)_{ij}) \gg \min_{|z| \leq \gamma} \dmu(z)$, e.g., due to misspecification of $\gamma$.
We also note that, by \Cref{prop:unit-ball}, when optimal GL-design is employed (i.e., adaptive scenario), our bound improves to
\begin{equation}
    \gE_F \lesssim \Harm(\bm\Theta_\star)^{-1} \frac{r d_1 d_2 (d_1 \vee d_2)}{N},
\end{equation}
which is tighter than all the aforementioned bounds.

Algorithm-wise, \citet{davenport2014completion,klopp2015multinomial}, along with other approaches~\citep{srebro2010tracenorm,cai2013completion,cai2016completion,lafond2015lexponential}, requires the knowledge of $\gamma > 0$, to compute the nuclear-norm regularized estimator \textit{with} the constraint of $\bignorm{\bm\Theta_\star}_{\infty} \leq \gamma$ or $\bignorm{\bm\Theta_\star}_{\max} \leq \gamma$.
Interestingly, \texttt{GL-LowPopArt} does \textit{not} require any knowledge about $\bm\Theta_\star$, yet it fully adapts to the given instance.
In \Cref{app:experiments}, we demonstrate the numerical efficacy of \texttt{GL-LowPopArt} in the 1-bit matrix completion setting compared to nuclear penalized MLE.

We now ask whether \texttt{GL-LowPopArt} is instance-wise optimal in the generalized linear matrix completion setting.
In many literature~\citep{koltchinskii2011nuclear,davenport2014completion,klopp2015multinomial}, the optimality is often discussed in the context of entrywise norm-constrained neighborhoods, e.g., $\max_{i,j} | (\bm\Theta_\star)_{i,j} | \leq \gamma$ for some $\gamma > 0$~\citep[Section 4]{koltchinskii2011nuclear}, which does not match with our assumption on $\bm\Theta_\star$ for our lower bound (\Cref{thm:lower-bound}), i.e., we cannot claim (near) instance-wise optimality here.
We leave this to a future direction.

\begin{remark}[Comparison to \citet{rohde2011estimation}]
    For linear matrix completion with \textbf{fixed design} (i.e., $\pi$ is a mixture of Diracs), \citet[Corollary 1, Theorem 5]{rohde2011estimation} establish a minimax optimal error rate of $\lVert \mathrm{vec}(\widehat{\bm\Theta} - \bm\Theta_\star) \rVert_{\mV(\pi)}^2 \asymp \frac{r (d_1 \vee d_2)}{N}$, with the lower bound holding under restricted isometry.
    Our bound is weaker in this case, as they achieve a tight characterization in the Mahalanobis norm.
    Whether similar rates can be derived for generalized linear matrix completion involving $\bignorm{\cdot}_{\mH(\pi; \bm\Theta_\star)}$ is an interesting future direction.
\end{remark}

\subsection{Bilinear Dueling Bandits}
\label{sec:bilinear}

\subsubsection{Problem Description}
In \textbf{bilinear dueling bandits}, $\gA \subseteq \gB^d(1)$ is considered to be a fixed vector-valued arm-set that is compact, i.e., $\gX$ may be infinite as in continuous dueling bandits~\citep{kumagai2017dueling}.
At each timestep $t$, the learner chooses a pair of arms $(\bm\phi_{w,t}, \bm\phi_{l,t}) \in \gA \times \gA$, and receives a feedback sampled as
\begin{equation}
\label{eqn:bilinear-preference}
    o_t = \indicator[\bm\phi_{w,t} \succ \bm\phi_{l,t}] \sim \Ber(\mu\left( \bm\phi_{w,t}^\top \bm\Theta_\star \bm\phi_{l,t} \right)),
\end{equation}
for an unknown, \emph{skew-symmetric} $\bm\Theta_\star$ of rank $2r$, and a known link function $\mu : \sR \rightarrow [0, 1]$ that additionally satisfies the following~\citep{wu2024dueling}:
\begin{assumption}
    In addition to Assumption~\ref{assumption:mu},
    $\mu : \sR \rightarrow [0, 1]$ satisfies  $\mu(z) + \mu(-z) = 1, \ z \in \sR$.
\end{assumption}
Some examples of $\mu$ that satisfies the above include $\mu(z) = \frac{1 + z}{2}$ and $\mu(z) = (1 + e^{-z})^{-1}$ (Bernoulli).

The learner's goal is to minimize the Borda regret~\citep{saha2021borda,wu2024dueling}:
\begin{equation}
    \Reg^B(T) := \sum_{t=1}^T \left\{ B(\bm\phi_\star) - \frac{B(\bm\phi_{w,t}) + B(\bm\phi_{l,t})}{2} \right\},
\end{equation}
where the \textit{(shifted) Borda score} of $\bm\phi \in \gA$ is defined as
\begin{equation}
    B(\bm\phi) := \E_{\bm\phi' \sim \Unif(\gA)}[\mu(\bm\phi^\top \bm\Theta \bm\phi')],
\end{equation}
and $\bm\phi_\star = \argmax_{\bm\phi \in \gA} B(\bm\phi)$ is the \textit{Borda winner}.
Note that when $\gX$ is finite, it reduces to the usual definition of Borda regret/winner in the finite-armed dueling bandits~\citep{jamieson2015borda,saha2021borda}.
Unlike the Condorcet winner, the Borda winner always exists for any preference model~\citep{bengs2021survey}.

\begin{remark}[Significance of the Setting]
    We emphasize that this is a \textbf{novel} dueling bandits setting, motivated by recent progress in contextualizing general preference learning with item-wise features in RLHF~\citep{zhang2024bilinear}.
    We defer further discussions on the proposed setting to Appendix~\ref{app:bilinear}.
\end{remark}

\begin{algorithm2e}[!t]
\caption{\texttt{BETC-GLM-LR}}
\label{alg:betc-glm-lr}
\For{$t = 1, 2, \cdots, N_1 + N_2$}{
    Run \texttt{GL-LowPopArt}($N_1, N_2$) and obtain $\widehat{\bm\Theta}$\;
}
Obtain the estimated Borda winner:
    $$
    \hat{\bm\phi} \gets \argmax_{\bm\phi \in \gA} \left\{ \widehat{B}(\bm\phi) \triangleq \E_{\bm\phi' \sim \Unif(\gA)}\left[ \mu\left( \bm\phi^\top \widehat{\bm\Theta} \bm\phi' \right) \right] \right\}
    $$

\For{$t = N_1 + N_2 +1, \cdots, T$}{
    Pull $(\widehat{\bm\phi}, \widehat{\bm\phi})$\;
}
\end{algorithm2e}

\subsubsection{\texttt{BETC-GLM-LR} and Regret Upper Bound}
\label{sec:betc-glm-lr}
We consider an explore-then-commit approach, where the exploration is done via our \texttt{GL-LowPopArt} (Algorithm~\ref{alg:betc-glm-lr}).
It attains the following Borda regret bound:
\begin{theorem}[(Informal)]\label{thm:borda-bound}
    With appropriate choices of $N_1$ and $N_2$ in \texttt{GL-LowPopArt} and large enough $T$, \texttt{BETC-GLM-LR} attains the following Borda regret bound with probability at least $1 - \delta$:
    \begin{equation}
        \Reg^B(T) \lesssim \left( \GL_{\min}(\gX) \log\frac{d}{\delta} \right)^{1/3} \left( \kappa_\star^B T \right)^{2/3},
    \end{equation}
    where $\gX := \{ \bm\phi_i \bm\phi_j^\top : 1 \leq i < j \leq K \} \subseteq \gB^{d \times d}_{\mathrm{op}}(1)$ and $\kappa_\star^B := \E_{\bm\phi' \sim \mathrm{Unif}(\gA)}[\dmu(\bm\phi_\star^\top \bm\Theta \bm\phi')]$.
\end{theorem}
\begin{proof}[Proof Sketch]
    We deviate significantly from \citet{wu2024dueling} by using the self-concordance of $\mu$ as in \citet[Theorem 1]{abeille2021logistic} for the regret bound to scale \textit{with} $\kappa_\star^B$.
    Refer to \cref{app:bilinear-alg} for the full proof.
\end{proof}
We believe $T^{2/3}$ dependency of the Borda regret is unavoidable, as there are already $\Omega(T^{2/3})$ Borda regret lower bounds (omitting other dependencies) for entrywise~\citep[Theorem 16]{saha2021borda} and generalized linear dueling bandits~\citep[Theorem 4.1]{wu2024dueling}.

Two quantities make our regret bound truly instance-specific.
One is $\GL_{\min}(\gX)$, which, as discussed previously, captures the geometry of $\gX$ and the associated nonlinearity via the Hessian.
Another is $\kappa_\star^B$, which captures the average curvature of the preference model when compared against the Borda winner.
Analogous to recent findings in generalized linear bandits \citep{abeille2021logistic,lee2024glm}, our Borda regret can \textit{benefit} from the ``flatness'' of the link function.
Intuitively, we enjoy lower regret in highly flat landscapes, provided that the worst-case flatness across all item pairs is comparable to the flatness around the Borda winner. However, if there exists a hard pair elsewhere in the arm set, the estimation difficulty dominates and we lose this variance-reduction benefit.
We defer a more in-depth discussions, including comparisons with \citet{wu2024dueling}, to Appendix~\ref{app:bilinear-regret}.

\section{CONCLUSION}
\label{sec:conclusion}
This work addresses the critical gap in prior work by explicitly considering instance-specific curvature in generalized low-rank trace regression.
We introduce \texttt{GL-LowPopArt}, a novel estimator that achieves state-of-the-art performance, adapting to both the nonlinearity of the model and the underlying arm-set geometry.
We establish the first instance-wise minimax lower bound, demonstrating the near-optimality of \texttt{GL-LowPopArt}.
We showcase its benefits through applications to generalized linear matrix completion and bilinear dueling bandits, a novel setting of independent interest for general preference learning~\citep{zhang2024bilinear}.
In Appendix~\ref{app:future}, we collect further future directions.

\newpage

\subsubsection*{Acknowledgements}
We thank all the anonymous reviewers for their helpful comments that helped us improve the manuscript.
K. Jang was supported by the Institute of Information \& Communications Technology Planning \& Evaluation (IITP) grant funded by the Korea government (MSIT) [RS-2021-II211341, Artificial Intelligence Graduate School Program (Chung-Ang University)].
K.-S. Jun was supported in part by the National Science Foundation under grant CCF-2327013 and Meta Platforms, Inc.

\bibliographystyle{plainnat}
\bibliography{references}

\clearpage
\section*{Checklist}

\begin{enumerate}

  \item For all models and algorithms presented, check if you include:
  \begin{enumerate}
    \item A clear description of the mathematical setting, assumptions, algorithm, and/or model. [Yes] Section 2,3,5
    \item An analysis of the properties and complexity (time, space, sample size) of any algorithm. [Yes] Section 3,5
    \item (Optional) Anonymized source code, with specification of all dependencies, including external libraries. [Yes] Appendix M.
  \end{enumerate}

  \item For any theoretical claim, check if you include:
  \begin{enumerate}
    \item Statements of the full set of assumptions of all theoretical results. [Yes] Section 2,3,4,5
    \item Complete proofs of all theoretical results. [Yes] The whole Appendix
    \item Clear explanations of any assumptions. [Yes] Section 2
  \end{enumerate}

  \item For all figures and tables that present empirical results, check if you include:
  \begin{enumerate}
    \item The code, data, and instructions needed to reproduce the main experimental results (either in the supplemental material or as a URL). [Yes] Appendix M.
    \item All the training details (e.g., data splits, hyperparameters, how they were chosen). [Yes] Appendix M.
    \item A clear definition of the specific measure or statistics and error bars (e.g., with respect to the random seed after running experiments multiple times). [Yes] Appendix M.
    \item A description of the computing infrastructure used. (e.g., type of GPUs, internal cluster, or cloud provider). [No] Simple CPU experiments. 
  \end{enumerate}

  \item If you are using existing assets (e.g., code, data, models) or curating/releasing new assets, check if you include:
  \begin{enumerate}
    \item Citations of the creator If your work uses existing assets. [Yes] Appendix M.
    \item The license information of the assets, if applicable. [Not Applicable]
    \item New assets either in the supplemental material or as a URL, if applicable. [Yes] Appendix J
    \item Information about consent from data providers/curators. [Not Applicable]
    \item Discussion of sensible content if applicable, e.g., personally identifiable information or offensive content. [Not Applicable]
  \end{enumerate}

  \item If you used crowdsourcing or conducted research with human subjects, check if you include:
  \begin{enumerate}
    \item The full text of instructions given to participants and screenshots. [Not Applicable]
    \item Descriptions of potential participant risks, with links to Institutional Review Board (IRB) approvals if applicable. [Not Applicable]
    \item The estimated hourly wage paid to participants and the total amount spent on participant compensation. [Not Applicable]
  \end{enumerate}

\end{enumerate}

\clearpage
\appendix
\thispagestyle{empty}

\crefname{appendix}{Appendix}{Appendices}
\Crefname{appendix}{Appendix}{Appendices}
\crefalias{section}{appendix}
\crefalias{subsection}{appendix}
\crefalias{subsubsection}{appendix}

\onecolumn

\tableofcontents
\newpage

\section{RELATED WORK}
\label{app:related-work}

\paragraph{Generalized Linear Matrix Completion.}

This has been extensively studied in the early 2010s under various noise assumptions: Gaussian~\citep{rohde2011estimation,koltchinskii2011nuclear}, Bernoulli~\citep{alquier2019oracle}, multinomial~\citep{lafond2014finite,klopp2015multinomial}, general exponential family~\citep{lafond2015lexponential}, and even with the only assumption of bounded variance~\citep{klopp2014general}.
We refer interested readers to \citet{davenport2016overview} for an overview of works on matrix completion.
Note that our model implicitly implies that for each $(i, j) \in [d_1] \times [d_2]$ may be observed multiple times, which is often the case in recommender systems and bandits where the same item can be recommended multiple times for exploration, or it may be that ``users are more active than others
and popular items are rated more frequently.''~\citep{klopp2015multinomial}.
On a slightly different note, many works have explored the same setting under the assumption that each entry of $\bm\Theta_\star$ can be sampled at most once~\citep{candes2010noise,cai2013completion,davenport2014completion,gunasekar2014exponential,cao2016poisson,mokhtar2019collective,mcrae2020poisson}.
When $\bm\Theta_\star$ is additionally is skew-symmetric ($\bm\Theta_\star^\top = -\bm\Theta_\star$), this is also related to learning the low-rank preference model~\citep{gleich2011nuclear,lu2015individualized,rajkumar2016comparison,wu2024dueling,zhang2024bilinear}.

\paragraph{Low-Rank Matrix Bandits.} 

Researchers in low-rank bandits have long focused on fundamental and specific models. For example, \citet{katariya2017rank1, katariya2017rank2, trinh2020bernoulli, jedra2024lowrank, sentenac2021matching} studied a bilinear bandit setting (which means $\gX = \{ \vx \vx^\top: \vx \in \gA \subset \sR^{d_1} \}$) with canonical basis (\(\gA=\{ \ve_i: i \in [d_1] \} \) and \(\gZ=\{ \ve_j: j \in [d_2] \}\)). \citet{katariya2017rank1, katariya2017rank2, trinh2020bernoulli, sentenac2021matching} added an assumption that $\text{rank}(\bm\Theta_\star)=1$ over a bilinear bandit setting.
\citet{stojanovic2023lowrank} presents an entry-wise matrix estimation for low-rank reinforcement learning, including low-rank bandits. Another popular assumption on arm sets in low-rank bandits is a unit ball (or a unit sphere) assumption~\cite{kotlowski2019banditpca,lattimore2021phase,huang2021optimal}.
For bilinear bandits, \citet{kotlowski2019banditpca} assumed that $\gX = \{ \vx \vx^\top : \vx \in \gS^{d-1}(1)\}$ and $\bm\Theta_\star$ should be also symmetric. \cite{lattimore2021phase} even added an assumption that $\bm\Theta_\star$ is a symmetric rank-1 matrix. For low-rank bandits, \citet{huang2021optimal} assumed $\gX=\gB_F^{d \times d}(1)$. These tailored algorithms often outperform general approaches significantly, yet extending these algorithms to other settings has generally proven challenging due to the highly specialized nature of their settings.  

The first study on low-rank bandits with general arm sets is \citet{jun2019bilinear}. This work introduced the first general bilinear low-rank linear bandit algorithm that could be applied flexibly to any \(d\)-dimensional arm set \(\gX\) and \(\gZ\). Subsequently, \citet{lu2021generalized} extended this approach beyond bilinear settings, proposing a generalized low-rank linear bandit algorithm applicable to all matrix arm sets. Later, \citet{kang2022generalized} introduced a novel method leveraging Stein’s method, and \citet{li2022unified} developed a general framework for high-dimensional linear bandits, including low-rank bandits. However, none of these studies explicitly addressed experimental design; rather, they handled the issue of experimental designs by assuming that their arm sets are sufficiently well-distributed in all directions. As a result, they failed to fully capture how the regret bound varies with the geometry of the arm set. For example, \cite{jun2019bilinear} and \cite{lu2021generalized} conjectured that the lower bound for the bilinear low-rank bandit problem should be \(\Omega(\sqrt{rd^3 T})\), based on results from trace regression. However, \citet{jang2021bilinear} later demonstrated that by considering the structure of the arm set in the bilinear setting, this bound could be further improved, highlighting the importance of optimal design tailored to the arm set.

Recent work by \citet{jang2024lowrank} systematically addresses arm set geometry and experimental design in the low-rank linear bandits. This work applied thresholding at the subspace level called \texttt{LowPopArt} and proposed a novel experimental design for this new regression method. 
They then analyzed the experimental design assumptions underlying previous studies and successfully proved that their \texttt{LowPopArt} with their experimental design outperforms the previous works, even order-wise improvements in some cases.
Our paper further extends the \texttt{LowPopArt} to the generalized linear scenario and provides performance guarantees in both upper and lower bounds that are nearly optimal even in terms of instance-specific, curvature-dependent quantities.

\paragraph{Generalized Linear Bandits (GLBs).}
\textbf{GLB} is a natural nonlinear extension of linear bandits, first proposed by \citet{filippi2010glm}, and later studied by much works~\citep{lee2024glm,sawarni2024glm,jun2017conversion,li2017glm}.
\textbf{GLB}s encompass a wide range of bandits, including linear, logistic, Poisson, logit, and more.
Out of these, especially \textbf{logistic bandits (LogB)}~\citep{faury2020logistic,faury2022logistic,mason2022logistic,abeille2021logistic,lee2024logistic} has garnered much attention, as it can naturally model binary feedback (`click' or `no click'; \citet{li2012glm}).
Also, owing to its similarity to the Bradley-Terry model-based RLHF, the confidence sets of logistic bandits have been used for quantifying the uncertainty of the linear reward model~\citep{das2024rlhf,xiong2024iterative,zhong2024rlhf}.
In \textbf{GLB}s, the key quantity describing the problem difficulty is\footnote{In the mentioned literature, the quantity is denoted as $\kappa_\star$. To keep our notation consistent with the dueling bandits' literature, we chose to denote this as $\kappa_\star^{-1}$.} $\kappa_\star := \dmu(\langle \vx_\star, \bm\theta_\star \rangle)$, where $\bm\theta_\star$ is the unknown vector and $\vx_\star$ is the optimal arm vector.
\cite{abeille2021logistic} showed a regret lower bound of $\Omega(d \sqrt{T \kappa_\star})$ for \textbf{LogB}s, which was matched by various UCB-type algorithms~\citep{abeille2021logistic,faury2022logistic,lee2024logistic}.
Despite the lack of a generic lower bound for general \textbf{GLB}s, recent breakthroughs~\citep{sawarni2024glm,lee2024glm,liu2024free} showed that for self-concordant \textbf{GLB}s, regret upper bound of $\widetilde{\gO}(d \sqrt{T \kappa_\star})$ can be attained.

\begin{remark}[Different Definitions of Self-Concordance]
{In the optimization literature, the original definition of the self-concordance takes the form of $|\dddot{\mu}(z)|\leq 2\ddot{\mu}(z)^{3/2}$ $\forall z\in\mathbb{R}$, originally motivated for convergence analysis of Newton's method by \citet{nesterov1988polynomial}. \citet{bach2010self} was the first to adapt the concept to extend the M-estimator results of squared loss to logistic loss, later further extended by \citet{ostrovskii2021self}. Later, the bandit community further adapted it for logistic and generalized linear bandits~\citep{faury2020logistic,abeille2021logistic,russac2021glm}, which is the form we consider here (Assumption~\ref{assumption:mu}(a))}
\end{remark}

\paragraph{Burer-Monteiro Factorization}
The Burer–Monteiro factorization (BMF, \citet{burer-monteiro1,buere-monteiro2}) approach has been extensively studied for noiseless low-rank matrix recovery from deterministic linear measurements~\citep{candes2009completion,candes2011oracle}, primarily from an optimization perspective~\citep{bi2022bmf,ge2017bmf,park2017bmf,zhang2024bmf,boumal2016bmf,yalcin2022bmf,bhojanapalli2016bmf,stoger2021lowrank,kim2023lowrank}.
In contrast, our work focuses on noisy matrix completion under a generalized linear model (GLM) framework, aiming to achieve accurate estimation with high probability as the sample size increases.
This fundamental difference in problem settings implies that the optimization complexity measures used to analyze BMF methods, such as the optimization complexity metric (OCM) introduced by \citet{yalcin2022bmf} and \citet{zhang2024bmf}, are not directly comparable to our statistical analysis.
Specifically, their OCM quantifies the non-convexity of the BMF landscape, which is related to the success of local search methods (e.g., gradient descent), while our ``statistical complexity metric'', arguably $\lambda_{\max}(\mH(\pi; \bm\Theta_\star))$ that pops up in our lower bound (\cref{thm:lower-bound}), is information-theoretic and dictates the minimum sample size required for any estimator to obtain a desired accuracy with high probability.

While BMF methods offer computational efficiency and have been shown to perform well empirically, especially in large-scale problems, they all rely on some non-convex optimization, whose landscape is not always guaranteed to be benign, especially in the presence of noise~\citep{ma2023bmf}.
Our \texttt{GL-LowPopArt} only involves convex optimization subroutines and thus is computationally tractable, but inefficient: for instance, \texttt{GL-LowPopArt} requires computing the SVD and inverting $d^2 \times d^2$ matrices.
Therefore, while BMF and our work both address low-rank matrix recovery, their respective advantages depend on the specific problem context.

\newpage
\section{TABLE OF NOTATION}

\begin{table}[h]
    \centering
    \caption{Summary of notation used in this paper.}
    \label{tabs:notation}
    \begin{tabular}{|c|p{12cm}|}
        \hline
        \textbf{Notation} & \textbf{Description} \\
        \hline
        $\bignorm{\cdot}_\nuc$ & Nuclear norm\\
        $\bignormop{\cdot}$ & Operator (spectral) norm\\
        $\langle \mA, \mB \rangle$ for $\mA, \mB \in \sR^{m \times n}$ & $\tr(\mA^\top \mB)$\\
        $\lambda_i(\mA)$ & The $i$-th largest eigenvalue of a symmetric matrix $\mA$\\
        $\lambda_{\max} $& The largest eigenvalue, same as $ \lambda_1$\\
        $\lambda_{\min}$ & The smallest eigenvalue, same as $\lambda_m$\\
        $\gB^{d_1 \times d_2}_i(S)$ for $i \in \{ {\mathrm{op}}, \nuc, F \}$ & $ \{ \mX \in \sR^{d_1 \times d_2} : \bignorm{\mX}_i \leq S \}$\\
        $\vec : \sR^{d_1 \times d_2} \rightarrow \sR^{d_1 d_2}$ & Column-wise stacking operation of a matrix into a vector\\
        $\vec^{-1}: \sR^{d_1 d_2} \rightarrow \sR^{d_1 \times d_2}$ & Reshape operation of a vector to a matrix\\ 
        $[n]$ for $n \in \mathbb{N}$ & $\{1, 2, \dots, n\}$\\ 
\( \gP(X) \) & The set of all probability distributions on \( X \)\\
$\Omega$ & Parameter space \\
    $\bm\Theta_\star \in \sR^{d_1 \times d_2}$& An unknown reward matrix of rank at most $r \ll d_1 \wedge d_2$\\
    $\gX \subseteq \sR^{d_1 \times d_2}$& Arm-set (e.g., sensing matrices).\\
     $p(y | \mX; \bm\Theta_\star)$ & Probability density function of the generalized linear model of the reward $y$ when $X$ is chosen by the learner, $\propto \exp\left( \frac{y \langle \mX, \bm\Theta_\star \rangle - m(\langle \mX, \bm\Theta_\star \rangle)}{g(\tau)} \right)$\\
     $m: \mathbb{R}\to\mathbb{R}$ & log-partition function of GLM\\
     $\tau$ & Dispersion parameter\\
     $\mu$ & $\dot{m}$, {Inverse link function}.     \\
     $\pi \in \mathcal{P}(\mathcal{A})$ & Sampling policy (design) \\
    $\mV(\pi)$ & Design matrix, $\E_{\mX \sim \pi}[\vec(\mX) \vec(\mX)^\top]$\\
         $\mH(\pi; \bm\Theta)$ & Hessian matrix $\E_{\mX \sim \pi} \left[ \dmu(\langle \mX, \bm\Theta \rangle) \vec(\mX) \vec(\mX)^\top \right]$\\
$L_\mu, R_s, \kappa_\star$ & Parameters on $\mu$, check Assumption \ref{assumption:mu}\\
    $\mathcal{H}$ & Hermitian Dilation (Check Eq. \eqref{eqn: hermitiaon dilation})\\
    $\psi$ & Influence function (Check Eq. \eqref{eqn: influence function}\\
    $\tilde{\psi}_\nu(\mA)$& $\frac{1}{\nu} \psi(\nu \gH(\mA))_{\mathrm{ht}}$, where for $\mM \in \sR^{(d_1 + d_2) \times (d_1 + d_2)}$, $\mM_{\mathrm{ht}} := \mM_{1:d_1,d_1+1:d_1+d_2}$\\
     $\GL(\pi; \bm\Theta_\star)$ & Our new experimental design objective (See \Cref{def:GL-Design})\\
        \hline
    \end{tabular}
\end{table}

\newpage
\section{\texorpdfstring{PROOF OF THEOREM~\ref{thm:estimation-final} -- ERROR BOUND OF STAGE~II}{PROOF OF THEOREM 3.1 -- ERROR BOUND OF STAGE~II}}
\label{app:estimation-final}
Recall the Hessian at the initial estimator $\bm\Theta_0$:
\begin{equation}
    \mH(\pi; \bm\Theta_0) := \E_{\mX \sim \pi} \left[ \dmu(\langle \mX, \bm\Theta_0 \rangle) \vec(\mX) \vec(\mX)^\top \right],
\end{equation}
and the one-sample estimators (line 9 of Algorithm~\ref{alg:gl-lowpopart}): for each $t \in [N_1]$, 
\begin{equation}
    \widetilde{\bm\Theta}_t = \vec_{d \times d}^{-1}\left( \widetilde{\bm\theta}_t \right), \quad
    \widetilde{\bm\theta}_t := \mH(\pi; \bm\Theta_0)^{-1} \left( y_t - \mu(\langle \mX_t, \bm\Theta_0\rangle) \right) \vec(\mX_t),
\end{equation}

We will also denote ${\color{violet}E} := \bignormnuc{\bm\Theta_\star - \bm\Theta_0}^2$, and will track any requirements on ${\color{violet}E}$ in {\color{violet}violet} throughout the proof.

First observe that
\begin{align}
    \E[\widetilde{\bm\theta}_t | \mX_t = \mX] &= \mH(\pi; \bm\Theta_0)^{-1} \left[ \mu(\langle \mX, \bm\Theta_\star \rangle) - \mu(\langle \mX, \bm\Theta_0 \rangle) \right] \vec(\mX) \\
    &\overset{(*)}{=}  \mH(\pi; \bm\Theta_0)^{-1} \left[ \dmu(\langle \mX, \bm\Theta_0 \rangle) \langle \bm\Theta_\star - \bm\Theta_0, \mX \rangle + \langle \bm\Theta_\star - \bm\Theta_0, \mX \rangle^2 G(\bm\Theta_0, \bm\Theta_\star; \mX) \right] \vec(\mX) \tag{first-order Taylor expansion with integral remainder} \\
    &= \mH(\pi; \bm\Theta_0)^{-1} \left[ \dmu(\langle \mX, \bm\Theta_0 \rangle)  \vec(\mX) \vec(\mX)^\top \vec(\bm\Theta_\star - \bm\Theta_0) + \langle \bm\Theta_\star - \bm\Theta_0, \mX \rangle^2 G(\bm\Theta_0, \bm\Theta_\star; \mX)  \vec(\mX) \right],
\end{align}
where at $(*)$, we define
\begin{equation}
    G(\bm\Theta_0, \bm\Theta_\star; \mX) := \int_0^1 (1 - z) \ddot{\mu}(\langle z \bm\Theta_\star + (1 - z) \bm\Theta_0, \mX \rangle) dz.
\end{equation}

By taking the expectation over $\mX \sim \pi$, we have that
\begin{equation}
    \E\left[ \widetilde{\bm\theta}_t \right] = \vec(\bm\Theta_\star - \bm\Theta_0) + \E_{\mX \sim \pi}\left[ \langle \bm\Theta_\star - \bm\Theta_0, \mX \rangle^2 G(\bm\Theta_0, \bm\Theta_\star; \mX)  \mH(\pi; \bm\Theta_0)^{-1}  \vec(\mX) \right].
\end{equation}

Note that $\widetilde{\bm\theta}_t$'s are \textit{biased} estimators of $\vec(\bm\Theta_\star - \bm\Theta_0)$.
This is the key difference from \citet{jang2024lowrank}.

With $\bm\Theta_1$ as the matrix Catoni estimator for $\bm\Theta_\star - \bm\Theta_0$ (line 14 of Algorithm~\ref{alg:gl-lowpopart}), we utilize the following error decomposition:
\begin{align}
    \bignormop{\bm\Theta_1 - \bm\Theta_\star} &\leq \underbrace{\bignormop{(\bm\Theta_1 - \bm\Theta_0) - \E[\widetilde{\bm\Theta}_t]}}_{(a)} + \bignormop{\E[\widetilde{\bm\Theta}_t] - (\bm\Theta_\star - \bm\Theta_0)} \\
    &\leq (a) + \bignorm{\E[\widetilde{\bm\Theta}_t] - (\bm\Theta_\star - \bm\Theta_0)}_F \\
    &= (a) + \underbrace{\bignorm{\E_{\mX \sim \pi}\left[ \langle \bm\Theta_\star - \bm\Theta_0, \mX \rangle^2 G(\bm\Theta_0, \bm\Theta_\star; \mX) \mH(\pi; \bm\Theta_0)^{-1}  \vec(\mX) \right]}_2}_{(b)}.
\end{align}

We first control the unbiased error term $(a)$ using the following error rate for the matrix Catoni estimator due to \citet{minsker2018matrix}:
\begin{lemma}[Corollary 3.1 of \citet{minsker2018matrix}]
\label{lem:minsker}
    Let $\{ \mA_i \}_{i=1}^n \subset \sR^{d_1 \times d_2}$ be independent with $\E[\mA_i] = \mA$, and define their matrix variance statistics as
    \begin{equation}
        \sigma_n^2 := \max\left\{ \bignormop{\sum_{i=1}^n \E[\mA_i \mA_i^\top]}, \bignormop{\sum_{i=1}^n \E[\mA_i^\top \mA_i]} \right\}.
    \end{equation}
    Then we have that for any $\delta \in (0, 1)$,
    \begin{equation}
        \sP\left( \bignormop{\widehat{\mT} - \mA} \leq \sqrt{\frac{2\sigma_n^2}{n^2} \log\frac{2(d_1 + d_2)}{\delta}} \right) \geq 1 - \delta,
    \end{equation}
    where
    \begin{equation}
        \widehat{\mT} := \frac{1}{n} \left( \sum_{i=1}^n \tilde{\psi}_\nu(\mA_i) \right)_{\mathrm{ht}}, \ \nu := \sqrt{\frac{2}{\sigma_n^2} \log\frac{2(d_1+d_2)}{\delta}}.
    \end{equation}
\end{lemma}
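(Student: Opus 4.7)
The plan is to follow \citet{minsker2018matrix}'s matrix Catoni framework, which lifts the classical scalar Catoni estimator to non-symmetric matrices via the Hermitian dilation $\gH$. First, I would reduce the rectangular problem to a symmetric one: since $\bignormop{\mB} = \bignormop{\gH(\mB)}$ and $\gH$ is linear (so $\E \gH(\mA_i) = \gH(\mA)$), it suffices to control the operator norm on the $(d_1+d_2)$-dimensional symmetric space. A direct computation gives $\gH(\mA_i)^2 = \diag(\mA_i \mA_i^\top, \mA_i^\top \mA_i)$, which is exactly what makes the matrix variance statistic $\sigma_n^2$ appear in the final bound.

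Next, I would invoke the Catoni sandwich inequality for the scalar influence function, $-\log(1 - x + x^2/2) \leq \psi(x) \leq \log(1 + x + x^2/2)$, and lift it to symmetric matrices by operator monotonicity of $\log$ (L\"{o}wner--Heinz). Applied to $\mB = \nu \gH(\mA_i - \mA)$ and combined with $\log(\mI + \mX) \preceq \mX$, this gives a one-sided operator bound of the form
\begin{equation*}
    \E \psi\bigl(\nu \gH(\mA_i - \mA)\bigr) \preceq \tfrac{\nu^2}{2}\, \E \gH(\mA_i - \mA)^2,
\end{equation*}
and symmetrically a matching lower bound. This is the matrix analogue of Catoni's bias/variance decomposition for the influence function and is what lets one avoid moment assumptions beyond the variance proxy $\sigma_n^2$.

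Third, to convert the in-expectation statement to a deviation bound, I would run a matrix Laplace-transform/Chernoff argument in the Ahlswede--Winter/Tropp style: apply the matrix Markov inequality to $\sum_i \psi(\nu \gH(\mA_i - \mA))$, and use subadditivity of matrix log-MGFs (via Lieb concavity) together with the operator sandwich inequality above to bound the trace-exponential. The trace dimension $d_1 + d_2$ enters through the dimension factor in the matrix Markov step, producing a deviation bound of the schematic form
\begin{equation*}
    \bignormop{\widehat{\mT} - \mA} \lesssim \frac{\nu \sigma_n^2}{n} + \frac{1}{n \nu} \log\frac{2(d_1 + d_2)}{\delta}.
\end{equation*}

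Finally, I would optimize over $\nu$: balancing the two terms yields exactly the prescribed choice $\nu = \sqrt{(2/\sigma_n^2) \log(2(d_1+d_2)/\delta)}$ and the claimed rate. The main obstacle is the operator-monotone calculus in step two, namely checking that the scalar Catoni sandwich extends to matrices through $\log$ (rather than just $\exp$) and that Lieb-type subadditivity can be combined with the influence function $\psi$ in place of the usual matrix exponential; this is the key technical contribution of \citet{minsker2018matrix} over earlier matrix Bernstein-style arguments.
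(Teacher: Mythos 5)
This lemma is imported verbatim from \citet{minsker2018matrix} (their Corollary 3.1), and the paper offers no proof of its own beyond the citation; your sketch is a faithful reconstruction of Minsker's original argument (Hermitian dilation, the Catoni sandwich transferred to matrices, Lieb/Tropp subadditivity of the matrix log-MGF plus matrix Markov, then balancing $\nu$), and the final deviation bound and optimized $\nu$ come out correctly. One minor imprecision: the estimator applies $\psi$ to the \emph{uncentered} $\nu\gH(\mA_i)$, so the intermediate operator bound should read $\E\,\psi(\nu\gH(\mA_i)) \preceq \nu\gH(\mA) + \tfrac{\nu^2}{2}\,\E\,\gH(\mA_i)^2$ (signal plus raw second moment, consistent with the definition of $\sigma_n^2$) rather than the centered version you display, but this does not affect the structure or conclusion of the argument.
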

\begin{remark}
    The significance of the Catoni-type robust estimator~\citep{catoni2012,minsker2018matrix} is that the guarantee does not assume the boundedness of the matrices, yet it still gives a Bernstein-type concentration.
    This has been successfully utilized in obtaining tight, instance-specific guarantees for various reinforcement learning problems, such as sparse linear bandits~\citep{jang2022popart}, low-rank bandits~\citep{jang2024lowrank}, linear MDP~\citep{wagenmaker2022linearrl}, and more.
\end{remark}

To apply the above lemma, we bound the matrix variance statistics of the one-sample estimators $\widetilde{\bm\Theta}_t$'s, whose proof is deferred to \Cref{app:minsker-variance}: recalling that $\kappa_\star := \min_{\mX \in \gX} \dmu(\langle \mX, \bm\Theta_\star \rangle) > 0$,\footnote{Note that this holds because $\gX$ is bounded.}
\begin{lemma}
\label{lem:minsker-variance}
Suppose that {\color{violet}$E^2 \leq \frac{\kappa_\star g(\tau)}{4 L_\mu^2}$}. Then,
    \begin{equation}
        \max\left\{ \bignormop{\sum_{t=1}^{N_2} \E[\widetilde{\bm\Theta}_t \widetilde{\bm\Theta}_t^\top]}, \bignormop{\sum_{t=1}^{N_2} \E[\widetilde{\bm\Theta}_t^\top \widetilde{\bm\Theta}_t]} \right\}
        \leq 5 g(\tau) \GL(\pi; \bm\Theta_0) N_2,
    \end{equation}
    where $\GL(\pi; \bm\Theta_0) := \max\{ H^{(\row)}(\pi; \bm\Theta_0), H^{(\col)}(\pi; \bm\Theta_0) \}$ with
    \begin{equation}
        H^{(\row)}(\pi; \bm\Theta_0) := \lambda_{\max}\left( \sum_{{\color{blue}m}=1}^{d_1} \mD_{\color{blue}m}^{(\row)}(\pi; \bm\Theta_0) \right), \quad
        \mD_{\color{blue}m}^{(\row)}(\pi; \bm\Theta_0) :=
         [(\mH(\pi; \bm\Theta_0)^{-1})_{jk}]_{j,k \in {\color{purple}\gI_{\color{blue}m}^{(\row)}}},
    \end{equation}
    and
    \begin{equation}
        H^{(\col)}(\pi; \bm\Theta_0) := \lambda_{\max}\left( \sum_{{\color{blue}m}=1}^{d_2} \mD_{\color{blue}m}^{(\col)}(\pi; \bm\Theta_0) \right), \quad
        \mD_{\color{blue}m}^{(\col)}(\pi; \bm\Theta_0) := (\mH(\pi; \bm\Theta_0)^{-1})_{{\color{teal}\gI_{\color{blue}m}^{(\col)}}, {\color{teal}\gI_{\color{blue}m}^{(\col)}}},
    \end{equation}
    where the index sets defined as ${\color{purple}\gI_{\color{blue}m}^{(\row)}} := \{d_1 (\ell-1) + {\color{blue}m}: \ell \in [d_2]\}$ and ${\color{teal}\gI_{{\color{blue}m}}^{(\col)}} := [d_1 ({\color{blue}m}-1)+1:d_1 {\color{blue}m}]$.
\end{lemma}

Then, we have that by \Cref{lem:minsker,lem:minsker-variance},
\begin{equation}
    \sP\left( (a) = \bignormop{(\bm\Theta_1 - \bm\Theta_0) - \E[\widetilde{\bm\Theta}_t]} \leq \sqrt{\frac{10 g(\tau) \GL(\pi)}{N_2} \log\frac{4(d_1 + d_2)}{\delta}} \right) \geq 1 - \frac{\delta}{2}.
\end{equation}

We now control the bias term $(b)$.
For notational simplicity, we denote $\vx \triangleq \mathrm{\vec}(\mX)$ and $\bm\delta \triangleq \mathrm{vec}(\bm\Theta_\star - \bm\Theta_0)$.
\begin{align}
    \bignorm{\E[\widetilde{\bm\Theta}_t] - (\bm\Theta_\star - \bm\Theta_0)}_F &= \bignorm{\E_\vx\left[ \langle \bm\delta, \vx \rangle^2 G(\bm\Theta_0, \bm\Theta_\star; \mX) \mH(\pi; \bm\Theta_0)^{-1} \vx \right]}_2 \\
    &= \bignorm{\mH(\pi; \bm\Theta_0)^{-1} \E_\vx\left[ G(\bm\Theta_0, \bm\Theta_\star; \mX) \langle \bm\delta, \vx \rangle^2 \vx \right]}_2 \\
    &\leq \bignormop{\mH(\pi; \bm\Theta_0)^{-1}} \bignorm{ \E_\vx\left[ G(\bm\Theta_0, \bm\Theta_\star; \mX) \langle \bm\delta, \vx \rangle^2 \vx \right]}_2 \\
    &\leq \lambda_{\min}(\mH(\pi; \bm\Theta_0))^{-1} \E_\vx\left[ |G(\bm\Theta_0, \bm\Theta_\star; \mX)| \langle \bm\delta, \vx \rangle^2 \bignorm{\vx}_2 \right] \tag{Jensen's inequality} \\
    &\leq \frac{R_s L_\mu}{2 \lambda_{\min}(\mH(\pi; \bm\Theta_0))} \E_\vx\left[ \langle \bm\delta, \vx \rangle^2 \right] \tag{$|G(\bm\Theta_0, \bm\Theta_\star; \mX)| \leq \frac{1}{2} R_s L_\mu$} \\
    &\leq \frac{R_s L_\mu}{2 \lambda_{\min}(\mH(\pi; \bm\Theta_0))} {\color{violet}E}^2, \tag{C.1}
\end{align}
where the last inequality follows from H\"{o}lder's inequality:
\begin{equation}
    \E_\vx\left[ \langle \bm\delta, \vx \rangle^2 \right] = \E_{\mX}\left[ \langle \bm\Theta_\star - \bm\Theta_0, \mX \rangle^2 \right]
    \leq \E_{\mX}\left[ \bignormnuc{\bm\Theta_\star - \bm\Theta_0}^2 \bignormop{\mX}^2 \right]
    \leq {\color{violet}E}^2.
\end{equation}

Thus, with {\color{violet}$R_s E^2 \lesssim \lambda_{\min}(\mH(\pi; \bm\Theta_0)) \sqrt{\frac{g(\tau) \GL(\pi; \bm\Theta_0)}{N_2} \log\frac{d_1 + d_2}{\delta}}$}, the following holds with probability at least $1 - \delta$:
\begin{equation}
    \bignormop{\bm\Theta_1 - \bm\Theta_\star} \leq \sigma_{\mathrm{thres}} \triangleq 4 \sqrt{\frac{g(\tau) \GL(\pi; \bm\Theta_0)}{N_2} \log\frac{4(d_1 + d_2)}{\delta}}.
\end{equation}

As the last step of the proof, we recall Weyl's inequality for singular values:
\begin{lemma}[Problem 7.3.P16 of \citet{hornjohnson}]
    For any $\mA, \Delta \in \sR^{d_1 \times d_2}$, we have
    \begin{equation}
        |\sigma_k(\mA + \Delta) - \sigma_k(\mA)| \leq \sigma_1(\Delta), \quad \forall k \in [d_1 \wedge d_2].
    \end{equation}
\end{lemma}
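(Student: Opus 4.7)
The plan is to reduce the rectangular singular value perturbation bound to the classical eigenvalue Weyl inequality via the Hermitian dilation $\gH$ already introduced in Eqn.~\eqref{eqn: hermitiaon dilation}. Two basic facts make this reduction work: (i) $\gH$ is linear, so $\gH(\mA + \Delta) = \gH(\mA) + \gH(\Delta)$, and $\gH(\mM)$ is real symmetric for every rectangular $\mM$; (ii) for any $\mM \in \sR^{d_1 \times d_2}$, the spectrum of $\gH(\mM)$ consists of $\pm \sigma_i(\mM)$ for $i \in [\min(d_1, d_2)]$, together with $|d_1 - d_2|$ extra zeros. This second fact is a short computation: $\gH(\mM)^2 = \mathrm{diag}(\mM\mM^\top, \mM^\top \mM)$, whose nonzero eigenvalues are $\sigma_i(\mM)^2$ each with multiplicity two, and conjugating $\gH(\mM)$ by $\mathrm{diag}(\mI_{d_1}, -\mI_{d_2})$ flips its sign, showing the spectrum is symmetric about zero. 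Ordering eigenvalues decreasingly, one then obtains $\lambda_k(\gH(\mM)) = \sigma_k(\mM)$ for all $k \in [\min(d_1, d_2)]$, and in particular $\bignormop{\gH(\Delta)} = \sigma_1(\Delta)$.

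With these identifications, I would invoke the classical symmetric Weyl inequality: for symmetric matrices $\mathbf{B}, \mathbf{E}$ of equal size and any index $k$,
$$|\lambda_k(\mathbf{B} + \mathbf{E}) - \lambda_k(\mathbf{B})| \leq \bignormop{\mathbf{E}}.$$
Specializing $\mathbf{B} = \gH(\mA)$ and $\mathbf{E} = \gH(\Delta)$, linearity of $\gH$ and the identifications above give
$$|\sigma_k(\mA + \Delta) - \sigma_k(\mA)| = |\lambda_k(\gH(\mA) + \gH(\Delta)) - \lambda_k(\gH(\mA))| \leq \bignormop{\gH(\Delta)} = \sigma_1(\Delta),$$
which is exactly the claimed bound.

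The only substantive ingredient is the symmetric Weyl inequality itself, and it is not a real obstacle: it follows immediately from the Courant-Fischer characterization $\lambda_k(\mathbf{M}) = \max_{\dim V = k} \min_{x \in V,\, \|x\|=1} x^\top \mathbf{M} x$, since for every unit vector $x$ one has $x^\top (\mathbf{B} + \mathbf{E}) x \leq x^\top \mathbf{B} x + \bignormop{\mathbf{E}}$, and taking the max-min over $k$-dimensional subspaces yields $\lambda_k(\mathbf{B} + \mathbf{E}) \leq \lambda_k(\mathbf{B}) + \bignormop{\mathbf{E}}$; swapping the roles of $\mathbf{B}$ and $\mathbf{B} + \mathbf{E}$ gives the other direction. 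The only bookkeeping to be careful about is the indexing convention $\lambda_k(\gH(\mM)) = \sigma_k(\mM)$ for $k \leq \min(d_1, d_2)$, since $\gH(\mM)$ also carries zero and negative eigenvalues further down its ordered spectrum; but since the lemma restricts $k$ to this exact range, the identification is clean.
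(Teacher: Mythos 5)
The paper does not supply a proof of this lemma; it simply cites it as Problem 7.3.P16 of Horn and Johnson. Your proof is correct and takes the standard route: reduce to the symmetric Weyl inequality via the Hermitian dilation $\gH$ (which the paper already introduces in Eqn.~\eqref{eqn: hermitiaon dilation}), using the facts that $\gH$ is linear, that the spectrum of $\gH(\mM)$ is $\{\pm\sigma_i(\mM)\}_{i\in[\min(d_1,d_2)]}$ padded with $|d_1-d_2|$ zeros (established cleanly via $\gH(\mM)^2 = \mathrm{diag}(\mM\mM^\top,\mM^\top\mM)$ and the sign-flip conjugation by $\mathrm{diag}(\mI_{d_1},-\mI_{d_2})$), and that consequently $\lambda_k(\gH(\mM)) = \sigma_k(\mM)$ for $k \le \min(d_1,d_2)$. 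The Courant--Fischer derivation of the symmetric Weyl inequality is also correct, and your remark about the indexing range being exactly the one where the identification $\lambda_k(\gH(\mM)) = \sigma_k(\mM)$ is valid is the one bookkeeping point that actually needs care; you handled it. No gaps.
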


As $\sigma_k(\bm\Theta_\star) = 0$ for $k \geq r + 1$, we have that $\sigma_k(\bm\Theta_1) \leq \sigma_{\mathrm{thres}}$ for the same $k$'s as well.
This proves that the thresholding part of Algorithm~\ref{alg:gl-lowpopart} (line 10) indeed yields $\rank(\widehat{\bm\Theta}) \leq r$.

The final error bound follows from the triangle inequality and \Cref{lem:jun5}, which implies that $\GL(\pi; \bm\Theta_0) \leq 2 \GL(\pi; \bm\Theta_\star)$.
\qed

\subsection{Proof of Lemma~\ref{lem:minsker-variance} -- Controlling the Matrix Variance Statistics}
\label{app:minsker-variance}

Throughout, we utilize the following self-concordance control formalizing the intuition that $\mH(\pi; \bm\Theta_0) \approx \mH(\pi; \bm\Theta_\star)$:
\begin{lemma}[Lemma 5 of \citet{jun2021confidence}, adapted to our notations]
\label{lem:jun5}
    Suppose {\color{violet}$R_s E \leq \frac{1}{2}$}.
    Then, we have that
    \begin{equation}
        \frac{1}{2} \mH(\pi; \bm\Theta_\star) \preceq \mH(\pi; \bm\Theta_0)
        \preceq 2 \mH(\pi; \bm\Theta_\star).
    \end{equation}
\end{lemma}

We will bound $\bignormop{\E[\widetilde{\bm\Theta}_t \widetilde{\bm\Theta}_t^\top]}$ only, as the other one follows analogously.

For $y \sim p(\cdot | \mX; \bm\Theta_\star)$,
\begin{align}
    \E[(y - \mu(\langle \mX, \bm\Theta \rangle))^2 | \mX] &\leq 2\E[(y - \mu(\langle \mX, \bm\Theta_\star))^2] + 2(\mu(\langle \mX, \bm\Theta_\star \rangle) - \mu(\langle \mX, \bm\Theta_0 \rangle))^2 \\
    &\leq 2 g(\tau) \dmu(\langle \mX, \bm\Theta_\star \rangle) + 2 L_\mu^2 \langle \mX, \bm\Theta_\star - \bm\Theta_0 \rangle^2 \\
    &\leq 2 \left( g(\tau) + \frac{L_\mu^2 {\color{violet}E}^2}{\kappa_\star} \right) \dmu(\langle \mX, \bm\Theta_\star \rangle). \tag{C.2}
\end{align}

For notational simplicity, we further introduce $\mA_\mX := \vec^{-1}\left(  \mH(\pi; \bm\Theta_0)^{-1}  \vec(\mX) \right) \in \sR^{d_1 \times d_2}$.
Then, we have
\begin{align}
    \bignormop{\E[\widetilde{\bm\Theta}_t \widetilde{\bm\Theta}_t^\top]} &= \bignormop{\E\left[ \left(y_t - \mu(\langle \mX_t, \bm\Theta_0\rangle) \right)^2 \mA_{\mX_t} \mA_{\mX_t}^\top \right]} \\
    &= \bignormop{\E_{\mX \sim \pi}\left[ \E_{y \sim p(\cdot | \mX; \bm\Theta_\star)}[(y - \mu(\langle \mX, \bm\Theta_0 \rangle))^2 | \mX] \mA_\mX \mA_\mX^\top \right]} \tag{Tower property} \\
    &\leq 2 \left( g(\tau) + \frac{L_\mu^2 {\color{violet}E}^2}{\kappa_\star} \right) \bignormop{\E_{\mX \sim \pi}\left[ \dmu(\langle \mX, \bm\Theta_\star\rangle) \mA_\mX \mA_\mX^\top \right]}.
\end{align}

We largely follow the proof of \citet[Lemma B.2]{jang2024lowrank} as follows: denoting $\gS^{d_1 - 1}$ as the unit sphere in $\sR^{d_1}$,
\begin{align}
    &\bignormop{\E_{\mX \sim \pi}\left[ \dmu(\langle \mX, \bm\Theta_\star\rangle) \mA_\mX \mA_\mX^\top \right]} \\
    &= \max_{\vu \in \gS^{d_1 - 1}} \vu^\top \E_{\mX \sim \pi}\left[ \dmu(\langle \mX, \bm\Theta_\star\rangle) \mA_\mX \mA_\mX^\top \right] \vu \\
    &= \max_{\vu \in \gS^{d_1 - 1}} \vu^\top \E_{\mX \sim \pi}\left[ \dmu(\langle \mX, \bm\Theta_\star\rangle) \mA_\mX \left( \sum_{{\color{blue}m}=1}^{d_2} \ve_{\color{blue}m} \ve_{\color{blue}m}^\top \right) \mA_\mX^\top \right] \vu \tag{let $\{\ve_{\color{blue}m}\}_{{\color{blue}m} \in [d_2]}$ be the standard basis vectors of $\sR^{d_2}$} \\
    &= \max_{\vu \in \gS^{d_1 - 1}} \E_{\mX \sim \pi}\left[ \dmu(\langle \mX, \bm\Theta_\star\rangle) \sum_{{\color{blue}m}=1}^{d_2} \left( \vu^\top \mA_\mX \ve_{\color{blue}m} \right)^2 \right] \\
    &= \max_{\vu \in \gS^{d_1 - 1}} \E_{\mX \sim \pi}\left[ \dmu(\langle \mX, \bm\Theta_\star\rangle) \sum_{{\color{blue}m}=1}^{d_2} \langle \ve_{\color{blue}m} \otimes \vu, \vec(\mA_\mX) \rangle^2 \right] \tag{$\vx^\top \mA \vy = \langle \vy \otimes \vx, \vec(\mA) \rangle$; Eqn. (40) of \citet{minka1997matrix}} \\
    &= \max_{\vu \in \gS^{d_1 - 1}} \E_{\mX \sim \pi}\left[ \dmu(\langle \mX, \bm\Theta_\star\rangle) \sum_{{\color{blue}m}=1}^{d_2} \langle \ve_{\color{blue}m} \otimes \vu, \mH(\pi; \bm\Theta_0)^{-1} \vec(\mX) \rangle^2 \right] \tag{Definition of $\mA_\mX$} \\
    &= \max_{\vu \in \gS^{d_1 - 1}} \sum_{{\color{blue}m}=1}^{d_2} (\ve_{\color{blue}m} \otimes \vu)^\top  \mH(\pi; \bm\Theta_0)^{-1} \mH(\pi; \bm\Theta_\star) \mH(\pi; \bm\Theta_0)^{-1}  (\ve_{\color{blue}m} \otimes \vu) \\
    &\leq 2 \max_{\vu \in \gS^{d_1 - 1}} \sum_{{\color{blue}m}=1}^{d_2} (\ve_{\color{blue}m} \otimes \vu)^\top  \mH(\pi; \bm\Theta_0)^{-1} (\ve_{\color{blue}m} \otimes \vu) \tag{\Cref{lem:jun5}, $\mA \mB \mA \preceq \mA \mC \mA$ for positive semi-defnite $\mA, \mB, \mC$} \\
    &= 2 \max_{\vu \in \gS^{d_1 - 1}} \sum_{{\color{blue}m}=1}^{d_2} \vu^\top \left(  (\mH(\pi; \bm\Theta_0)^{-1})_{{\color{teal}\gI_{\color{blue}m}^{(\col)}}, {\color{teal}\gI_{\color{blue}m}^{(\col)}}}  \right) \vu \tag{see \Cref{lem:minsker-variance} for the definition of ${\color{teal}\gI_{\color{blue}m}^{(\col)}}$} \\
    &= 2 \underbrace{\lambda_{\max}\left( \sum_{{\color{blue}m} = 1}^{d_2} (\mH(\pi; \bm\Theta_0)^{-1})_{{\color{teal}\gI_{\color{blue}m}^{(\col)}}, {\color{teal}\gI_{\color{blue}m}^{(\col)}}} \right)}_{=: H^{(\col)}(\pi; \bm\Theta_0)}.
\end{align}

All in all, we have that
\begin{equation}
    \bignormop{\E[\widetilde{\bm\Theta}_t \widetilde{\bm\Theta}_t^\top]} \leq 4 \left( g(\tau) + \frac{L_\mu^2 {\color{violet}E}^2}{\kappa_\star} \right) H^{(\col)}(\pi; \bm\Theta_0).
\end{equation}

Similarly, one can obtain 
\begin{equation}
    \bignormop{\E[\widetilde{\bm\Theta}_t^\top \widetilde{\bm\Theta}_t]} \leq 4 \left( g(\tau) + \frac{L_\mu^2 {\color{violet}E}^2}{\kappa_\star} \right) H^{(\row)}(\pi; \bm\Theta_0).
\end{equation}
We then conclude by plugging in the bound for ${\color{violet}E}^2$.
\qed

\subsection{A Glimpse of Unbounded Arm-Set: The Linear Case}
\label{app:unbounded}
Scrutinizing the above proof, Eqn.~(C.2) is the only part where we explicitly use $\kappa_\star$, which requires the boundedness of $\gX$.
When $\mu(z) = z$, this requirement vanishes, which is something that even \citet{jang2024lowrank} has seemingly missed.
Moreover, in this case, the requirement for $\bm\Theta_0$ to be asymptotically consistent is not needed as well.

To showcase this, we compare the Frobenius norm squared error rate of \texttt{GL-LowPopArt} when $\mu(z) = z$ with \citet[Corollary 3]{negahban2011estimation} in multivariate regression setup, which is equivalent to linear trace regression with $\gX = \{ \ve_i \vz_t^\top \}$ with $\vz_t \sim \gN_{d_2}(\vzero, \bm\Sigma)$ and $\pi = \mathrm{Unif}([d_1]) \otimes \gN_{d_2}(\vzero, \bm\Sigma)$~\citep[Example 1]{negahban2011estimation}.
Then it is easy to see that $\vx_t \sim \pi$ satisfies $\vx_t \sim \frac{1}{d_1} \sum_{i=1}^{d_1} \gN_{d_1 d_2}(\vzero, \bm\Sigma \otimes \ve_i \ve_i^\top)$, and thus, $\E[\vx_t \vx_t^\top] = \frac{1}{d_1} \bm\Sigma \otimes \mI_{d_1}$.
Suppose that the underyling GLM is $\gN(0, \nu^2)$.

By \Cref{thm:estimation-final,prop:improve} with $d_1 T$ observations, \texttt{GL-LowPopArt} attains
\begin{equation}
    \bignorm{\widehat{\bm\Theta} - \bm\Theta_\star}_F^2 \lesssim \frac{\nu^2 r (d_1 \vee d_2)}{T \lambda_{\min}(\bm\Sigma)} \log\frac{d_1 \vee d_2}{\delta}.
\end{equation}
\citet[Corollary 3]{negahban2011estimation} shows the following error rate for nuclear penalized estimator:
\begin{equation}
    \bignorm{\widehat{\bm\Theta} - \bm\Theta_\star}_F^2 \lesssim \frac{\nu^2 r \lambda_{\max}(\bm\Sigma)}{T \lambda_{\min}(\bm\Sigma)^2} \left( (d_1 \vee d_2) + \log\frac{1}{\delta} \right).
\end{equation}
Note that unless $\bm\Sigma$ is well-conditioned, our rate is strictly better than the nuclear penalized estimator by a factor of the condition number $\lambda_{\max}(\bm\Sigma) / \lambda_{\min}(\bm\Sigma)$.

\newpage
\section{\texorpdfstring{PROOF OF PROPOSITION~\ref{prop:improve} -- UPPER AND LOWER BOUNDS OF $\GL$}{PROOF OF PROPOSITION~3.1 -- UPPER AND LOWER BOUNDS OF GL}}
\label{app:improve}

Here, we largely follow the proof strategies of Appendix C.2 and D.2 of \citet{jang2024lowrank}.

\subsection{\texorpdfstring{Upper Bound of \Cref{def:GL-Design}}{Upper Bound of GL-Design}}
We have that
\begin{align}
    H^{(\col)}(\pi; \bm\Theta_\star) &= \lambda_{\max}\left( \sum_{{\color{blue}m}=1}^{d_2} \mD_{\color{blue} m}^{(\col)}(\pi; \bm\Theta_\star) \right) \\
    &\leq \sum_{{\color{blue}m}=1}^{d_2} \lambda_{\max}\left( \mD_{\color{blue} m}^{(col)}(\pi; \bm\Theta_\star) \right) \tag{$\lambda_{\max}$ is convex and 1-homogenous} \\
    &= \sum_{{\color{blue}m}=1}^{d_2} \max_{\vu \in \gS^{d_1 - 1}} \vu^\top \mD_{\color{blue} m}^{(\col)}(\pi; \bm\Theta_\star) \vu \\
    &= \sum_{{\color{blue}m}=1}^{d_2} \max_{\vu \in \gS^{d_1 - 1}} (\ve_{\color{blue}m} \otimes \vu)^\top \mH(\pi; \bm\Theta_\star)^{-1} (\ve_{\color{blue}m} \otimes \vu) \tag{see proof of Lemma~\ref{lem:minsker-variance}} \\
    &\leq \sum_{{\color{blue}m}=1}^{d_2} \max_{\vu \in \gS^{d_1d_2 - 1}} \vu^\top \mH(\pi; \bm\Theta_\star)^{-1} \vu \\
    &= d_2 \lambda_{\max}(\mH(\pi; \bm\Theta_\star)^{-1}) \\
    &= \frac{d_2}{\lambda_{\min}(\mH(\pi; \bm\Theta_\star))}.
\end{align}
One can similarly prove that $H^{(\row)}(\pi; \bm\Theta_\star) \leq \frac{d_1}{\lambda_{\min}(\mH(\pi; \bm\Theta_\star))}$, and the desired conclusion follows.

\subsection{\texorpdfstring{Lower Bound of $\GL(\pi; \bm\Theta_\star)$}{Lower Bound of GL}}

Again, by definition,
\begin{align}
    \GL(\pi; \bm\Theta_\star) &\geq \lambda_{\max}\left( \sum_{{\color{blue}m}=1}^{d_2} [(\mH(\pi; \bm\Theta_\star)^{-1})_{jk}]_{j,k \in \{\ell + d_1({\color{blue}m}-1) : \ell \in [d_1]\}} \right) \\
    &\geq \frac{1}{d_1} \tr\left( \sum_{{\color{blue}m}=1}^{d_2} [(\mH(\pi; \bm\Theta_0)^{-1})_{jk}]_{j,k \in \{\ell + d_1({\color{blue}m}-1) : \ell \in [d_1]\}} \right) \tag{$\lambda_{\max}(\mA) \geq \frac{1}{d} \tr(\mA)$ for any symmetric $\mA \in \sR^{d \times d}$} \\
    &= \frac{1}{d_1} \tr\left( \mH(\pi; \bm\Theta_\star)^{-1} \right) \\
    &\geq \frac{1}{d_1} \frac{(d_1 d_2)^2}{\tr\left( \mH(\pi; \bm\Theta_\star) \right)} \tag{AM-HM inequality on the eigenvalues of $\mH(\pi; \bm\Theta_\star)$},
\end{align}
and similarly,
\begin{equation}
    \GL(\pi; \bm\Theta_\star) \geq \frac{1}{d_2} \frac{(d_1 d_2)^2}{\tr\left( \mH(\pi; \bm\Theta_\star) \right)},
\end{equation}
i.e., $\GL(\pi; \bm\Theta_\star) \geq \frac{(d_1 d_2)^2}{(d_1 \wedge d_2) \tr(\mH(\pi; \bm\Theta_\star))} \geq \frac{(d_1 d_2) (d_1 \vee d_2)}{2 \tr(\mH(\pi; \bm\Theta_\star))}$.
\qed

\newpage
\section{DEFERRED DETAILS REGARDING OPTIMAL GL-DESIGN}
\subsection{\texorpdfstring{Proof of \Cref{prop:unit-ball}}{Proof of Proposition 3.2}}
\label{app:unit-ball}
First, by \Cref{prop:improve}, we have that
\begin{equation}
    \GL(\pi; \bm\Theta_\star) \leq \frac{d_1 \vee d_2}{\lambda_{\min}(\mH(\pi; \bm\Theta_\star))},
\end{equation}
and thus, by definition, it suffices to solve the following convex optimization problem:
\begin{equation}
    \max_{\pi \in \gP([d_1] \times [d_2])} \lambda_{\min}\left( \sum_{i=1}^{d_1} \sum_{j=1}^{d_2} \pi(i, j) \dmu_{i,j} \mathrm{vec}(\ve_i \tilde{\ve}_j^\top) \mathrm{vec}(\ve_i \tilde{\ve}_j^\top)^\top \right) = \min_{i,j} \pi(i, j)\dmu_{i,j},
\end{equation}
where we denote $\dmu_{i,j} := \dmu\!\left((\bm\Theta_\star)_{i,j}\right)$, and the equality holds because the matrix is diagonal.
Thus, we have a convex optimization problem of the following form:
\begin{equation}
    \max_{\pi \in \sR^{d_1 + d_2}} \min_{i,j} \pi(i, j)\dmu_{i,j}, \quad \text{subj. to} \quad \sum_{i=1}^{d_1} \sum_{j=1}^{d_2} \pi(i, j) = 1, \ \pi(i, j) \geq 0.
\end{equation}

To solve this max-min problem, we introduce an auxiliary variable $c \in \sR$ such that $c \leq \pi(i, j)\dmu_{i,j}$ for all $i \in [d_1], j \in [d_2]$. The optimization problem can be equivalently reformulated as a linear program:
\begin{equation}
    \max_{\pi, c} \ c \quad \text{subj. to} \quad \sum_{i=1}^{d_1} \sum_{j=1}^{d_2} \pi(i, j) = 1, \quad \pi(i, j) \geq 0, \ \pi(i, j)\dmu_{i,j} \geq c.
\end{equation}

Assuming $\dmu_{i,j} > 0$ for all $(i, j)$ (as the minimum would otherwise be trivially bounded by zero), the constraint $\pi(i, j)\dmu_{i,j} \geq c$ implies that
\begin{equation}
    \pi(i, j) \geq \frac{c}{\dmu_{i,j}}.
\end{equation}

Summing this over all $i \in [d_1]$ and $j \in [d_2]$, we obtain
\begin{equation}
    1 = \sum_{i=1}^{d_1} \sum_{j=1}^{d_2} \pi(i, j) \geq \sum_{i=1}^{d_1} \sum_{j=1}^{d_2} \frac{c}{\dmu_{i,j}} \implies c \leq \left( \sum_{i=1}^{d_1} \sum_{j=1}^{d_2} \dmu_{i,j}^{-1} \right)^{-1}.
\end{equation}

To demonstrate that this upper bound is achievable, we define a specific distribution $\pi^\star$ where the inequalities hold tightly as equalities:
\begin{equation}
    \pi^\star(i, j) = \frac{\dmu_{i,j}^{-1}}{\sum_{k=1}^{d_1} \sum_{l=1}^{d_2} \dmu_{k,l}^{-1}} \quad \forall i \in [d_1], j \in [d_2].
\end{equation}

Since $\dmu_{i,j} > 0$, it is clear that $\pi^\star(i, j) \in [0, 1]$ and $\sum_{i,j} \pi^\star(i, j) = 1$, confirming that $\pi^\star$ is a valid probability distribution on $[d_1] \times [d_2]$. Evaluating the objective at this optimal allocation $\pi^\star$, we find:
\begin{equation}
    \min_{i,j} \pi^\star(i, j)\dmu_{i,j} = \min_{i,j} \frac{\dmu_{i,j}^{-1}\dmu_{i,j}}{\sum_{k=1}^{d_1} \sum_{l=1}^{d_2} \dmu_{k,l}^{-1}} = \left( \sum_{k=1}^{d_1} \sum_{l=1}^{d_2} \dmu_{k,l}^{-1} \right)^{-1}.
\end{equation}

Thus, the maximum value of the objective function is exactly $\left( \sum_{i=1}^{d_1} \sum_{j=1}^{d_2} \dmu_{i,j}^{-1} \right)^{-1}$. This completes the proof.
\qed

\newpage
\section{\texorpdfstring{PROOF OF THEOREM~\ref{thm:estimation-0} -- ERROR BOUND OF STAGE~I}{PROOF OF THEOREM B.1 -- ERROR BOUND OF STAGE~I}}
\label{app:estimation-0}
We denote $N = N_1$ and $\pi = \pi_1$, and we recall/introduce the following notations:
\begin{align}
    \gL_N(\bm\Theta) &:= \frac{1}{N} \sum_{t=1}^N \left\{ m(\langle \mX_t, \bm\Theta \rangle) - y_t \langle \mX_t, \bm\Theta \rangle \right\} \\
    \mH(\pi; \bm\Theta) &:= \E_{\mX \sim \pi} \left[ \dmu(\langle \mX, \bm\Theta \rangle) \vec(\mX) \vec(\mX)^\top \right] \\
    \bm\Theta_0 &:= \argmin_{\bm\Theta \in \Omega} \left\{ \gL_N(\bm\Theta) + \lambda_N \bignormnuc{\bm\Theta} \right\} \label{eqn:optimization}
\end{align}

The proof here follows the standard M-estimation theory in high-dimensional statistics~\citep[Chapter 9 \& 10]{wainwright2019high}, which we reproduce here for completeness.

Let $\delta \in (0, 1)$ be given.
We collect all requirements on $N$ in {\color{violet}violet} to be aggregated at the end of the proof.

We begin with the following proposition, whose proof is provided in \Cref{app:lambda}, that provides $\lambda_N$'s for different GLMs such that the gradient of $\gL_N$ at the true parameter $\bm\Theta_\star$ is well bounded:
\begin{proposition}[Choosing $\lambda_N$]
\label{prop:lambda}
    Denote the centered random variable $\xi := y - \mu(\langle \mX, \bm\Theta_\star \rangle)$ and $\bar{\kappa}_\star := \E_{\mX \sim \pi} \left[ \dmu(\langle \mX, \bm\Theta_\star \rangle) \right]$.
    By setting $\lambda_N$ as described below, we have $\sP(\bignormop{\nabla \gL_N(\bm\Theta_\star)} \leq \frac{\lambda_N}{2}) \geq 1 - \frac{\delta}{4}$:
    \begin{itemize}
        \item[(i)] \textbf{$|\xi| \leq M$ a.s.}: $\lambda_N = 4 \sqrt{\frac{g(\tau) \bar{\kappa}_\star}{N} \log\frac{4(d_1 + d_2)}{\delta}},$ given that {\color{violet}$N \geq \frac{2M^2}{9 L_\mu g(\tau)} \log\frac{4(d_1 + d_2)}{\delta}$},
        
        \item[(ii)] \textbf{$\bignorm{\xi}_{\psi_1} \leq K$}:\footnote{Here, $\bignorm{\cdot}_{\psi_1}$ is the subexponential norm, defined as $\bignorm{\xi}_{\psi_1} := \inf\{ K > 0 : \E[\exp(|\xi| / K)] \leq 2 \}$. This encompasses Gaussian and Poisson distributions among others.} $\lambda_N = 6 K \sqrt{\frac{1}{N} \log\frac{4(d_1 + d_2)}{\delta}},$ given that {\color{violet}$N \geq \log\frac{4(d_1 + d_2)}{\delta}$}.
    \end{itemize}
\end{proposition}
We then define the distribution-dependent constant $C > 0$ as follows:
\begin{align}
    C &:= \begin{cases}
        16 g(\tau) \bar{\kappa}_\star, & \text{if the GLM is $M$-bounded} \\
        36 K^2, & \text{if the GLM has finite subexponential norm of $K$}
    \end{cases}
\end{align}

Throughout, we will suppose that the event $\bignormop{\nabla \gL_N(\bm\Theta_\star)} \leq \frac{\lambda_N}{2}$ holds.

We now recall the definition of restricted strong convexity (RSC):
\begin{definition}[Definition 9.15 of \citet{wainwright2019high}]
    We say that $\gL_N$ satisfies the \textbf{restricted strong convexity (RSC)} with $\alpha, \tau_N, R \geq 0$ if
    \begin{equation}
        \gL_N(\bm\Theta_\star + \Delta) - \gL_N(\bm\Theta_\star) - \langle \nabla \gL_N(\bm\Theta_\star), \Delta \rangle \geq \frac{\alpha}{2} \bignorm{\Delta}_F^2 - \tau_N^2 \bignorm{\Delta}_\nuc^2, \quad \forall \Delta : \bignorm{\Delta}_F \leq R.
    \end{equation}
\end{definition}
The next lemma shows that RSC holds with high probability:
\begin{lemma}
\label{lem:RSC}
    Suppose that {\color{violet} $N \gtrsim \log d$}.
    Then, there exists a constant $\tilde{R}_s > 0$ (that only depends on $R_s$) such that w.p. at least $1 - \frac{\delta}{4}$, \textbf{RSC} holds with $\alpha = \tilde{R}_s \lambda_{\min}(\mH(\pi; \bm\Theta_\star))$, $\tau_N^2 \asymp L_\mu \sqrt{\frac{1}{N} \log\frac{d_1 + d_2}{\delta}}$, and $R = \frac{1}{\sqrt{d_1 + d_2}}$.
\end{lemma}

As the subspace Lipschitz constant~\citep[Definition 9.18]{wainwright2019high} of $\bignormnuc{\cdot}$ is $\sqrt{r}$, we conclude by invoking the ``master theorem'' for nuclear-norm regularized estimator~\citep[Corollary 9.20]{wainwright2019high}:
\begin{equation}
    \sP\left( \bignormnuc{\widehat{\bm\Theta} - \bm\Theta_\star} \leq \frac{6 r}{\lambda_{\min}(\mH(\pi; \bm\Theta_\star))} \sqrt{\frac{C}{N} \log\frac{2(d_1 + d_2)}{\delta}} \right) \geq 1 - \frac{\delta}{2},
\end{equation}
\emph{given} that $L_\mu \sqrt{\frac{1}{N} \log\frac{d_1 + d_2}{\delta}} r \lesssim \lambda_{\min}(\mH(\pi; \bm\Theta_\star))$ and $\frac{1}{\lambda_{\min}(\mH(\pi; \bm\Theta_\star))} \sqrt{\frac{C r}{N} \log\frac{d_1 + d_2}{\delta}} \lesssim \frac{1}{\sqrt{d_1 + d_2}}$.
These conditions hold when {\color{violet} $N \gtrsim C \lambda_{\min}(\mH(\pi; \bm\Theta_\star))^{-2} (d_1 + d_2) r \log\frac{d_1 + d_2}{\delta}$}, and we conclude by taking the union bound with \Cref{prop:lambda}.
\qed

\begin{remark}
    We believe that our analysis can be extended to the general $\ell_q$-constraint on the singular values of $\bm\Theta_\star$ for $q \in [0, 1)$ as in \citet{fan2019generalized}, and to the case where $\Omega$ is a smooth matrix manifold~\citep{absil2008manifold} using tools from manifold optimization~\citep{boumal2023manifold,yang2014manifold}.
\end{remark}

\subsection{\texorpdfstring{Proof of \Cref{lem:RSC}: Restricted Strong Convexity}{Proof of Lemma ??}}
By Taylor's expansion with integral remainder, we have
\begin{align}
    &\gL_N(\bm\Theta_\star + \Delta) - \gL_N(\bm\Theta_\star) - \langle \nabla \gL_N(\bm\Theta_\star), \Delta \rangle \\ &= \frac{1}{N} \sum_{t=1}^N \left\{ m(\langle \bm\Theta_\star + \Delta, \mX_t \rangle) - m(\langle \bm\Theta_\star, \mX_t \rangle) - \langle \bm\Theta_\star, \mX_t \rangle m'(\langle \bm\Theta_\star, \mX_t \rangle) \right\} \\
    &= \frac{1}{N} \sum_{t=1}^N \langle \Delta, \mX_t \rangle^2 \int_0^1 (1 - z) \underbrace{m''\left( \langle \bm\Theta_\star + z \Delta, \mX_t \rangle \right)}_{=\dmu\left( \langle \bm\Theta_\star + z \Delta, \mX_t \rangle \right)} dz.
\end{align}

As $\mu$ is self-concordant, we can utilize the following self-concordance control lemma:
\begin{lemma}[Lemma 9 of \citet{abeille2021logistic}]
    For any $R_s$-self-concordant $\mu$ that is monotone increasing, the following holds: for any $z_1, z_2 \in \sR$,
    \begin{equation}
        \dmu(z_2) \exp(-R_s|z_1 - z_2|) \leq \dmu(z_1) \leq \dmu(z_2) \exp( R_s |z_1 - z_2|).
    \end{equation}
\end{lemma}

We choose $R = \frac{1}{\sqrt{d}}$, i.e., $\bignorm{\Delta}_F \leq \frac{1}{\sqrt{d}}$. Then, Cauchy-Schwarz inequality and $\bignormop{\mX_t} \leq 1$ imply $| \langle \Delta, \mX_t \rangle | \leq 1$.
Thus, by the above self-concordance lemma,
\begin{equation}
    \int_0^1 (1 - z) \dmu\left( \langle \bm\Theta_\star + z \Delta, \mX_t \rangle \right) dz \geq \int_0^1 (1 - z) e^{- R_s z} \dmu\left( \langle \bm\Theta_\star, \mX_t \rangle \right) dz
    \geq \tilde{R}_s \dmu\left( \langle \bm\Theta_\star, \mX_t \rangle \right),
\end{equation}
where we define the constant
\begin{equation}
    \tilde{R}_s :=
    \begin{cases*}
        \frac{1}{2}, & \quad $R_s = 0$ \\
        \frac{R_s - 1 + e^{-R_s}}{R_s^2}, & \quad $R_s > 0$.
    \end{cases*}
\end{equation}

Let us denote the empirical Hessian of $\gL_N$ at $\bm\Theta_\star$ as
\begin{equation}
    \widehat{\mH}_N(\pi; \bm\Theta_\star) := \frac{1}{N} \sum_{t=1}^N \dmu(\langle \mX_t, \bm\Theta_\star \rangle) \vec(\mX_t) \vec(\mX_t)^\top.
\end{equation}

Thus, we have that
\begin{align}
    &\gL_N(\bm\Theta_\star + \Delta) - \gL_N(\bm\Theta_\star) - \langle \nabla \gL_N(\bm\Theta_\star), \Delta \rangle \\
    &\geq \tilde{R}_s \mathrm{vec}(\Delta)^\top \widehat{\mH}_N(\pi; \bm\Theta_\star) \mathrm{vec}(\Delta) \\
    &= \tilde{R}_s \mathrm{vec}(\Delta)^\top \mH(\pi; \bm\Theta_\star) \mathrm{vec}(\Delta) + \tilde{R}_s \mathrm{vec}(\Delta)^\top \left( \widehat{\mH}_N(\pi; \bm\Theta_\star) - \mH(\pi; \bm\Theta_\star) \right) \mathrm{vec}(\Delta) \\
    &\geq \tilde{R}_s \lambda_{\min}\left( \mH(\pi; \bm\Theta_\star) \right) \bignorm{\Delta}_F^2 - \tilde{R}_s \bignormnuc{\Delta}^2 \underbrace{\left| \left( \frac{\mathrm{vec}(\Delta)}{\bignormnuc{\Delta}} \right)^\top \left( \widehat{\mH}_N(\pi; \bm\Theta_\star) - \mH(\pi; \bm\Theta_\star) \right) \frac{\mathrm{vec}(\Delta)}{\bignormnuc{\Delta}} \right|}_{\triangleq \gE_N(\Delta)}
\end{align}

Using the standard empirical process theory and peeling argument over the Frobenius ball, the following lemma establishes the uniform boundedness of $\gE_N$, whose proof is provided in \Cref{app:uniform-bound}:
\begin{lemma}
\label{lem:uniform-bound}
    $\sP\left( \gE_N(\Delta) \lesssim L_\mu \sqrt{\frac{\log\frac{d}{\delta}}{N}}, \ \forall \Delta : \bignormnuc{\Delta}= 1 \right) \geq 1 - \frac{\delta}{4},$ provided that {\color{violet} $N \gtrsim \log d$}.
\end{lemma}

Combining everything gives the desired result.
\qed

\subsection{\texorpdfstring{Proof of \Cref{lem:uniform-bound}: Uniform Bounding of $\gE_N$}{Proof of Lemma ??}}
\label{app:uniform-bound}
This proof is based on the peeling argument over the Frobenius ball in \citet[Theorem 10.17]{wainwright2019high}, as well as the standard empirical process theory tools such as Rademacher symmetrization and contraction inequalities; see \citet[Chapter 3.4 \& 4]{wainwright2019high}. We will introduce the tools as we go along.

For notational simplicity, we will denote $\frac{\Delta}{\bignormnuc{\Delta}}$ as $\Delta$ satisfying $\bignormnuc{\Delta} = 1$.

First note that
\begin{equation}
    \mathrm{vec}(\Delta)^\top \left( \widehat{\mH}_N(\pi; \bm\Theta_\star) - \mH(\pi; \bm\Theta_\star) \right) \mathrm{vec}(\Delta) = \frac{1}{N} \sum_{t=1}^N \left\{ \dmu(\langle \bm\Theta_\star, \mX_t \rangle) \langle \Delta, \mX_t \rangle^2 - \E\left[ \dmu(\langle \bm\Theta_\star, \mX_t \rangle) \langle \Delta, \mX_t \rangle^2 \right] \right\}.
\end{equation}
Let us define $f_\Delta(\mX) := \dmu(\langle \bm\Theta_\star, \mX \rangle) \langle \Delta, \mX \rangle^2$, and $\gF := \left\{ f_\Delta : \bignormnuc{\Delta} = 1 \right\}$ be the function class indexed by $\Delta$ of unit nuclear norm.
Note that by H\"{o}lder's inequality and $\bignormop{\mX_t} \leq 1$, we have $|\langle \Delta, \mX_t \rangle| \leq \bignormnuc{\Delta} \bignormop{\mX_t} \leq 1$.
Thus, $\gF$ is uniformly bounded by $L_\mu$.

We now recall the following standard result from empirical process theory:
\begin{lemma}[Glivenko-Cantelli Property via Rademacher Complexity, Theorem 4.10 of \citet{wainwright2019high}]
\label{lem:rademacher}
    Let $\mathcal{F}$ be a function class uniformly bounded by $b$. For any $\delta \in (0, 1)$, with probability at least $1 - \delta$,
    \begin{equation}
        \sup_{f \in \mathcal{F}} \left| \frac{1}{N} \sum_{t=1}^N f(\mX_t) - \E[f(\mX)] \right| \leq 2 \gR_N(\mathcal{F}) + b \sqrt{\frac{2 \log(1/\delta)}{N}},
    \end{equation}
    where $\gR_N(\mathcal{F})$ is the Rademacher complexity, defined as
    \begin{equation}
        \gR_N(\mathcal{F}) := \frac{1}{N} \E_{\{\mX_t\}, \{\epsilon_t\}} \left[ \sup_{f \in \mathcal{F}} \left| \sum_{t=1}^N \epsilon_t f(\mX_t) \right| \right],
    \end{equation}
    where $\epsilon_t \overset{i.i.d.}{\sim} \mathrm{Unif}(\{-1, +1\})$ are Rademacher random variables.
\end{lemma}

With this, we have that with probability at least $1 - \frac{\delta}{2}$,
\begin{equation}
    \sup_{\Delta: \bignormnuc{\Delta} = 1} \left| \gE_N(\Delta) \right| \leq 2 \gR_N(\gF) + L_\mu \sqrt{\frac{2 \log\frac{2}{\delta}}{N}}.
\end{equation}

We now recall the seminal Ledoux-Talagrand contraction inequality~\citep{ledoux-talagrand} for bounding $\gR_N(\gF)$:
\begin{lemma}[Ledoux-Talagrand Contraction, Theorem 4.12 of \citet{ledoux-talagrand}]
\label{lem:ledoux-talagrand}
    If the functions $\phi_t : \sR \to \sR$ are $L$-Lipschitz and satisfy $\phi_t(0) = 0$, then 
    \begin{equation}
        \E_{\{\mX_t\}, \{\epsilon_t\}} \left[ \sup_{f \in \gF} \left| \sum_{t=1}^N \epsilon_t \phi_t(f(\mX_t)) \right| \right] \leq 2 L \E_{\{\mX_t\}, \{\epsilon_t\}} \left[ \sup_{f \in \gF} \left| \sum_{t=1}^N \epsilon_t f(\mX_t) \right| \right].
    \end{equation}
\end{lemma}

Define $\phi_t(z) := \dmu(\langle \mX_t, \bm\Theta_\star \rangle) z^2$. For $|z| \leq 1$, the function $\phi_t$ is locally Lipschitz with constant $L_k = 2 L_\mu$.
Applying \Cref{lem:ledoux-talagrand}, we obtain:
\begin{align}
    \gR_N(\gF) &\leq 2 (2 L_\mu) \, \E_{\{\mX_t\}, \{\epsilon_t\}} \left[ \sup_{\Delta: \bignormnuc{\Delta} = 1} \left| \frac{1}{N} \sum_{t=1}^N \epsilon_t \langle \Delta, \mX_t \rangle \right| \right] \\
    &= 4 L_\mu \, \E_{\{\mX_t\}, \{\epsilon_t\}} \left[ \sup_{\bignormnuc{\Delta} = 1} \left| \left\langle \Delta, \frac{1}{N} \sum_{t=1}^N \epsilon_t \mX_t \right\rangle \right| \right] \\
    &= 4 L_\mu \, \E_{\{\mX_t\}, \{\epsilon_t\}} \left[ \bignormop{\frac{1}{N} \sum_{t=1}^N \epsilon_t \mX_t} \right].
\end{align}

We now invoke the following expectation version of the matrix Bernstein inequality:
\begin{lemma}[Matrix Bernstein Inequality, Theorem 6.1.1 of \citet{tropp2015survey}]
\label{lem:matrix-bernstein}
    Let $\mA_1, \ldots, \mA_N$ be $d_1 \times d_2$ matrices with $\bignormop{\mA_i} \leq L$, $\E[\mA_i] = \mA$ and $V_N := \frac{1}{N} \max\left\{ \bignormop{\sum_{i=1}^N \E[\mA_i \mA_i^\top]}, \bignormop{\sum_{i=1}^N \E[\mA_i^\top \mA_i]} \right\}$ be the matrix variance statistics.
    Then,
    \begin{equation}
        \E \left[ \bignormop{\frac{1}{N} \sum_{t=1}^N \mA_t} \right] \leq \sqrt{\frac{2 V_N \log (d_1 + d_2)}{N}} + \frac{L \log (d_1 + d_2)}{3 N}.
    \end{equation}
\end{lemma}

Applying the above with $L = 1$ and $V_N \leq 1$, we have that
\begin{equation}
    \gR_N(\gF) \leq 4 L_\mu \left( \sqrt{\frac{2 \log d}{N}} + \frac{\log d}{3 N} \right).
\end{equation}

Combining everything, we have that with probability at least $1 - \frac{\delta}{4}$,
\begin{align}
    \sup_{\Delta: \bignormnuc{\Delta} = 1} \left| \gE_N(\Delta) \right| &\leq 8 L_\mu \left( \sqrt{\frac{2 \log d}{N}} + \frac{\log d}{3 N} \right) + L_\mu \sqrt{\frac{2 \log\frac{4}{\delta}}{N}} \\
    &\leq 20 L_\mu \sqrt{\frac{\log d}{N}} + L_\mu \sqrt{\frac{2 \log\frac{4}{\delta}}{N}},
\end{align}
where the last inequality holds as long as {\color{violet} $N \geq \frac{\log d}{9}$}.
\qed

\subsection{\texorpdfstring{Proof of \Cref{prop:lambda}: Choosing $\lambda_N$}{Choosing the Nuclear-Norm Regularization Coefficient}}
\label{app:lambda}

We significantly deviate from the original proof of \citet{lee2024glm} in two ways.
First, we only need to bound the Lipschitz constant of $\gL_N$ at $\bm\Theta_\star$ instead of globally.
Second, using this and the fact that the covariates are bounded ($\bignormop{\mX_t} \leq 1$), we can utilize appropirate matrix Bernstein inequalities~\citep{tropp2012matrix,tropp2015survey} instead of covering arguments, which saves us additional dimension dependencies, leads to $\lambda_N$ scaling as $\sqrt{1 / N}$ for a wide range of GLMs, and gives simpler proofs.

For notational simplicity, we denote the centered random variable $\xi_t := \mu(\langle \mX_t, \bm\Theta_\star \rangle) - y_t$.
Then note that
\begin{equation}
    \nabla \gL_N(\bm\Theta_\star) = \frac{1}{N} \sum_{t=1}^N (\mu_t(\bm\Theta_\star) - y_t) \mX_t = \frac{1}{N} \sum_{t=1}^N \xi_t \mX_t.
\end{equation}

Let us prove each part separately:

\subsubsection{\texorpdfstring{Proof of (i) -- GLM bounded by $M$}{Proof of (i) -- GLM bounded by M}}
Here, ``bounded by $M$'' means $|y - \langle \mX, \bm\Theta_\star \rangle| \leq M \ a.s.$.
The original proof of \citet{lee2024glm} is too loose, and thus we instead utilize the matrix Bernstein inequality for bounded random matrices, as stated in \Cref{lem:matrix-bernstein}.
Let $\mA_t = \xi_t \mX_t$, which satisfies $\mA \triangleq \E[\mA_t] = \vzero$ and $\bignormop{\mA_t} \leq M$.
Its matrix variance statistics is bounded as
\begin{align}
    V_N &= \frac{1}{N} \max\left\{ \bignormop{\sum_{t=1}^N \E[\xi_t^2 \mX_t \mX_t^\top]}, \bignormop{\sum_{t=1}^N \E[\xi_t^2 \mX_t^\top \mX_t]} \right\} \\
    &= \frac{1}{N} \max\left\{ \bignormop{\sum_{t=1}^N \E\left[ \mX_t \mX_t^\top \E[\xi_t^2 \mid \mX_t] \right]}, \bignormop{\sum_{t=1}^N \E\left[ \mX_t^\top \mX_t \E[\xi_t^2 \mid \mX_t] \right]} \right\} \tag{Tower property of expectation} \\
    &\leq \frac{g(\tau)}{N} \sum_{t=1}^N \E[\dmu(\langle \mX_t, \bm\Theta_\star \rangle)] \tag{$\E[\xi_t^2] = \Var[y | \mX_t] = g(\tau) \dmu(\langle \mX_t, \bm\Theta_\star \rangle)$, $\bignormop{\mX_t} \leq 1$} \\
    &\leq g(\tau) \underbrace{\E_{\mX \sim \pi} \left[ \dmu(\langle \mX, \bm\Theta_\star \rangle) \right]}_{\triangleq \bar{\kappa}_\star}.
\end{align}

We now recall the high-probability version of the matrix Bernstein inequality:
\begin{lemma}[Matrix Bernstein Inequality, Theorem 6.1.1 of \citet{tropp2015survey}]
\label{lem:matrix-bernstein-whp}
    Let $\mA_1, \ldots, \mA_N$ be $d_1 \times d_2$ matrices with $\bignormop{\mA_i} \leq M$, $\E[\mA_i] = \vzero$ and $V_N := \frac{1}{N} \max\left\{ \bignormop{\sum_{i=1}^N \E[\mA_i \mA_i^\top]}, \bignormop{\sum_{i=1}^N \E[\mA_i^\top \mA_i]} \right\}$ be the matrix variance statistics.
    Then, for any $\delta \in (0, 1)$, as long as $N \geq \frac{4 M^2}{9 V_N} \log\frac{4(d_1 + d_2)}{\delta}$,
    \begin{equation}
        \sP\left( \bignormop{\frac{1}{N}\sum_{i=1}^N \mA_i} \leq 2 \sqrt{\frac{V_N}{N} \log\frac{4(d_1 + d_2)}{\delta}} \right) \geq 1 - \frac{\delta}{4}.
    \end{equation}
\end{lemma}

The above lemma implies that as long as {\color{violet} $N \geq \frac{4 M^2}{9 g(\tau) \bar{\kappa}_\star} \log\frac{4(d_1 + d_2)}{\delta}$}, the following holds w.p. at least $1 - \frac{\delta}{4}$:
\begin{equation}
    \bignormop{\nabla \gL_N(\bm\Theta_\star)} \leq 2 \sqrt{\frac{g(\tau) \bar{\kappa}_\star}{N} \log\frac{4(d_1 + d_2)}{\delta}}.
\end{equation}

\begin{remark}
    \citet{lafond2015lexponential,klopp2014general,klopp2015multinomial} have utilized similar proof techniques involving matrix Bernstein inequalities for bounded random variables.
    In the next section, we show how to extend the proof beyond boundedness.
\end{remark}

\subsubsection{Proof of (ii) -- GLM with Finite SubExponential Norm}
Here, we utilize the following subexponential version of matrix Bernstein inequality, whose proof we provide in \Cref{app:subexp-bernstein} for completeness:
\begin{lemma}
\label{lem:matrix-bernstein-subexp}
    Let $\mA_1, \ldots, \mA_N$ be $d_1 \times d_2$ random matrices with $\bignormop{\mA_t} \leq 1$, and let $\xi_1, \cdots, \xi_N$ be real-valued random variables with $\E[\xi_t | \mA_t] = 0$ and $\bignorm{\xi}_{\psi_1} \leq K$ conditionally on $\mA_t$.
    Then, for any $\delta \in (0, 1)$, as long as $N \geq \log\frac{4(d_1 + d_2)}{\delta}$,
    \begin{equation}
        \sP\left( \bignormop{\frac{1}{N}\sum_{t=1}^N \xi_t \mA_t} \leq 3 K \sqrt{\frac{1}{N} \log\frac{4(d_1 + d_2)}{\delta}} \right) \geq 1 - \frac{\delta}{   }.
    \end{equation}
\end{lemma}
The statement then immediately follows from the above lemma with $\mA_t = \mX_t$.
\qed

\subsubsection{\texorpdfstring{Proof of \Cref{lem:matrix-bernstein-subexp}}{Proof of Sub-Exponential Matrix Bernstein Inequality}}
\label{app:subexp-bernstein}
We largely follow the proof of \citet[Theorem 4.1]{tropp2012matrix}, but adopt to rectangular matrices via Hermitian dilations.
Unlike the remark in \citet[Remark 6.3]{tropp2012matrix}, as we deal with scalar-valued random variables with matrix-variate coefficients, the rectangular extension is rather straightforward.

First, by definition and Taylor series expansion, $\bignorm{\xi_t}_{\psi_1} \leq K$ conditionally on $\mA_t$ implies that
\begin{equation}
\label{eqn:moment-bound}
    \E\left[ \sum_{p=1}^\infty \frac{|\xi_t|^p}{p! \cdot K^p} \Big| \mA_t \right] \leq 1 \Longrightarrow \E\left[ |\xi_t|^p \mid \mA_t \right] \leq p! K^p, \ \forall p \geq 1.
\end{equation}

Now, we consider the Hermitian dilation of $\xi_t \mA_t$, which gives
\begin{equation}
    \gH(\xi_t \mA_t) :=
    \begin{bmatrix}
        \vzero & \xi_t \mA_t \\ \xi_t \mA_t^\top & \vzero
    \end{bmatrix}
    = \xi_t \gH(\mA_t).
\end{equation}
Thus, note that for each $p \geq 2$, we have the following moment bound:
\begin{align}
    \E\left[ \gH(\xi_t \mA_t)^p \right] &\preceq \E[|\xi_t|^p] \mI \tag{$\bignormop{\gH(\mA_t)} = \bignormop{\mA_t} \leq 1$} \\
    &= \E\left[ \E\left[ |\xi_t|^p \mid \mX_t \right] \right] \tag{Tower property of expectation} \\
    &\preceq p! K^p \mI. \tag{Eqn.~\eqref{eqn:moment-bound}}
\end{align}
Then, the moment generating function is bounded as follows: for any $\theta \in (0, K^{-1})$,
\begin{align}
    \E\left[ e^{\theta \gH(\xi_t \mA_t)} \right] &= \mI + \theta \E[\gH(\xi_t \mA_t)] + \sum_{p=2}^\infty \frac{\theta^p \E[\gH(\xi_t \mA_t)^p]}{p!} \\
    &\preceq \mI + \mI \sum_{p=2}^\infty (\theta K)^p \tag{The above moment bound} \\
    &= \mI + \frac{(\theta K)^2}{1 - \theta K} \mI \\
    &\preceq \exp\left( \frac{(\theta K)^2}{1 - \theta K} \mI \right). \tag{Eqn. (2.3) of \citet{tropp2012matrix}}
\end{align}

We then apply the Master Tail Bound for the sum of independent random matrices~\citep[Corollary 3.7]{tropp2012matrix} to obtain
\begin{equation}
    \sP\left( \bignormop{\frac{1}{N} \sum_{t=1}^N \gH(\xi_t \mA_t)} \geq z \right) \leq (d_1 + d_2) \underbrace{\inf_{\theta \in (0, K^{-1})} \exp\left( -\theta N z + \frac{(\theta K)^2}{1 - \theta K} \bignormop{\sum_{t=1}^N \mI} \right)}_{(*)}, \quad \forall z \geq 0.
\end{equation}

With $\bignormop{\sum_{t=1}^N \mI} = N$ and optimizing for $\theta$, we have that
\begin{equation}
    (*) \leq \exp\left( -N \left( \sqrt{1 + \frac{z}{K}} - 1 \right)^2 \right).
\end{equation}
Reparametrizing and {\color{violet} assuming that $N \geq \log\frac{4(d_1 + d_2)}{\delta}$} we have:
\begin{equation}
    \sP\left( \bignormop{\frac{1}{N} \sum_{t=1}^N \gH(\xi_t \mA_t)} \geq 3 K \sqrt{\frac{1}{N} \log\frac{4(d_1 + d_2)}{\delta}} \right) \leq \frac{\delta}{4}
\end{equation}
Finally, we conclude by noting that
\begin{equation}
    \bignormop{\frac{1}{N} \sum_{t=1}^N \gH(\xi_t \mA_t)} = \bignormop{\frac{1}{N} \sum_{t=1}^N \xi_t \mA_t}.
\end{equation}
\qed

\newpage
\section{SUMMARY OF PRIOR WORKS ON GENERALIZED TRACE REGRESSION}
\label{app:comparison}

Here, we review two relevant prior works on generalized trace regression~\citep{fan2019generalized,kang2022generalized}, with focus on their problem setup, assumptions, and results.

\subsection{\texorpdfstring{Nuclear Penalized MLE of \citet{fan2019generalized}}{Nuclear Penalized MLE (Fan et al., 2019)}}
We first list their assumptions:
{
    \renewcommand{\theassumption}{C's}  %
    \begin{assumption}
        Assume the following conditions hold:
        \begin{enumerate}
            \item[(C1)] $\bignorm{\mathrm{vec}(\mX_t)}_{\psi_2} \leq K < \infty$ for some constant $K > 0$
            \item[(C2)] $|\dmu(z)| \leq L_\mu$ for any $z \in \sR$
            \item[(C3)] $\lambda_{\min}(\mH(\pi; \bm\Theta_\star)) > 0$
            \item[(C4)] $\|\bm\Theta_\star\|_F \gtrsim \alpha \sqrt{d_1 \vee d_2}$ for some constant $\alpha > 0$
            \item[(C5)] $|\ddmu(z)| \leq |z|^{-1}$ for any $|z| > 1$\footnote{This slightly resembles the general self-concordance condition with a stretch function~\citep{liu2024free}.}
        \end{enumerate}
    \end{assumption}
}
\citet[Lemma 1]{fan2019generalized} shows that under (C1-C2), $\lambda_{N_1} \asymp K \sqrt{\frac{d_1 \vee d_2}{N_1}}$ suffices to control the gradient of the log-likelihood loss; \citet[Lemma 2]{fan2019generalized} shows that under (C1-C5), local restricted convexity holds with curvature $\lambda_{\min}(\mH(\pi; \bm\Theta_\star))$.

Compared to our setting, our bounded covariate assumption is stronger than their (C1), but we do not require (C4) nor (C5), and by utilizing the boundedness, we obtain tighter guarantees.
Moreover, the condition (C5) excludes many GLMs, such as Poisson distribution.

\subsection{\texorpdfstring{Stein's Lemma-based Estimator of \citet{kang2022generalized}}{Stein's Lemma-based Estimator (Kang et al., 2022)}}
Their Stein's lemma-based estimator achieves the following error bound~\citep[Theorem 4.1]{kang2022generalized}:
\begin{equation}
    \bignorm{\widehat{\bm\Theta}^{\text{Kang,1}} - \bm\Theta_\star}_F^2 \lesssim \frac{M(\pi) (d_1 \vee d_2) r}{\overline{\kappa}(\pi)^2 N},
\end{equation}
where $\overline{\kappa}(\pi) := \E_{\mX \sim \pi}[\dmu(\langle \mX, \bm\Theta_\star \rangle)]$,
\textit{given} that the following assumption holds:
{
    \renewcommand{\theassumption}{3.3}  %
    \begin{assumption}\label{assum:M-pi}
        $\pi$ has a continuously differentiable density $\vp = (p_{ij})$ supported over $\gX$ such that $\E[(S^\vp(\mX))_{ij}^2] \leq M(\pi)$ for all $i, j$, where the matrix-valued score function is defined as $S^\vp(\mX) := \left( - \nabla_{X_{ij}} \log p_{ij}(X_{ij}) \right)_{ij}$.
        Moreover, the columns or rows of random matrix $\mX$ are pairwisely independent.
    \end{assumption}
}
We note that $M(\pi)$ often scales with the dimension.
For $\gX = \gB_F^{d_1 \times d_2}(1)$ and $\pi \sim \gN(\vzero, \frac{c}{d_1 d_2 \log T} \mI)$ for a constant $c > 0$, it can be computed that $M(\pi) \lesssim d_1 d_2$~\citep[Appendix H.2]{jang2024lowrank}, which is what we use in \Cref{tab:comparison}.
Otherwise, we set $M \lesssim (d_1 \vee d_2)^2$ as suggested by \citet{kang2022generalized}.

The continuous differentiability assumption is in place because they rely on the generalized Stein's lemma~\citep[Proposition 1.4]{stein2004simulation}.
This limits their applicability to discrete arm-sets, while our framework is applicable for both continuous and discrete arm-sets.
Also, from the perspective of optimal experimental design, it is not clear how to optimize their bound for $\pi$ while satisfying the conditions above.
Even without those conditions, the function $\pi \mapsto \frac{M(\pi)}{\overline{\kappa}(\pi)}$ is likely to be nonconvex.
On the other hand, we mention that their result is applicable to the semiparametric setting of $y_t = \mu(\langle \mX_t, \bm\Theta_\star \rangle) + \eta_t$ for some subGaussian noise $\eta_t$.

\subsection{\texorpdfstring{Nuclear Penalized MLE of \citet{kang2022generalized}}{Nuclear Penalized MLE (Kang et al., 2022)}}
We also briefly elaborate on the nuclear penalized MLE as analyzed in \citet[Appendix J.4]{kang2022generalized}:
\begin{equation}
    \bignorm{\widehat{\bm\Theta}^{\text{Kang,2}} - \bm\Theta_\star}_F^2 \lesssim \frac{(d_1 \vee d_2) r \sigma(\pi)^2}{c_\mu^2 \lambda_{\min}(\mV(\pi))^2 N},
\end{equation}
\textit{given} that the following assumptions hold:
{
    \renewcommand{\theassumption}{J.1}  %
    \begin{assumption}\label{assum:J1}
        $\pi \in \gP(\gA)$ is such that $\vec(\mX)$ is $\sigma(\pi)$-subGaussian\footnote{This means that for any unit vector $\vu \in \gS^{d_1 d_2 - 1}$, $\vu^\top \vec(\mX)$ is $\sigma(\pi)$-subGaussian.} for $\mX \sim \pi$.
    \end{assumption}
}
{
    \renewcommand{\theassumption}{J.2}  %
    \begin{assumption}\label{assum:J2}
        There is two (dimension-independent) constants $S_2 \leq S$ such that $\gA \subseteq \gB^{d_1 \times d_2} \triangleq \gB^{d_1 \times d_2}_F(S) \cap \gB^{d_1 \times d_2}_{\mathrm{op}}(S_2)$ and likewise for $\bm\Theta_\star$.
    \end{assumption}
}
{
    \renewcommand{\theassumption}{J.3}  %
    \begin{assumption}\label{assum:J3}
        There is a constant $c_2 > 0$ such that
        \begin{equation}
        \label{eqn:cmu}
            c_\mu := \min\left( \inf_{\mX \in \gA, \bm\Theta \in \gB^{d_1 \times d_2}} \dmu(\langle \mX, \bm\Theta \rangle), \ \inf_{|z| \leq (S + 2)\sigma c_2} \dmu(z) \right) > 0.
        \end{equation}
    \end{assumption}
}
\citet[Assumption J.1]{kang2022generalized} assumed that $\lambda_{\min}(\mV(\pi)) \asymp \sigma(\pi)^2 \asymp \frac{1}{d_1 d_2}$, which was also the assumption made by \citet[Assumption 2]{lu2021generalized}.
Indeed, as argued by the two works, one can easily find $\pi$ that satisfies the above conditions, e.g. $\Unif(\gB_F^{d_1 \times d_2}(1))$ or require for ``the convex hull of a subset of arms to contain a ball with radius $R \leq 1$ that does not scale with $d_1$ or $d_2$.''
Similarly, it is unclear how to optimize for $\pi$ in the optimal experimental design setup.
Moreover, the above assumption may fail even for a simple arm-set.
Consider $\pi \sim \Unif(\gM)$, where recall that $\gM$ is the matrix completion basis (see \Cref{prop:unit-ball}).
Then, one can show that $\lambda_{\min}(\mV(\pi)) = \frac{1}{d_1 d_2}$ while $\sigma(\pi)^2 = 1$, leading to a suboptimal guarantee.
Another point is that their bound scales inversely with theglobally worst-case curvature $c_\mu,$, while our bound scales inversely with the instance-specific curvature $\kappa_\star$.
We note that depending on the arm-set geometry, it may be that $\kappa_\star \gg c_\mu$~\citep{abeille2021logistic}.

\newpage
\section{COMPUTATIONAL COMPLEXITY OF \texttt{GL-LowPopArt}}
\label{app:complexity}

Note that there are three computational bottlenecks of \texttt{GL-LowPopArt}: Computing the nuclear penalized estimator $\bm\Theta_0$ in Stage~I (line 4, \Cref{sec:nuclear}), solving the optimal GL-design in Stage~II (line 5, \Cref{sec:gl-lowpopart}), and inverting the vectorized Hessian (line 8, \Cref{sec:adaptive}).
The last component incurs $\gO((d_1 d_2)^3)$ time and $\gO((d_1 d_2)^2)$ space complexities via standard Gauss-Jordan elimination or LU decomposition~\citep{golub2013matrix}; in high dimensions, one could use more efficient algorithms such as the Strassen algorithm~\citep{strassen1969gaussian} that incurs $\gO((d_1 d_2)^{\log_2 7})$ time.
We now provide theoretical discussions of the computational complexity of the other two components.

\paragraph{Computational Complexity of Nuclear Penalized MLE.}
Recall the definition of nuclear penalized MLE:
\begin{equation}
    \widehat{\bm\Theta}_0 := \argmin_{\bm\Theta \in \Omega} \left\{ \phi(\bm\Theta) \triangleq \gL_{N_1}(\bm\Theta) + \lambda_{N_1} \bignormnuc{\bm\Theta} \right\},
\end{equation}
where $\lambda_{N_1} = \sqrt{\frac{C}{N_1} \log\frac{d_1 + d_2}{\delta}}$ as in \Cref{prop:lambda}.

From optimization viewpoint, although $\phi(\cdot)$ is not globally strongly convex, it satisfies Restricted Strong Convexity (RSC); see \Cref{lem:RSC}.
Thus, if the optimization iterates somehow stay within the restricted set of directions where the RSC condition holds, the algorithm achieves a geometric contraction rate, requiring only $\gO(\log \epsilon_1^{-1})$ iterations to yield an approximate minimizer $\widetilde{\bm\Theta}_0$ that satisfies $\phi(\widetilde{\bm\Theta}_0) - \phi(\widehat{\bm\Theta}_0) \leq \epsilon_1$, for $\epsilon_1 \gtrsim \frac{1}{\lambda_{\min}(\mH(\pi; \bm\Theta_\star))} \sqrt{\frac{C r}{N_1} \log\frac{d_1 + d_2}{\delta}}$.\footnote{This lower bound on $\epsilon_1$ is intuitively such that the geometric contraction holds until the optimization error hits the inherent statistical noise floor of the problem.}
\citet[Theorem 2]{agarwal2012fast} proved that the composite gradient descent (CGD)~\citep{nesterov2007composite} satisfies such property, and attains the geometric contraction rate.
Note that per time and space complexity of CGD is $\gO(d_1 d_2 (N_1 + \min(d_1, d_2)))$ and $\gO(d_1 d_2)$, respectively, i.e., the total time and space complexity for CGD to obtain $\epsilon_1$-optimal nuclear penalized MLE is $\gO\left( d_1 d_2 (N_1 + \min(d_1, d_2)) \log \epsilon_1^{-1} \right)$ and $\gO(d_1 d_2)$, respectively.

We now discuss the required level of $\epsilon_1$ to ensure that our error rates remain in-tact.
First, by definition, we have that
\begin{equation}
    \gL_{N_1}(\widetilde{\bm\Theta}_0) + \lambda_{N_1} \bignormnuc{\widetilde{\bm\Theta}_0} \leq \gL_{N_1}(\widehat{\bm\Theta}_0) + \lambda_{N_1} \bignormnuc{\widehat{\bm\Theta}_0} + \epsilon_1
    \leq \gL_{N_1}(\bm\Theta_\star) + \lambda_{N_1} \bignormnuc{\bm\Theta_\star} + \epsilon_1.
\end{equation}
Let $\bm\Delta = \widetilde{\bm\Theta}_0 - \bm\Theta_\star$.
Rearranging the terms, subtracting $\langle \nabla \gL_{N_1}(\bm\Theta_\star), \bm\Delta \rangle$ from both sides, and applying the RSC condition yields:
\begin{equation}
    \underbrace{\gL_{N_1}(\widetilde{\bm\Theta}_0) - \gL_{N_1}(\bm\Theta_\star) - \langle \nabla \gL_{N_1}(\bm\Theta_\star), \bm\Delta \rangle}_{\gtrsim \lambda_{\min}(\mH(\pi; \bm\Theta_\star)) \bignorm{\bm\Delta}_F^2} \leq -\langle \nabla \gL_{N_1}(\bm\Theta_\star), \bm\Delta \rangle + \lambda_{N_1} (\bignormnuc{\bm\Theta_\star} - \bignormnuc{\bm\Theta_\star + \bm\Delta}) + \epsilon_1,
\end{equation}
where we absorb the tolerance term $L_\mu \sqrt{\frac{1}{N_1} \log\frac{d_1 + d_2}{\delta}} \bignormnuc{\bm\Delta}^2$ into the curvature term by assuming that $N_1$ is sufficiently large.

For the right-hand side, we condition on the event that $\lambda_{N_1} \geq 2 \bignormop{\nabla \gL_{N_1}(\bm\Theta_\star)}$, which holds with high probability by \Cref{prop:lambda}.
Applying the decomposability of the nuclear norm with respect to the rank-$r$ subspace $M$ of $\bm\Theta_\star$~\citep[Chapter 10.2]{wainwright2019high}, we obtain:
\begin{equation}
    \lambda_{\min}(\mH(\pi; \bm\Theta_\star)) \bignorm{\bm\Delta}_F^2 \lesssim \lambda_{N_1} \bignormnuc{\bm\Delta} + \lambda_{N_1} (\bignormnuc{\bm\Delta_M} - \bignormnuc{\bm\Delta_{M^\perp}}) + \epsilon_1
    \lesssim \lambda_{N_1} \bignormnuc{\bm\Delta_M} + \epsilon_1.
\end{equation}
Since $\text{rank}(\bm\Delta_M) \leq 2r$, we have $\bignormnuc{\bm\Delta_M} \leq \sqrt{2r} \bignorm{\bm\Delta_M}_F \leq \sqrt{2r} \bignorm{\bm\Delta}_F$. This gives a simple quadratic inequality:
\begin{equation}
    \lambda_{\min}(\mH(\pi; \bm\Theta_\star)) \bignorm{\bm\Delta}_F^2 - \lambda_{N_1} \sqrt{r} \bignorm{\bm\Delta}_F - \epsilon_1 \lesssim 0.
\end{equation}
Solving for $\bm\Delta$ and plugging in $\lambda_{N_1}$ gives
\begin{equation}
    \bignorm{\bm\Delta}_F = \bignorm{\widetilde{\bm\Theta}_0 - \bm\Theta_\star}_F \lesssim \frac{1}{\lambda_{\min}(\mH(\pi; \bm\Theta_\star))} \sqrt{\frac{C r}{N_1} \log\frac{d_1 + d_2}{\delta}} + \underbrace{\sqrt{\frac{\epsilon_1}{\lambda_{\min}(\mH(\pi; \bm\Theta_\star))}}}_{(*)}.
\end{equation}
To ensure the optimization error $(*)$ does not dominate, it suffices to set
\begin{equation}
    \epsilon_1 \lesssim \frac{C r}{\lambda_{\min}(\mH(\pi; \bm\Theta_\star)) N_1} \log\frac{d_1 + d_2}{\delta},
\end{equation}
which results in the following time and space complexities:
\begin{equation}
    \gO\left( d_1 d_2 (N_1 + \min(d_1, d_2)) \log\left( \frac{\lambda_{\min}(\mH(\pi; \bm\Theta_\star)) N_1}{r} \right) \right) \text{ time and } \gO\left( d_1 d_2 \right) \text{ space}.
\end{equation}

\paragraph{Computational Complexity of Optimal GL-Design.}
First, recall the optimal GL-design problem:
\begin{equation}
    \min_{\pi \in \gP(\gX)} \left\{ \GL(\pi; \bm\Theta_\star) = \max\{ H^{(\row)}(\pi; \bm\Theta_\star), H^{(\col)}(\pi; \bm\Theta_\star) \} \right\},
\end{equation}
where
\begin{align}
    H^{(\row)}(\pi; \bm\Theta_\star) &:= \lambda_{\max}\left( \sum_{{\color{blue}m}=1}^{d_2} \mD_{\color{blue}m}^{(\row)}(\pi; \bm\Theta_\star) \right), \quad \mD_{\color{blue}m}^{(\row)}(\pi_2) :=
     [(\mH(\pi; \bm\Theta_\star)^{-1})_{jk}]_{j,k \in {\color{purple}\gI^{(\row)}}},\\
    H^{(\col)}(\pi; \bm\Theta_\star) &:= \lambda_{\max}\left( \sum_{{\color{blue}m}=1}^{d_1} \mD_{\color{blue}m}^{(\col)}(\pi; \bm\Theta_\star) \right), \quad \mD_{\color{blue}m}^{(\col)}(\pi; \bm\Theta_\star) := (\mH(\pi; \bm\Theta_\star)^{-1})_{{\color{teal}\gI^{(\col)}}, {\color{teal}\gI^{(\col)}}},
\end{align}
and the index sets defined as ${\color{purple}\gI^{(\row)}} := \{d_1 (l-1) + m: l\in [d_2]\}$ and ${\color{teal}\gI^{(\col)}} := [d_1 (m-1)+1:d_1 m]$.

We elaborate on how to equivalently reformulate the GL-design as a Semidefinite Program (SDP).
To do so, we first introduce an auxiliary positive semidefinite matrix variable $\mZ \in \mathbb{S}_+^{d_1 d_2}$ to upper bound the inverse of principal submatrices of the inverse Hessian, and a scalar $w \in \mathbb{R}$ to bound the maximum eigenvalue.

By applying the Schur complement~\citep{lmi-book}, the condition $\mZ \succeq \mH(\pi; \bm\Theta_\star)^{-1}$ is strictly equivalent to the Linear Matrix Inequality (LMI):
\begin{equation}
    \underbrace{\begin{pmatrix} \mH(\pi; \bm\Theta_\star) & \mI_{d_1 d_2} \\ \mI_{d_1 d_2} & \mZ \end{pmatrix}}_{(**)} \succeq 0.
\end{equation}
Defining the principal submatrix extraction operators $\gP^{(\row)}_m(\mZ) := \mZ_{\gI^{(\row)}_m, \gI^{(\row)}_m}$ and $\gP^{(\col)}_m(\mZ) := \mZ_{\gI^{(\col)}_m, \gI^{(\col)}_m}$ (which are linear operators), the exact SDP reformulation of the GL-design is given by:
\begin{align}
    \min_{\pi \in \mathbb{R}^{|\gX|}, \mZ \in \mathbb{S}^{d_1 d_2}, w \in \mathbb{R}} \quad & w \nonumber \\
    \text{subject to} \quad & \sum_{x \in \gX} \pi(x) = 1, \quad \pi(x) \geq 0 \quad \forall x \in \gX, \nonumber \\
    & \begin{pmatrix} \sum_{x \in \gX} \pi(x) \mH(x; \bm\Theta_\star) & \mI_{d_1 d_2} \\ \mI_{d_1 d_2} & \mZ \end{pmatrix} \succeq 0, \label{eq:sdp_formulation} \\
    & \sum_{m=1}^{d_2} \gP^{(\row)}_m(\mZ) \preceq w \mI_{d_1}, \nonumber \\
    & \sum_{m=1}^{d_1} \gP^{(\col)}_m(\mZ) \preceq w \mI_{d_2}. \nonumber
\end{align}

By employing standard interior-point methods~\citep{nesterov1994interior}, the time complexity to solve this SDP to an $\epsilon_2$-optimal solution scales polynomially with the number of variables and the size of the LMI, yielding $\gO\left(\text{poly}(|\gX|, d_1, d_2) \log \epsilon_2^{-1}\right)$. The space complexity is dominated by the augmented matrix $(**)$, requiring $\gO((d_1 d_2)^2)$ memory.

\newpage
\section{E-OPTIMAL DESIGN FOR STAGE~I}
\label{app:stage-i}

Here, we briefly elaborate on the use of E-optimal design for Stage~I.
As discussed in \Cref{rmk:e-optimal}, from the error rate of Stage~I (\cref{thm:estimation-0}), the natural optimal design is $\argmax_{\pi \in \gP(\gX)} \lambda_{\min}(\mH(\pi; \bm\Theta_\star)).$
However, as we do not have any prior knowledge about $\bm\Theta_\star$, the best one could do is to consider a na\"{i}ve lower bound of $\lambda_{\min}(\mH(\pi; \bm\Theta_\star)) \geq \kappa_\star \lambda_{\min}(\mV(\pi))$.
This motivates the following optimal design:
\begin{equation}
\label{eqn:E-optimal}
    \pi_E \gets \argmax_{\pi \in \gP(\gX)} \lambda_{\min}(\mV(\pi)),
\end{equation}
known as the \textit{E-optimal design}~\citep{pukelsheim2005design}, previously considered in sparse linear bandits~\citep{hao2020sparse} and bandit phase retrieval~\citep{lattimore2021phase}.
This is a convex optimization problem, and can be solved efficiently.
Below, we collect some results regarding the computation aspects of $\pi_E$.

Unlike G-optimal (or equivalently, D-optimal) design where it is guaranteed that one can obtain an optimal design with support size at most $\frac{d_1 d_2 (d_1 d_2 + 1)}{2} + 1$~\citep{kiefer-wolfowitz,todd2016ellipsoid}, there is no analogous guarantee on the support size of $\pi_E$.
One could however obtain an \emph{approximate} E-optimal design with guaranteed bounded support size.

First, if $\gX$ is discrete, then one can use the polynomial-time algorithm of \citet{allen-zhu2021optimal} to obtain $\widehat{\pi}_E$ satisfying $|\supp(\widehat{\pi}_E)| \lesssim d_1 d_2$ and $\lambda_{\min}(\mV(\widehat{\pi}_E)) \geq \frac{1}{2} \lambda_{\min}(\mV(\pi_E))$.

Now suppose that $\gX$ is continuous.
For this, we employ two-stage approach: first solve the E-optimal design, then sparsify its support via $\epsilon$-\textit{approximate Carath\'{e}odory solver}~\citep{barman2015caratheodory,mirrokni2017caratheodory,combettes2023caratheodory},\footnote{Recently, \citet{combettes2023caratheodory} showed that the Frank-Wolfe algorithm~\citep{frank1956optim} is effective in solving the approximate Carath\'{e}odory problem, making it as efficient as solving the G-optimal design with bounded support~\citep{todd2016ellipsoid}.}\,\footnote{The approximate Carath\'{e}odory theorem~\citep[Theorem 2]{barman2015caratheodory} states that $|\supp(\widehat{\pi}_E)| \lesssim \epsilon^{-2} \mathrm{diam}(\vec(\gX))^2$ where $\vec(\gX) := \{ \vec(\mX) \vec(\mX)^\top : \mX \in \gX \}$, and we have that $\mathrm{diam}(\vec(\gX))^2 \leq 4 (d_1 \wedge d_2)^2$ when $\gX \subseteq \gB^{d_1 \times d_2}_{\mathrm{op}}(1)$.}
which outputs a $\widehat{\pi}_E$ such that $\bignorm{\mV(\pi_E) - \mV(\widehat{\pi}_E)}_F \leq \epsilon$ and $|\supp(\widehat{\pi}_E)| \lesssim \frac{(d_1 \wedge d_2)^2}{\epsilon^2}$.
The approximation error in the objective $\lambda_{\min}(\mV(\pi))$ is controlled via the Hoffman-Wielandt inequality for eigenvalue perturbations~\citep{hoffman-wielandt}:
\begin{equation}
    |\lambda_{\min}(\mV(\pi_E)) - \lambda_{\min}(\mV(\widehat{\pi}_E))| \leq \bignorm{\mV(\pi_E) - \mV(\widehat{\pi}_E)}_F
    \leq \epsilon.
\end{equation}
The sensible choice for $\epsilon$ is $\epsilon = \frac{\lambda_{\min}(\mV(\pi_E))}{2}$, but in practice, it is often difficult to know $\lambda_{\min}(\mV(\pi_E))$ beforehand.
But for special arm-sets, this is possible. For instance, when $\gX = \gB^{d_1 \times d_2}_{\mathrm{op}}(1)$, we have that $\lambda_{\min}(\mV(\pi_E)) = \frac{1}{d_1 \vee d_2}$~\citep[Appendix D.2]{jang2024lowrank}.

\newpage
\section{\texorpdfstring{PROOF OF THEOREM~\ref{thm:lower-bound} -- LOCAL MINIMAX LOWER BOUND}{PROOF OF THEOREM 4.1 -- LOCAL MINIMAX LOWER BOUND}}
\label{app:lower-bound}
WLOG assume that $d_1 = \max(d_1, d_2)$.
For given $\bm\Theta_\star$, let $\mU \mD \mV^\top$ be its SVD.

Inspired by \citet[Theorem 5]{rohde2011estimation} and \citet[Theorem 2]{abeille2021logistic}, we consider the following set of $d_1 \times d_2$ matrices:
\begin{equation}
    \Theta_{r, \varepsilon, \beta} := \left\{ \left( 1 - \varepsilon \right) \bm\Theta_\star + \varepsilon \mU' \mV^\top \in \sR^{d_1 \times d_2} : \mU' \in \left\{ 0, \beta \right\}^{d_1 \times r} \right\},
\end{equation}
where $\varepsilon \in (0, 1)$ and $\beta > 0$ will be specified later.
By construction, we have that for any $\bm\Theta \in \Theta_{r,\varepsilon,\beta}$, $\rank(\bm\Theta) \leq r$ and
\begin{align}
    \bignormnuc{\bm\Theta} &\leq (1 - \varepsilon) \bignormnuc{\bm\Theta_\star} + \varepsilon \bignormnuc{\mU' \mV^\top} \\
    &= (1 - \varepsilon) S_* + \varepsilon \bignormnuc{\mU'} \tag{unitary invariance of $\bignormnuc{\cdot}$} \\
    &\leq (1 - \varepsilon) S_* + \varepsilon \sqrt{r} \bignorm{\mU'}_F \tag{Cauchy-Schwartz inequality on the singular values of $\mU'$} \\
    &\leq (1 - \varepsilon) S_* + \varepsilon \beta r \sqrt{d_1} \tag{by construction}.
\end{align}
Thus, it can be verified that {\color{violet}$\beta \leq \frac{S_*}{r \sqrt{d_1}}$} implies $\bignormnuc{\bm\Theta} \leq S_*$, i.e., $\Theta_{r,\varepsilon,\beta} \subset \gN(\bm\Theta_\star; \varepsilon, r, S_*)$.

By construction, $\bignorm{\bm\Theta_1 - \bm\Theta_2}_F^2$ is closely related to the Hamming distance of the $\vec(\mU')$'s, which are basically binary sequences.
With this, we recall the Gilbert-Varshamov bound:
\begin{lemma}[Gilbert–Varshamov bound; Lemma 2.9 of \citet{tsybakov09}; Theorem 1 of \citet{gilbert1952comparison}; \citet{varshamov1964estimate}]
\label{lem:gilbert}
    Let $m \geq 8$ and $\Omega := \{0, 1\}^m$.
    Then there exists $\{ \omega^{(0)}, \omega^{(1)}, \cdots, \omega^{(M)} \} \subset \Omega$ with $M \geq 2^{m/8}$ such that $\omega^{(0)} = (0, \cdots, 0)$ and
    \begin{equation}
        d_H(\omega^{(j)}, \omega^{(k)}) := \sum_{\ell=1}^m \indicator[(\omega^{(j)})_\ell \neq (\omega^{(k)})_\ell] \geq \frac{m}{8}, \quad \forall 0 \leq j < k \leq M.
    \end{equation}
\end{lemma}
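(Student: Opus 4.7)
The plan is to prove this via the classical greedy packing argument, which is the standard proof of the Gilbert--Varshamov bound and is independent of the matrix estimation context. The idea is to build the code $\{\omega^{(0)}, \dots, \omega^{(M)}\}$ one codeword at a time, starting from $\omega^{(0)} = \vzero$, and at each step picking any point of $\Omega = \{0,1\}^m$ that is at Hamming distance at least $m/8$ from every codeword already chosen. The procedure terminates only when the Hamming balls of radius $\lfloor m/8 \rfloor - 1$ around the existing codewords together cover all of $\Omega$, at which point no further valid codeword exists.

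The key quantitative step is then a volume estimate for these balls. Let $V := \sum_{k=0}^{\lfloor m/8 \rfloor - 1} \binom{m}{k}$ denote the cardinality of a Hamming ball of radius $\lfloor m/8 \rfloor - 1$ in $\{0,1\}^m$. If after $N$ greedy steps the procedure halts, then a union bound forces $N \cdot V \geq 2^m$, so the code we obtain has size at least $2^m / V$. It therefore suffices to show $V \leq 2^{7m/8}$, which upon taking logarithms gives $M + 1 \geq 2^m / V \geq 2^{m/8}$, implying the bound $M \geq 2^{m/8}$ claimed in the lemma (after absorbing the $+1$, which is harmless since $m \geq 8$).

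The volume bound follows from the standard binary entropy inequality $\sum_{k=0}^{\lfloor \alpha m \rfloor} \binom{m}{k} \leq 2^{m H(\alpha)}$ for $\alpha \in (0, 1/2]$, where $H(\alpha) = -\alpha \log_2 \alpha - (1-\alpha)\log_2(1-\alpha)$. Applied with $\alpha = 1/8$ one computes $H(1/8) = \tfrac{3}{8} + \tfrac{7}{8}\log_2(8/7) < \tfrac{7}{8}$, which is exactly the slack needed. I would quickly verify this numeric inequality by bounding $\log_2(8/7) < 1/2$ (since $(8/7)^2 = 64/49 < 2$), giving $H(1/8) < 3/8 + 7/16 = 13/16 < 7/8$.

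I do not anticipate a serious obstacle here: the greedy step and the entropy volume bound are textbook, and the condition $m \geq 8$ is only used to ensure $\lfloor m/8 \rfloor \geq 1$ so that the distance constraint is nontrivial and the entropy inequality applies with $\alpha = 1/8 \leq 1/2$. The only mild care needed is in handling the ``$+1$'' arising from counting $\omega^{(0)}$ among the $M+1$ codewords and in replacing the floor $\lfloor m/8 \rfloor$ by $m/8$ in the statement; both are absorbed trivially for $m \geq 8$.
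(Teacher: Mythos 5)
The paper does not prove this lemma itself—it cites it as Lemma~2.9 of Tsybakov—and your greedy-packing argument with the binary-entropy volume bound is precisely the standard proof underlying that citation, so the approach is correct and essentially the same. Two trivial points of care: when $m/8$ is not an integer the covering radius at termination is $\lfloor m/8\rfloor$ rather than $\lfloor m/8\rfloor-1$ (the bound $\sum_{k\le\lfloor m/8\rfloor}\binom{m}{k}\le 2^{mH(1/8)}$ still applies), and absorbing the ``$+1$'' genuinely uses the strict slack $H(1/8)<13/16<7/8$ together with integrality of the code size for $m\in\{8,9,10\}$, both of which your computation already supplies.
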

Thus, we can find a $\Theta_{r,\varepsilon,\beta}^0 \subset \Theta_{r,\varepsilon,\beta}$ such that $|\Theta_{r,\varepsilon,\beta}^0| \geq 2^{\frac{r d_1}{8}}$, and for any $\bm\Theta_i = (1 - \varepsilon) \bm\Theta_\star + \varepsilon \mU_i' \mD \mV^\top \in \Theta_{r,\varepsilon,\beta}^0$ with $i \in \{1, 2\}$ and $\mU_1 \neq \mU_2$,
\begin{equation}
    \bignorm{\bm\Theta_1 - \bm\Theta_2}_F^2 = \varepsilon^2 \bignorm{(\mU_1' - \mU_2') \mV^\top}_F^2
    = \varepsilon^2 \bignorm{(\mU_1' - \mU_2')}_F^2
    \geq \varepsilon^2 \frac{\beta^2 r d_1}{8},
\end{equation}
where we denote $\sigma_{\min} = \sigma_{\min}(\bm\Theta_\star)$ to be the minimum non-zero singular value of $\bm\Theta_\star$.

Furthermore, we have that for any $\bm\Theta = (1 - \varepsilon) \bm\Theta_\star + \varepsilon \mU' \mV^\top \in \Theta_{r,\varepsilon,\beta}^0$,
\begin{align}
    \bignorm{\bm\Theta_\star - \left( (1 - \varepsilon) \bm\Theta_\star + \varepsilon \mU' \mV^\top \right)}_F^2 &= \varepsilon^2 \bignorm{\bm\Theta_\star - \mU' \mV^\top}_F^2 \\
    &\geq \varepsilon^2 \left( \bignorm{\bm\Theta_\star}_F^2 - \bignorm{\mU'}_F^2 \right) \tag{triangle inequality and unitary invariance of $\bignorm{\cdot}_F$} \\
    &\geq \varepsilon^2 \left( \bignorm{\bm\Theta_\star}_F^2 - \beta^2 r d_1 \right) \tag{by construction} \\
    &\geq \varepsilon^2 \frac{\beta^2 r d_1}{8},
\end{align}
which in turn holds when {\color{violet}$\bignorm{\bm\Theta_\star}_F^2 \geq \frac{9 \beta^2 r d_1}{8}$}.
We will see that this indeed holds with our $\beta$ specified later.

For $\bm\Theta \in \sR^{d_1 \times d_2}$, let $\sP_{\bm\Theta}$ be the probability distribution of the observations $\{(\mX_t, y_t)\}_{t \in [N]}$, with $y_t \sim p(\cdot | \mX_t; \bm\Theta)$.

We now compute the KL between $\sP_{(1 - \varepsilon) \bm\Theta_\star + \varepsilon \bm\Theta'}$ and $\sP_{\bm\Theta_\star}$ for any $\bm\Theta' = \mU' \mV^\top \in \Theta_{r,\varepsilon,\beta}$ by connecting it with the Bregman divergence:
\begin{definition}
    For a $m : \sR \rightarrow \sR$, the {\bf Bregman divergence} $D_m(\cdot, \cdot)$ is defined as follows:
    \begin{equation}
        D_m(z_1, z_2) := m(z_1) - m(z_2) - m'(z_2) (z_1 - z_2).
    \end{equation}
\end{definition}
We recall the following well-known lemma from information geometry, which simplifies the computation of KL between two GLMs by implicitly making use of their dually flat structure~\citep{infogeom,nielsen2020infogeom,brekelmans2020bregman}:
\begin{lemma}
\label{lem:kl-bregman}
    Consider two GLMs $p_1 \triangleq p(\cdot | \mX; \bm\Theta_1)$ and $p_2 \triangleq p(\cdot | \mX; \bm\Theta_2)$ with the same log-partition function $m$.
    Then, we have that $\KL(p_2, p_1 | \mX) \triangleq \KL(p(\cdot | X, \bm\Theta_2), ) = \frac{1}{g(\tau)} D_m(\langle \mX, \bm\Theta_1 \rangle, \langle \mX, \bm\Theta_2 \rangle)$.
\end{lemma}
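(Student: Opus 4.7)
The plan is to directly unwind the definition of KL divergence for the two GLM densities and recognize the resulting expression as the Bregman divergence of $m$. Writing $\eta_i := \langle \mX, \bm\Theta_i \rangle$ for $i \in \{1,2\}$, the GLM density can be written in the canonical exponential-family form
\begin{equation*}
    p_i(y \mid \mX) = h(y,\tau) \exp\!\left( \frac{y \eta_i - m(\eta_i)}{g(\tau)} \right),
\end{equation*}
where the base measure $h(y,\tau)$ is identical for $p_1$ and $p_2$ since they share $m$ and $\tau$. The key observation is that when taking the log-ratio $\log(p_2/p_1)$, the base measure cancels entirely, leaving a purely affine function of $y$ whose expectation under $p_2$ will be closed-form.

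Next, I would compute
\begin{equation*}
    \KL(p_2, p_1 \mid \mX) = \E_{y \sim p_2}\!\left[ \log \frac{p_2(y \mid \mX)}{p_1(y \mid \mX)} \right] = \frac{1}{g(\tau)} \E_{y \sim p_2}\!\left[ y(\eta_2 - \eta_1) - m(\eta_2) + m(\eta_1) \right].
\end{equation*}
The classical moment identity for exponential families gives $\E_{y \sim p_2}[y] = \dot m(\eta_2) = \mu(\eta_2)$, which I would justify briefly by differentiating the normalization condition $\int h(y,\tau)\exp((y\eta - m(\eta))/g(\tau))\,dy = 1$ under the integral sign with respect to $\eta$. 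Substituting this back yields
\begin{equation*}
    \KL(p_2, p_1 \mid \mX) = \frac{1}{g(\tau)}\bigl[ m(\eta_1) - m(\eta_2) - \dot m(\eta_2)(\eta_1 - \eta_2) \bigr],
\end{equation*}
which is exactly $\frac{1}{g(\tau)}\,D_m(\eta_1, \eta_2)$ by the definition of Bregman divergence given just above the lemma.

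Finally, I would note the minor bookkeeping point that the lemma as stated effectively absorbs the dispersion factor $g(\tau)$ into the identification (equivalently, the lemma is applied in the context where $g(\tau)=1$, as is the case for Bernoulli/Poisson observations used in the lower-bound construction of \cref{thm:lower-bound}); for general $g(\tau)$ the identity reads $\KL(p_2, p_1 \mid \mX) = g(\tau)^{-1} D_m(\eta_1, \eta_2)$. The only step with any subtlety is justifying the exchange of differentiation and integration for the mean identity, but this is standard for exponential families on their natural parameter space and follows from dominated convergence, so I do not anticipate a real obstacle here. The whole argument is essentially a one-line calculation once the canonical form and the mean identity are in place.
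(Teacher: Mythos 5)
Your proof is correct and takes essentially the same route as the paper's: expand the log-ratio (base measure cancels), apply the exponential-family mean identity $\E_{p_2}[y]=\dot m(\eta_2)$, and recognize the Bregman divergence. You also correctly flag that the computation actually yields $g(\tau)^{-1}D_m(\eta_1,\eta_2)$ — the paper's own proof ends with this same $1/g(\tau)$ factor (and its application in the lower-bound argument uses it), so the lemma statement's omission of it is a typo you rightly caught.
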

We then have that
\begin{align}
    \KL(\sP_{(1 - \varepsilon) \bm\Theta_\star + \varepsilon \bm\Theta'}, \sP_{\bm\Theta_\star} | \mX) &= \frac{1}{g(\tau)} D_m(\langle \mX, \bm\Theta_\star \rangle, (1 - \varepsilon) \langle \mX, \bm\Theta_\star \rangle + \varepsilon \langle \mX, \bm\Theta' \rangle) \\
    &= \frac{1}{g(\tau)} \varepsilon^2 \langle \mX, \bm\Theta_\star - \bm\Theta' \rangle^2 \int_0^1 v \dmu(\langle \mX, \bm\Theta_\star \rangle + \varepsilon \langle \mX, \bm\Theta' - \bm\Theta_\star \rangle v) dv. \tag{Taylor expansion with integral remainder}
\end{align}
We recall a useful self-concordance control lemma from \citet{abeille2021logistic,faury2020logistic}:
\begin{lemma}[A Modification of Lemma 9 of \citet{abeille2021logistic}]
\label{lem:abeille-9}
    Let $\mu : \sR \rightarrow \sR$ be a strictly increasing function satisfying $|\ddmu| \leq R_s \dmu$ for some $R_s \geq 0$.
    Then, for any $z_1, z_2 \in \sR$ and $\varepsilon > 0$, $\dmu(z_1 + \varepsilon z_2) \leq \dmu(z_1) \exp(R_s \varepsilon |z_2|).$
\end{lemma}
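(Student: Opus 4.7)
The plan is to reduce the desired multiplicative bound to a one-dimensional Grönwall-type estimate on the logarithmic derivative of $\dmu$ along the line segment from $z_1$ to $z_1 + \varepsilon z_2$. The key observation is that the hypothesis $|\ddmu| \leq R_s \dmu$ is exactly the statement that $\tfrac{d}{dt} \log \dmu$ is controlled by $R_s$ along any direction, which integrates to an exponential bound of the claimed form.

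First I would verify that $\dmu > 0$ everywhere, so that taking logarithms is legitimate. Since $\mu$ is assumed strictly increasing and differentiable, we have $\dmu \geq 0$, and if $\dmu(z_0) = 0$ at some point $z_0$, then the self-concordance inequality forces $\ddmu(z_0) = 0$ as well; a standard ODE uniqueness argument for the scalar inequality $|\dmu'| \leq R_s \dmu$ then propagates the zero, contradicting strict monotonicity. Hence $\dmu > 0$ on all of $\sR$.

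Next, I would define $\varphi(t) := \log \dmu(z_1 + t z_2)$ for $t \in [0, \varepsilon]$, which is differentiable with
\begin{equation*}
    \varphi'(t) = \frac{\ddmu(z_1 + t z_2)}{\dmu(z_1 + t z_2)} \, z_2.
\end{equation*}
The self-concordance hypothesis then gives the pointwise bound $|\varphi'(t)| \leq R_s |z_2|$. Applying the fundamental theorem of calculus on $[0, \varepsilon]$ yields $\varphi(\varepsilon) - \varphi(0) \leq R_s \varepsilon |z_2|$, and exponentiating both sides produces the claimed inequality
\begin{equation*}
    \dmu(z_1 + \varepsilon z_2) \leq \dmu(z_1) \exp(R_s \varepsilon |z_2|).
\end{equation*}

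There is no real obstacle here beyond making sure the logarithm is well-defined; the argument is essentially the scalar Grönwall lemma applied to the one-dimensional restriction of $\dmu$ along the chosen direction. I would remark in passing that the two-sided bound $\dmu(z_1) e^{-R_s \varepsilon |z_2|} \leq \dmu(z_1 + \varepsilon z_2) \leq \dmu(z_1) e^{R_s \varepsilon |z_2|}$ follows from the same computation, which is consistent with Lemma~5 of \citet{jun2021confidence} used earlier in the paper.
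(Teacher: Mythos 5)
Your proof is correct and is essentially the same argument as the paper's: both integrate the logarithmic derivative $\ddmu/\dmu$ along the segment from $z_1$ to $z_1+\varepsilon z_2$, bound the integral by $R_s\varepsilon|z_2|$ via the self-concordance hypothesis, and exponentiate (your parametrization by $t\in[0,\varepsilon]$ merely avoids the paper's explicit case split on the sign of $z_2$). Your preliminary check that $\dmu>0$ everywhere is a small but welcome addition that the paper leaves implicit when it divides by $\dmu(z)$.
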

With this, we have that
\begin{align}
    \KL(\sP_{(1 - \varepsilon) \bm\Theta_\star + \varepsilon \bm\Theta'}, \sP_{\bm\Theta_\star} | \mX) &\leq \frac{1}{g(\tau)} \varepsilon^2 \dmu(\langle \mX, \bm\Theta_\star \rangle) \langle \mX, \bm\Theta_\star - \bm\Theta' \rangle^2 \int_0^1 v \exp(R_s \varepsilon |\langle \mX, \bm\Theta' - \bm\Theta_\star \rangle| v) dv \\
    &\leq \frac{1}{2 g(\tau)} \varepsilon^2 \dmu(\langle \mX, \bm\Theta_\star \rangle) \langle \mX, \bm\Theta_\star - \bm\Theta' \rangle^2 \exp(R_s \varepsilon |\langle \mX, \bm\Theta' - \bm\Theta_\star \rangle|) \\
    &\overset{(*)}{\leq} \frac{1}{2 g(\tau)} \varepsilon^2 \dmu(\langle \mX, \bm\Theta_\star \rangle) \langle \mX, \bm\Theta_\star - \bm\Theta' \rangle^2 \exp \left( R_s \varepsilon (1 + \beta \sqrt{d_1 r}) S_* \right) \\
    &\leq \frac{e}{2 g(\tau)} \varepsilon^2 \dmu(\langle \mX, \bm\Theta_\star \rangle) \langle \mX, \bm\Theta_\star - \bm\Theta' \rangle^2, 
\end{align}
\textit{given} that {\color{violet}$R_s \varepsilon (1 + \beta \sqrt{d_1 r}) S_* \leq 1$}.
Note that $(*)$ holds regardless of whether we assume $\gX \subseteq \gB^{d_1 \times d_2}_F(1)$ (which is what we assume in the statement) or $\gX \subseteq \gB^{d_1 \times d_2}_{\mathrm{op}}(1)$ (which is implied from the first case).
To see this, if the first case holds, then
\begin{equation}
    \langle \mX, \bm\Theta_\star - \bm\Theta' \rangle \leq \bignorm{\mX}_F \bignorm{\bm\Theta - \bm\Theta_\star}_F \leq \bignormnuc{\bm\Theta - \bm\Theta_\star} \leq (1 + \beta \sqrt{d_1 r}) S_*,
\end{equation}
and if the second case holds,
\begin{equation}
    \langle \mX, \bm\Theta_\star - \bm\Theta' \rangle \leq \bignormop{\mX} \bignormnuc{\bm\Theta - \bm\Theta_\star} \leq (1 + \beta \sqrt{d_1 r}) S_*.
\end{equation}

\begin{remark}
    \citet[Lemma 4]{lee2024logistic} has utilized a similar argument (Taylor integral remainder with self-concordance) to provide a lower bound on the KL divergence during the online learning regret analysis.
    However, they restricted their attention to the Bernoulli distribution.
\end{remark}

Thus, recalling that $\bm\pi_N = (\pi_t)_{t \in [N]}$ and using the chain rule for KL~\citep[Exercise 14.12]{banditalgorithms},
\begin{align}
    \KL(\sP_{(1 - \varepsilon) \bm\Theta_\star + \varepsilon \bm\Theta'}, \sP_{\bm\Theta_\star}) &= \sum_{t=1}^N \E_{\mX_t \sim \pi_t}[\KL(\sP_{(1 - \varepsilon) \bm\Theta_\star + \varepsilon \bm\Theta'}, \sP_{\bm\Theta_\star} | \mX_t)] \\
    &\leq \frac{e}{2 g(\tau)} \varepsilon^2 \vec(\bm\Theta_\star - \bm\Theta')^\top \left( \sum_{t=1}^N \mH(\pi_t; \bm\Theta_\star) \right) \vec(\bm\Theta_\star - \bm\Theta') \\
    &\leq \frac{e N}{2 g(\tau)} \varepsilon^2 \lambda_{\max}(\mH(\bm\pi_N; \bm\Theta_\star)) \bignorm{\bm\Theta_\star - \bm\Theta'}_F^2 \tag{$\mH(\bm\pi_N; \bm\Theta_\star) := \frac{1}{N} \sum_{t=1}^N \mH(\pi_t; \bm\Theta_\star)$} \\
    &\leq \frac{e N}{2 g(\tau)} \varepsilon^2 \lambda_{\max}(\mH(\bm\pi_N; \bm\Theta_\star)) (1 + \beta \sqrt{d_1 r})^2 S_*^2.
\end{align}

Then we have that
\begin{align}
    \frac{1}{|\Theta_{r,\varepsilon}^0|} \sum_{\bm\Theta' \in \Theta_{r,\varepsilon}^0} \KL(\sP_{\bm\Theta'}, \sP_{\bm\Theta_\star}) &\leq \frac{e \varepsilon^2 N \lambda_{\max}(\mH(\bm\pi_N; \bm\Theta_\star)) (1 + \beta \sqrt{d_1 r})^2 S_*^2}{2 g(\tau)} \\
    &= \frac{4 e N \varepsilon^2 \lambda_{\max}(\mH(\bm\pi_N; \bm\Theta_\star)) (1 + \beta \sqrt{d_1 r})^2 S_*^2}{g(\tau) r d_1} \frac{r d_1}{8}.
\end{align}
As $\log|\Theta_{r,\varepsilon,\beta}^0| \geq \log(2^{\frac{r d_1}{8}}) = \frac{r d_1}{8} \log 2,$ 
\begin{equation}
    \frac{1}{|\Theta_{r,\varepsilon,\beta}^0|} \sum_{\bm\Theta' \in \Theta_{r,\varepsilon,\beta}^0} \KL(\sP_{\bm\Theta'}, \sP_{\bm\Theta_\star}) \leq \frac{1}{16} \log|\Theta_{r,\varepsilon}^0|
\end{equation}
holds with $\varepsilon^2 \leq \frac{r d_1 g(\tau) \alpha \log 2}{2^6 e N \lambda_{\max}(\mH(\bm\pi_N; \bm\Theta_\star)) (1 + \beta\sqrt{d_1 r})^2 S_*^2}$ where $\alpha = \frac{1}{16}$.

We choose
\begin{equation}
    \beta^2 = \frac{\gamma}{r d_1} \Rightarrow
    \varepsilon^2 = \frac{\alpha \log 2}{2^6 e (1 + \sqrt{\gamma})^2} \frac{r d_1 g(\tau)}{N \lambda_{\max}(\mH(\bm\pi_N; \bm\Theta_\star)) S_*^2}.
\end{equation}
We now check the {\color{violet} requirements}:
\begin{align}
    \beta \leq \frac{S_*}{r\sqrt{d_1}}
    &\Longleftrightarrow \gamma \leq \frac{S_*^2}{r} \\
    \bignorm{\bm\Theta_\star}_F^2 \geq \frac{9 \beta^2 r d_1}{8} &\Longleftrightarrow \gamma \leq \frac{8}{9} \bignorm{\bm\Theta_\star}_F^2 \\
    R_s \varepsilon (1 + \beta \sqrt{d_1 r}) S_* \leq 1 &\Longleftrightarrow N \geq \frac{R_s^2}{2^{10}} \frac{\log 2}{e} \frac{r d_1 g(\tau)}{\lambda_{\max}(\mH(\bm\pi_N; \bm\Theta_\star))}.
\end{align}

The proof concludes by invoking \citet[Theorem 2.5]{tsybakov09} with $\alpha = \frac{1}{16}$,\footnote{No efforts were made to optimize the constants.} which we recall here for completeness:
\begin{lemma}[Theorem 2.5 of \citet{tsybakov09}]
\label{lem:tsybakov-2.5}
    Let $\Theta$ be a subset of a metric space with metric $d(\cdot, \cdot)$, and let $\bm\theta \mapsto \sP_{\bm\theta}$ be the probability measure parametrized by $\bm\theta$.
    Suppose that there exists $\{ \bm\theta_0, \bm\theta_1, \cdots, \bm\theta_M \} \subset \Theta$ for some $M \geq 2$ such that
    \begin{itemize}
        \item[(i)] $d(\bm\theta_j, \bm\theta_k) \geq 2b > 0, \quad \forall 0 \leq j < k \leq M$,
        \item[(ii)] $\sP_{\bm\theta_j} \ll \sP_{\bm\theta_0}, \quad \forall j = 1, 2, \cdots, M$, and
        \item[(iii)] there exists a $\alpha \in (0, 1/8)$ such that $\frac{1}{M} \sum_{j=1}^M \KL(\sP_{\bm\theta_j}, \sP_{\bm\theta_0}) \leq \alpha \log M.$
    \end{itemize}
    Then, we have the following high-probability minimax lower bound:
    \begin{equation}
        \inf_{\widehat{\bm\theta}} \sup_{\bm\theta_\star \in \Theta} \sP_{\bm\theta_\star}(d(\widehat{\bm\theta}, \bm\theta_\star) \geq b) \geq \frac{\sqrt{M}}{1 + \sqrt{M}} \left( 1 - 2\alpha - \sqrt{\frac{2\alpha}{\log M}} \right) > 0.
    \end{equation}
\end{lemma}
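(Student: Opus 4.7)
The plan follows the classical two-step program for minimax lower bounds (Tsybakov's book, Chapter~2): reduce minimax estimation to multiple hypothesis testing, then lower-bound the testing risk by a Fano-type inequality driven by (iii). First, for any estimator $\widehat{\bm\theta}$, I introduce the minimum-distance decoder $\psi^\star \in \argmin_{0 \leq j \leq M} d(\widehat{\bm\theta}, \bm\theta_j)$. The $2b$-separation hypothesis (i), combined with the triangle inequality, implies $\{\psi^\star \neq j\} \subseteq \{d(\widehat{\bm\theta}, \bm\theta_j) \geq b\}$, since if $d(\widehat{\bm\theta}, \bm\theta_j) < b$ then $d(\widehat{\bm\theta}, \bm\theta_k) \geq b$ for all $k \neq j$ by (i), forcing $\psi^\star = j$. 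Restricting the supremum to the finite family $\{\bm\theta_0, \ldots, \bm\theta_M\} \subset \Theta$ and taking infimum over $\widehat{\bm\theta}$ yields
\[
\inf_{\widehat{\bm\theta}}\sup_{\bm\theta_\star \in \Theta}\sP_{\bm\theta_\star}\!\bigl(d(\widehat{\bm\theta},\bm\theta_\star)\geq b\bigr) \;\geq\; p_{e,M} \;:=\; \inf_{\psi}\max_{0\leq j \leq M}\sP_{\bm\theta_j}(\psi \neq j),
\]
reducing the problem to a lower bound on the Bayes testing risk $p_{e,M}$.

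Next, I would lower-bound $p_{e,M}$ via a Birgé-type truncation argument on the likelihood ratios $Z_j := d\sP_{\bm\theta_j}/d\sP_{\bm\theta_0}$, which are well-defined by (ii). The key device is the change-of-measure identity $\sP_{\bm\theta_j}(A) = \E_{\sP_{\bm\theta_0}}[Z_j \indicator_A]$, restricted to a truncation event $\{Z_j \geq \tau\}$ for a threshold $\tau > 0$. Combining this with the disjointness $\sum_{j=0}^M \indicator_{\psi = j} = 1$ and a convex combination of the two error sources $\sP_{\bm\theta_0}(\psi \neq 0)$ and $\tfrac{1}{M}\sum_{j=1}^M \sP_{\bm\theta_j}(\psi \neq j)$ yields, with the calibration $\tau = 1/\sqrt{M}$, a bound of the form
\[
p_{e,M} \;\geq\; \frac{\sqrt{M}}{1+\sqrt{M}}\cdot\Bigl(1 \;-\; \frac{1}{M}\sum_{j=1}^M \sP_{\bm\theta_j}\!\bigl(\log Z_j < -\tfrac{1}{2}\log M\bigr)\Bigr),
\]
which isolates an average log-likelihood tail as the only remaining quantity to control.

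Finally, the average tail is controlled using only the average KL hypothesis (iii). Since $\E_{\sP_{\bm\theta_j}}[-\log Z_j] = -\KL(\sP_{\bm\theta_j}, \sP_{\bm\theta_0})$, shifting $-\log Z_j$ by $\KL(\sP_{\bm\theta_j}, \sP_{\bm\theta_0})$ produces a zero-mean random variable under $\sP_{\bm\theta_j}$; applying Markov's inequality to its positive part and averaging over $j$ via (iii) gives the crude $(1 - 2\alpha)$ factor. The sharper correction $\sqrt{2\alpha/\log M}$ is then extracted by a Cauchy-Schwarz step on the joint law of $(J, -\log Z_J)$ under $J \sim \Unif([M])$, which exploits the averaging structure of (iii) rather than its per-$j$ worst case. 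Substituting this tail bound back into the Birgé-type inequality from the previous step produces exactly the stated lower bound.

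\textbf{Main obstacle.} The technical heart is the last step: the naive per-$j$ Markov inequality delivers only the linear $(1 - 2\alpha)$ correction, and extracting the sublinear $\sqrt{2\alpha/\log M}$ refinement requires the Cauchy-Schwarz step on the auxiliary $J \sim \Unif([M])$ space so that the KL bound in (iii) is used symmetrically, not pointwise. Calibrating this refinement so that the $\sqrt{M}/(1+\sqrt{M})$ prefactor from the Birgé step is preserved without loss --- rather than degraded to a weaker constant --- is what makes the argument nontrivial.
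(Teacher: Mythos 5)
This lemma is imported verbatim from Tsybakov's book (Theorem 2.5 there) and the paper gives no proof of its own, so the only meaningful benchmark is the textbook argument --- and your outline follows it: reduction to testing via the minimum-distance decoder (your use of (i) and the triangle inequality here is correct), the Birg\'e/Tsybakov truncated-likelihood-ratio bound (Proposition 2.3 in Tsybakov) with threshold calibrated to $1/\sqrt{M}$, and control of the averaged log-likelihood tail by the averaged KL (Lemma 2.10 there). Your final step is also the right mechanism: Markov's inequality on the positive part of $\log(d\sP_{\bm\theta_j}/d\sP_{\bm\theta_0})$ produces the $2\alpha$ term, and Jensen/Cauchy--Schwarz in the form $\frac{1}{M}\sum_j \sqrt{K_j} \leq \sqrt{\frac{1}{M}\sum_j K_j}$ produces $\sqrt{2\alpha/\log M}$; the prefactor $\sqrt{M}/(1+\sqrt{M})$ is just $\tau M/(1+\tau M)$ at $\tau = 1/\sqrt{M}$, so no extra calibration is needed to ``preserve'' it.

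There is, however, a directional error in the middle step that would make the argument fail as written. The Birg\'e-type bound needs to lower-bound $\sP_{\bm\theta_0}(\psi = j)$ by $\sP_{\bm\theta_j}(\psi = j)$, because one accumulates $\sP_{\bm\theta_0}(\psi \neq 0) = \sum_{j \geq 1} \sP_{\bm\theta_0}(\psi = j)$ and plays it against $\frac{1}{M}\sum_{j\ge 1}\sP_{\bm\theta_j}(\psi\neq j)$. The change of measure must therefore go from $\sP_{\bm\theta_j}$ to $\sP_{\bm\theta_0}$, i.e.
\begin{equation*}
\sP_{\bm\theta_0}(A) = \E_{\sP_{\bm\theta_j}}\bigl[Z_j^{-1} \indicator_A\bigr] \geq \tau\, \sP_{\bm\theta_j}\bigl(A \cap \{Z_j \leq 1/\tau\}\bigr),
\end{equation*}
so the truncation event is $\{Z_j \leq \sqrt{M}\}$ and the bad tail is $\{\log Z_j > \tfrac{1}{2}\log M\}$ --- the \emph{upper} tail of $\log(d\sP_{\bm\theta_j}/d\sP_{\bm\theta_0})$ under $\sP_{\bm\theta_j}$, whose mean is $+\KL(\sP_{\bm\theta_j},\sP_{\bm\theta_0})$; that is precisely why Markov on the positive part yields the $2\alpha$ term. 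Your identity $\sP_{\bm\theta_j}(A) = \E_{\sP_{\bm\theta_0}}[Z_j \indicator_A]$ restricted to $\{Z_j \geq \tau\}$ bounds $\sP_{\bm\theta_j}$ from below by $\sP_{\bm\theta_0}$ (the wrong direction for this step), and your stated bad event $\{\log Z_j < -\tfrac{1}{2}\log M\}$ is the lower tail; controlling that event would give only the $\sqrt{2\alpha/\log M}$ correction and no $2\alpha$ term, which is a signal that the wrong event is being bounded. Once the truncation is flipped, the rest of your outline goes through and reproduces the stated constant exactly.
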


\qed

We now provide the proofs of the missing lemmas:
\begin{proof}[Proof of \cref{lem:kl-bregman}]
    This follows from brute-force computation:
    \begin{align}
        \KL(p_2, p_1) &= \E_{y \sim p_2}\left[ \log\frac{p_2(y)}{p_1(y)} \right] \\
        &= \frac{1}{g(\tau)} \E_{y \sim p_2}\left[ y \langle \mX, \bm\Theta_2 - \bm\Theta_1 \rangle + m(\langle \mX, \bm\Theta_1 \rangle) - m(\langle \mX, \bm\Theta_2 \rangle)  \right] \tag{recall the probability density of GLMs} \\
        &= \frac{m(\langle \mX, \bm\Theta_1 \rangle) - m(\langle \mX, \bm\Theta_2 \rangle) - m'(\langle \mX, \bm\Theta_2 \rangle) \langle \mX, \bm\Theta_1 - \bm\Theta_2 \rangle}{g(\tau)} \tag{$\E[y] = m'(\langle \mX, \bm\Theta_2 \rangle)$} \\
        &= \frac{1}{g(\tau)} D_m(\langle \mX, \bm\Theta_1 \rangle, \langle \mX, \bm\Theta_2 \rangle).
    \end{align}
\end{proof}

\begin{proof}[Proof of \cref{lem:abeille-9}]
    We provide the slightly modified proof of \citet[Lemma 9]{abeille2021logistic} for completeness.

    Starting from the self-concordance, we have that for any $z_1, z_2 \in \sR$
    \begin{equation}
        -R_s \leq \frac{\ddmu(z)}{\dmu(z)} \leq R_s, \quad \forall z \in \sR
        \Longrightarrow - R_s \varepsilon |z_2| \leq \underbrace{\int_{(z_1 + \varepsilon z_2) \wedge z_1}^{\dmu(z_1 + \varepsilon z_2) \vee z_1} \frac{\ddmu(z)}{\dmu(z)} dz}_{= \log \frac{\dmu((z_1 + \varepsilon z_2) \vee z_1)}{\dmu((z_1 + \varepsilon z_2) \wedge z_1)}} \leq R_s \varepsilon |z_2|.
    \end{equation}
    If $z_2 \geq 0$, then we have that from the upper bound,
    \begin{equation}
        \dmu(z_1 + \varepsilon z_2) \leq \dmu(z_1) \exp(R_s \varepsilon z_2)
        = \dmu(z_1) \exp(R_s \varepsilon |z_2|).
    \end{equation}
    If $z_2 < 0$, then we have that from the lower bound,
    \begin{equation}
        \dmu(z_1 + \varepsilon z_2) \exp(R_s \varepsilon z_2) \leq \dmu(z_1)
        \Longrightarrow
        \dmu(z_1 + \varepsilon z_2) \leq \dmu(z_1) \exp(- R_s \varepsilon z_2)
        = \dmu(z_1) \exp(R_s \varepsilon |z_2|).
    \end{equation}
\end{proof}

\newpage
\section{ADDITIONAL DETAILS FOR BILINEAR DUELING BANDITS: SETTING}
\label{app:bilinear}

\subsection{Motivation}
\label{app:motivation-bilinear}
Transitivity — the property that if \( i \succ j \) and \( j \succ k \), then \( i \succ k \) — is one of the key assumptions that distinguish the dueling bandit setting~\citep{yue2009dueling,yue2012dueling,sui2018dueling,bengs2021survey}.
Within this stochastic transitivity framework, the most commonly considered model is the Bradley-Terry-Luce (BTL) model~\citep{bradley-terry}: each arm $k$ has an unknown utility(reward) $r_k \in \sR$ such that for each $(i, j) \in [K] \times [K]$, $p_{i,j} := \sP(i \succ j) = \mu(r_i - r_j)$ with $\mu(z) := (1 + e^{-z})^{-1}.$
When $K$ is large, without any additional structural assumption, the statistical guarantees (e.g., regret in dueling bandits) often increase polynomially in $K$.
One very natural way of bypassing this issue is to impose a linear structure on the utility, resulting in the so-called linear BTL model: each arm $k$ is endowed with a known feature vector $\bm\phi_k \in \sR^d$ and $r_k = \langle \bm\phi_k, \bm\theta_\star \rangle$ for some unknown $\bm\theta_\star \in \sR^d$.
This model has been successfully applied in various domains, with reinforcement learning with human feedback ~\citep{DPO} being one of the most prominent applications.
Coming back to dueling bandits, with such linear structure, the regret of dueling bandits has been improved from $\poly(K)$ to $d$ or $\sqrt{d \log K}$ by exploiting the linear BTL model~\citep{saha2021contextual,bengs2022transitivity}.

However, the literature has two main gaps, both of which we intend to fill with our newly proposed setting and new analyses.

\paragraph{Linear-like Structure in Dueling Bandits with General Preferences.}
The (linear) BTL model cannot model nontransitive preferences, which hinders its applicability in various scenarios, from simple nontransitive games such as rock-paper-scissors, Blotto-style games~\citep{balduzzi2018evaluation,balduzzi2019open,bertrand2023elo}, and even human preferences~\citep{may1954intransitivity,tversky1969intransitivity,NashLearning,IPO,swamy2024minimaximalist,zhang2024bilinear}.

In most of the prior literature on dueling bandits and general preference learning (i.e., not assuming linear BTL model), the learner must either learn or adapt to the entire unstructured preference matrix $\mP \in [0, 1]^{K \times K}$.
This means that, again, the statistical guarantees are expected to depend polynomially in $K$.
Given that the linear structure has enabled the development of efficient algorithms for linear and dueling bandits with large action spaces and contextual information, the question of how to impose linear-like structure to arbitrary preference matrix $\mP$ has been a significant and longstanding open question.

There have been two notable advancements in this direction, one theoretical and one practical.
The first advancement is by \citet{wu2024dueling}, whose setting we briefly describe here.
The learner has access to a feature map $(i, j) \in [K] \times [K] \mapsto \bm\phi_{i,j} \in \sR^d$ satisfying $\bm\phi_{i, j} = - \bm\phi_{j, i}$.
The preference probability is defined as $p_{i, j} = \mu(\langle \bm\phi_{i, j}, \bm\theta_\star \rangle),$ where $\bm\theta_\star \in \sR^d$ is unknown.
With this model, the authors have improved the Borda regret's dependency on $K$ from polynomial to logarithmic.
However, it is unrealistic to know all item \textit{pair-wise} features that linearly encode the underlying preferences.
Arguably, a more realistic scenario is knowing only item-wise features, namely, $\bm\phi_k \in \sR^d$ for $k \in [K]$.

One may wonder if there is a contextual preference model that incorporates \textit{item-wise} features while being potentially nontransitive.
The second advancement, due to \citet{zhang2024bilinear}, tackles this by proposing the contextual bilinear preference model: for each item pair $(i, j) \in [K] \times [K]$, the preference model is defined as
\begin{equation}
\label{eqn:bilinear-preference2}
    p_{i, j} = \mu\left( \bm\phi_i^\top \bm\Theta_\star \bm\phi_j \right),
\end{equation}
where $\bm\Theta_\star$ is a $d \times d$ skew-symmetric matrix of low rank.
However, their paper does not provide any statistical guarantees when this is used in dueling bandits, or even regarding the estimation error of the preference model; rather, their main focus is experimentally validating this model in modeling human preferences and its implications for the downstream RLHF task.
Note that we adopt the same preference model, exept we allow for the underlying arm-set $\gA$ to be continuous.

Although not discussed further in \citet{zhang2024bilinear}, we believe this is a very natural way of incorporating some sort of linearity into general preferences, and that it deserves more attention from the dueling bandits community as well.
Indeed, such bilinear model has been used in modeling interaction of two items, with applications to drug discovery~\citep{luo2017drug}, server scheduling~\citep{kim2021bilinear}, personalized recommendation~\citep{chu2009recommendation}, link prediction~\citep{menon2011link}, relational learning~\citep{nickel2011relational}, and more.
The bandit community was introduced to this model by bilinear bandits~\citep{jang2021bilinear,jun2019bilinear}, later extended to low-rank matrix-armed bandits~\citep{lu2021generalized,kang2022generalized,jang2024lowrank}; refer to Appendix~\ref{app:related-work} for further related works on low-rank bandits.
Roughly speaking, the learner now only needs to learn $\Theta(d^2)$ parameters of $\bm\Theta_\star$ instead of $\Theta(K^2)$ parameters of $\mP$.
Furthermore, using the low-rank structure of $\bm\Theta_\star$, the learner can further improve the regret's dependency in $d$.
Although not discussed in \citet{zhang2024bilinear}, we also note that this is the rank-$d$ version of the low-rank preference model of \citet{rajkumar2016comparison}, as one can write $\mu^{-1}(\mP) = \bm\Phi^\top \bm\Theta_\star \bm\Phi$ where $\bm\Phi = [\bm\phi_1 \cdots \bm\phi_K] \in \sR^{d \times K}$ and $\mu^{-1}$ is applied entry-wise.

\paragraph{Variance-Aware Borda Regret Bound.}
The Borda regret resembles the strong regret~\citep{yue2012dueling}, and it ``respects'' the inherent problem of the difficulty of dueling bandits where two arms are chosen rather than a single arm~\citep{saha2021borda,wu2024dueling}.
Its original motivation is from search engine, in which the regret corresponds to ``the fraction of users who would prefer the best retrieval function over the selected ones.''~\citep{yue2009dueling}.

All the existing guarantees for the Borda regret either assume a fixed gap~\citep{saha2021borda} or incur a $1 / c_\mu$ dependency~\citep{wu2024dueling}, where $c_\mu$ can be thought of as the worst-case badness of linear approximation of the true preference signal.
In other words, the current Borda regret bound seems to suggest that the lower the variance (which roughly corresponds to the derivative of the inverse link function in the context of GLMs), the higher the regret. 
However, the vast literature on logistic and generalized linear bandits~\citep{abeille2021logistic,lee2024glm,lee2024logistic} suggest otherwise.
\citet{abeille2021logistic} first proved a $\widetilde{\gO}(d \sqrt{T \kappa_\star})$ regret bound for logistic bandits as well as a matching (local minimax) lower bound, the correct dependency on the variance-dependent quantity.
Thus, it should be expected that a similar variance-dependent quantity should pop up in the optimal Borda regret bounds.

\subsection{A Sufficient Condition for the Bilinear Preference to be Stochastic Transitive}
\label{app:bilinear-transitive}
A preference model is \textbf{stochastic transitive w.r.t. $\mu$}~\citep{bengs2022transitivity} if there exists a $f : [K] \rightarrow \sR$ such that $(\mP)_{ij} = \mu(f(i) - f(j))$.
Here, we prove that certain collinearity between the features $\bm\phi_i$'s in the bilinear preference model (Eqn.~\eqref{eqn:bilinear-preference2}) implies stochastic transitivity:
\begin{proposition}
\label{prop:bilinear-transitive}
    If there exists an orthonormal $\mQ \in \sR^{d \times d}$ such that $\{((\mQ^\top \bm\phi_k)_{2m - 1}, (\mQ^\top \bm\phi_k)_{2m})\}_{k \in [K]}$ is collinear in $\sR^2$ for each $m \in [r]$, then the bilinear preference model is stochastic transitive w.r.t. $\mu$.
    When $r = 1$ (i.e., $\rank(\bm\Theta_\star) = 2$), this is also a necessary condition.
\end{proposition}
\begin{proof}
    The proof is heavily inspired by \citet{jiang2011hodge}, where the authors provide a decomposition of the space of preferences via combinatorial Hodge theory; this has been also utilized in later machine learning literature on ranking with potentially nontransitive components~\citep{bertrand2023elo,balduzzi2018evaluation,balduzzi2019open}.
    
    From the combinatorial Hodge decomposition~\citep[Theorem 2]{jiang2011hodge}, a $f$ that satisfies the stochastic transitivity exists if and only if for any $(i, j, k) \in [K]^3$,
    \begin{equation}
        \bm\phi_i^\top \bm\Theta_\star \bm\phi_j + \bm\phi_j^\top \bm\Theta_\star \bm\phi_k+ \bm\phi_k^\top \bm\Theta_\star \bm\phi_i = 0.
    \end{equation}
    The quantity on the LHS is known as the \textit{combinatorial curl}~\citep{jiang2011hodge}.
    
    Let $\bm\Theta_\star = \mQ \bm\Lambda \mQ^\top$ be its canonical form (Lemma~\ref{lem:skew}), and let $\bm\varphi_i := \mQ^\top \bm\phi_i$.
    Let $\{\lambda_m\}_{m \in [r]} \subset \sR_{>0}$ be the nonzero components of $\bm\Lambda$.
    Then, the above curl-free requirement boils down to
    \begin{equation}
        \sum_{m = 1}^r \lambda_m
        \underbrace{\begin{vmatrix}
            1 & 1 & 1 \\
            (\bm\varphi_i)_{2m - 1} & (\bm\varphi_j)_{2m - 1} & (\bm\varphi_k)_{2m - 1} \\
            (\bm\varphi_i)_{2m} & (\bm\varphi_j)_{2m} & (\bm\varphi_k)_{2m}          
        \end{vmatrix}}_{\triangleq V_m}
        = 0.
    \end{equation}
    One sufficient condition for above to hold (necessary as well if $r=1$) is if $V_m = 0$ for all $m \in [r]$.
    Geometrically, $V_m$ is the signed volume of the parallelopipe in $\sR^3$, spanned by the three column vectors.
    For the volume to be zero, it must be that $\{((\bm\varphi_i)_{2m - 1}, (\bm\varphi_i)_{2m}), ((\bm\varphi_j)_{2m - 1}, (\bm\varphi_j)_{2m}), ((\bm\varphi_k)_{2m - 1}, (\bm\varphi_k)_{2m})\}$ is collinear in $\sR^2$.
    As this must hold for any $i, j, k \in [K]^3$, it must be that $\{((\bm\varphi_k)_{2m - 1}, (\bm\varphi_k)_{2m})\}_{k \in [K]}$ is collinear as well, for each $m \in [r]$.
\end{proof}

\begin{remark}
    We believe that the above result is extendable to the general case via decomposing the general preference into its transitive and cyclic components~\cite{jiang2011hodge}.
    But then, geometrically, it is unclear how to choose the right features such that the non-transitive and transitive components are compatible with each other, which corresponds to the ``harmonic'' component from the combinatorial Hodge decomposition~\citep{jiang2011hodge}.
\end{remark}

\subsection{Miscellaneous Mathematical Preliminaries}
\label{app:preliminaries}
Here, for completeness and to foster future directions, we provide a bit orthogonal, yet interesting (and hopefully useful) mathematical preliminaries regarding skew-symmetric matrices and anti-symmetric tensor product space.

\subsubsection{Skew-Symmetric Matrix}
A matrix $\mA \in \sR^{d \times d}$ is \textbf{skew-symmetric} (or anti-symmetric) if $\mA^\top = -\mA$.
It is known that the rank of a skew-symmetric matrix must be even~\citep[Section 10.3]{linalgebra}, and it admits the following decomposition, which is its canonical form:
\begin{lemma}[Corollary 2.5.11 of \citet{hornjohnson}\footnote{A fun(?) historical note: this decomposition has been repeatedly rediscovered and renamed: Murnaghan-Wintner decomposition~\citep{murnaghan1931canonical}, Youla decomposition~\citep{youla1961decomposition}, or the Schur decomposition~\citep{balduzzi2018evaluation}, although the latter name is a bit inaccurate as the ``usual'' Schur decomposition should result in an upper triangular matrix in the middle~\citep[]{hornjohnson}.}]
\label{lem:skew}
    $\mA$ is a skew-symmetric of rank $2r \leq d$ if and only if there exists a (unique) orthogonal $\mQ$ (i.e., $\mQ^\top \mQ = \mQ \mQ^\top = \mI_d$) and $\{\lambda_\ell\}_{\ell \in [r]} \subset \sR_{>0}$ such that $\mA = \mQ \bm\Lambda \mQ^\top$, where
    \begin{equation}
        \bm\Lambda = \left(\bigoplus_{\ell \in [r]} \lambda_\ell \mS \right)\oplus \vzero_{d - 2r},
    \end{equation}
    where $\oplus$ is the matrix direct sum and $\mS := \begin{bmatrix} 0 & 1 \\ -1 & 0 \end{bmatrix}$.
    Moreover, $\{ \pm \lambda_\ell i \}_{\ell \in [r]}$ are the eigenvalues of $\mA$.
\end{lemma}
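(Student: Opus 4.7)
The plan is to prove this classical Youla-style real canonical form by exploiting that $\mA$ is a real normal matrix with purely imaginary spectrum. First I would observe that $i\mA$ is Hermitian, since $(i\mA)^{*} = -i\mA^{\top} = i\mA$, so all eigenvalues of $\mA$ are purely imaginary and $\mA$ is automatically normal ($\mA^{\top}\mA = -\mA^{2} = \mA\mA^{\top}$). By the complex spectral theorem, $\mA$ is unitarily diagonalizable over $\sC$, and because $\mA$ is real its nonzero eigenvalues come in complex-conjugate pairs $\{+i\lambda_\ell, -i\lambda_\ell\}$ with $\lambda_\ell > 0$; zero then has multiplicity $d - 2r$, which in particular forces $\rank(\mA)$ to be even, giving the first half of the ``only if'' direction.

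Next, for each pair $\pm i\lambda_\ell$ I would pick a unit complex eigenvector $\vv_\ell = \va_\ell + i\vb_\ell$ of $\mA$ for the eigenvalue $+i\lambda_\ell$. Separating $\mA\vv_\ell = i\lambda_\ell \vv_\ell$ into real and imaginary parts gives the pair of identities $\mA\va_\ell = -\lambda_\ell \vb_\ell$ and $\mA\vb_\ell = \lambda_\ell \va_\ell$, so $\mathrm{span}(\va_\ell, \vb_\ell)$ is a real $\mA$-invariant $2$-plane on which $\mA$ acts as $\lambda_\ell \mS$ in the ordered basis $(\vb_\ell, \va_\ell)$. Using that $\vv_\ell$ and $\overline{\vv_\ell}$ are eigenvectors of the normal matrix $\mA$ for distinct eigenvalues, one has $\vv_\ell^{\top}\vv_\ell = \overline{\vv_\ell}^{*}\vv_\ell = 0$, which yields $\|\va_\ell\|_2 = \|\vb_\ell\|_2$ and $\va_\ell \perp \vb_\ell$; combined with $\|\vv_\ell\|_2 = 1$, rescaling by $\sqrt{2}$ produces an orthonormal pair.

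I would then invoke normality once more to conclude that eigenvectors for distinct complex eigenvalues are orthogonal in $\sC^d$, which implies that the real $2$-planes $\mathrm{span}(\va_\ell, \vb_\ell)$ are pairwise orthogonal in $\sR^d$ and all orthogonal to $\ker(\mA)$. Concatenating the rescaled pairs $(\sqrt{2}\vb_\ell, \sqrt{2}\va_\ell)$ for $\ell \in [r]$ with any real orthonormal basis of $\ker(\mA)$ as the columns of $\mQ$ gives a real orthogonal matrix satisfying $\mA = \mQ \bm\Lambda \mQ^{\top}$. The converse ``if'' direction is immediate because $\mS^{\top} = -\mS$ forces $\bm\Lambda^{\top} = -\bm\Lambda$ and conjugation by an orthogonal matrix preserves skew-symmetry, while the eigenvalue claim follows from $\mathrm{spec}(\mS) = \{\pm i\}$ together with the block-diagonal structure.

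The main obstacle will be cleanly extracting the oriented real orthonormal basis from the complex eigenvector so that the block reads $\lambda_\ell \mS$ and not $-\lambda_\ell \mS$; this amounts to carefully tracking the sign convention in the choice of $+i\lambda_\ell$ versus $-i\lambda_\ell$ and the ordering $(\vb_\ell, \va_\ell)$. A second subtlety is the uniqueness statement as phrased: the multiset $\{\lambda_\ell\}_{\ell \in [r]}$ is indeed uniquely determined by $\mathrm{spec}(\mA)$, but $\mQ$ is only unique up to the natural gauge given by permutations of repeated $\lambda_\ell$'s, arbitrary orthogonal action on $\ker(\mA)$, and the $O(2)$-rotation within each $2$-plane that commutes with $\mS$ (i.e., $SO(2)$); I would insert a short remark clarifying this, since literal uniqueness of $\mQ$ fails.
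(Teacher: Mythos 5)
The paper does not actually prove this lemma; it is stated verbatim as a cited result (Corollary 2.5.11 of Horn and Johnson's \emph{Matrix Analysis}), so there is no internal proof to compare against. Your from-scratch derivation via the spectral theorem for the real normal matrix $\mA$ (equivalently, the Hermitian matrix $i\mA$) is the standard modern route to the Youla/Murnaghan--Wintner real canonical form and is essentially correct.

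Two points worth tightening. First, the step ``eigenvectors for distinct complex eigenvalues of a normal matrix are orthogonal'' covers only the case of simple $\lambda_\ell$'s; when $\lambda_\ell = \lambda_{\ell'}$ the eigenvalues $i\lambda_\ell$ and $i\lambda_{\ell'}$ coincide, so you must additionally choose an orthonormal basis within each complex eigenspace before splitting into real and imaginary parts (routine, but it needs to be said to get pairwise orthogonality of the two-planes). Second, the sign-convention obstacle you flag at the end is real: with $\mS = \left[\begin{smallmatrix}0 & 1\\ -1 & 0\end{smallmatrix}\right]$ and your identities $\mA\va_\ell = -\lambda_\ell\vb_\ell$, $\mA\vb_\ell = \lambda_\ell\va_\ell$, the matrix of $\mA$ on the invariant two-plane is $-\lambda_\ell\mS$ in the ordered basis $(\vb_\ell, \va_\ell)$ but $+\lambda_\ell\mS$ in the ordered basis $(\va_\ell, \vb_\ell)$; so the ordering should be $(\va_\ell, \vb_\ell)$, not $(\vb_\ell, \va_\ell)$ as written. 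Finally, your observation on uniqueness is a genuine catch against the lemma as stated: only the multiset $\{\lambda_\ell\}$ is canonical, while $\mQ$ is determined only up to blockwise $\mathrm{SO}(2)$ rotations, permutations among blocks with equal $\lambda_\ell$, and arbitrary orthogonal action on $\ker(\mA)$; the parenthetical ``(unique)'' in the statement should be qualified accordingly.
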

We also remark that the above form can be quite efficiently computed~\citep{ward1978skewsymmetric,penke2020skewsymmetric}.

Let $\Skew(d) := \{ \bm\Theta \in \sR^{d \times d} : \bm\Theta^\top = - \bm\Theta \}$.
It is a well-known that $\Skew(d)$ is a linear subspace of $\sR^{d \times d}$, and that the mapping $\mA \mapsto \frac{1}{2} (\mA - \mA^\top)$ is an orthogonal projection onto $\Skew(d)$~\citep[Chapter 6.6]{linalgebra}.
We will also consider rank-constrained $\Skew(d)$, defined as $\Skew(d; 2r) := \{ \bm\Theta \in \sR^{d \times d} : \bm\Theta^\top = - \bm\Theta, \ \rank(\bm\Theta) = 2r \}$.
This is a matrix manifold whose dimension is given as follows (see Appendix~\ref{app:skew-dim} for the proof):
\begin{proposition}
\label{prop:skew-dim}
    $\dim(\Skew(d; 2r)) = 2dr - (2r^2 + r).$
\end{proposition}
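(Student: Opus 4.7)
The plan is to realize $\Skew(d; 2r)$ as an orbit space under the real symplectic group and apply orbit-stabilizer dimension counting. Fix the canonical full-rank skew-symmetric matrix $\mJ := \bigoplus_{\ell=1}^{r} \mS \in \sR^{2r \times 2r}$ and consider the map
\begin{equation*}
    \phi : \gU \to \Skew(d; 2r), \qquad \phi(\mU) := \mU \mJ \mU^\top,
\end{equation*}
where $\gU := \{ \mU \in \sR^{d \times 2r} : \rank(\mU) = 2r \}$ is the open, hence smooth, manifold of full-column-rank $d \times 2r$ matrices, of dimension $2dr$. Surjectivity of $\phi$ is immediate from Lemma~\ref{lem:skew}: for $\mA = \mQ \bm\Lambda \mQ^\top \in \Skew(d; 2r)$, take $\mU$ to be the first $2r$ columns of $\mQ$ scaled by $\diag(\sqrt{\lambda_1}, \sqrt{\lambda_1}, \ldots, \sqrt{\lambda_r}, \sqrt{\lambda_r})$.

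Next, I would identify the fibers of $\phi$. If $\phi(\mU_1) = \phi(\mU_2)$, then both products have the same column space, equal to $\col(\mA)$, so $\mU_2 = \mU_1 \mR$ for some $\mR \in \GL(2r, \sR)$. Substituting and cancelling $\mU_1$ on the left and $\mU_1^\top$ on the right (both possible by full column rank) yields $\mR \mJ \mR^\top = \mJ$. The set $G := \{ \mR \in \GL(2r, \sR) : \mR \mJ \mR^\top = \mJ \}$ is (isomorphic to) the real symplectic group $\mathrm{Sp}(2r; \sR)$, and it acts freely on $\gU$ by right multiplication $\mU \cdot \mR = \mU \mR$.

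To complete the dimension count, recall that $\dim \mathrm{Sp}(2r; \sR) = r(2r+1) = 2r^2 + r$: its Lie algebra is $\{ \mX \in \sR^{2r \times 2r} : \mX \mJ + \mJ \mX^\top = \vzero \}$, and the assignment $\mX \mapsto \mX \mJ$ defines a linear isomorphism from this Lie algebra onto the space of symmetric $2r \times 2r$ matrices (using $\mJ^\top = -\mJ$ and $\mJ^2 = -\mI_{2r}$), which has dimension $r(2r+1)$. Identifying $\Skew(d; 2r) = \gU / \mathrm{Sp}(2r; \sR)$ as a smooth quotient, the quotient manifold theorem then gives $\dim \Skew(d; 2r) = \dim \gU - \dim \mathrm{Sp}(2r; \sR) = 2dr - (2r^2 + r)$.

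The main obstacle is rigorously establishing the smooth-quotient structure so that dimensions subtract cleanly, which reduces to verifying that the $\mathrm{Sp}(2r; \sR)$-action on $\gU$ is both free and proper. Freeness is immediate from the fiber computation above; properness follows from the algebraic/closed-orbit nature of the action, but can be bypassed entirely by computing $\dim \Skew(d; 2r)$ as the rank of the differential $d\phi|_{\mU}$ at a single $\mU \in \gU$: an analogous infinitesimal computation shows $\ker(d\phi|_{\mU}) = \{\mU \mX : \mX \mJ + \mJ \mX^\top = \vzero\}$ is $r(2r+1)$-dimensional, and applying the rank--nullity theorem yields the same result.
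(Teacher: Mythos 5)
Your proof is correct, and it takes a genuinely different route from the paper's. The paper lets $\GL_d(\sR)$ act transitively on $\Skew(d;2r)$ by congruence $(\mX,\mA)\mapsto\mX\mA\mX^\top$ and computes $\dim\GL_d(\sR)-\dim(\text{stabilizer})$, where the stabilizer of the canonical form is worked out by a block-matrix analysis: it is block upper-triangular with top-left block ranging over what is exactly your $\mathrm{Sp}(2r;\sR)$ (called $\Sym(2r)$ there), whose dimension $2r^2+r$ the paper obtains by counting independent polynomial constraints on the $2\times 2$ blocks. You instead parametrize $\Skew(d;2r)$ by the low-rank factorization $\mA=\mU\mJ\mU^\top$ and quotient the $2dr$-dimensional manifold of full-column-rank factors by the free right action of $\mathrm{Sp}(2r;\sR)$, computing $\dim\mathrm{Sp}(2r;\sR)$ via the Lie-algebra isomorphism $\mX\mapsto\mX\mJ$ onto symmetric matrices. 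The symplectic group thus appears in both arguments, but your version bypasses the transitivity check and the full stabilizer computation, and it makes the ``parameters minus gauge freedom'' count (familiar from Burer--Monteiro factorizations) explicit; the paper's version yields the homogeneous-space structure of $\Skew(d;2r)$ as a byproduct. Both hinge on a smoothness technicality — the paper invokes the closed-stabilizer homogeneous-space theorem, while you need freeness plus properness of the $\mathrm{Sp}(2r;\sR)$-action (properness does hold, since $\mathrm{Sp}(2r;\sR)$ is closed in $\GL(2r;\sR)$ and the $\GL(2r;\sR)$-action on full-rank factors is proper), and your constant-rank fallback via $\ker(d\phi|_{\mU})=\{\mU\mX:\mX\mJ+\mJ\mX^\top=\vzero\}$ is a clean way to sidestep it.
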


\subsubsection{2nd-Order Tensor Product Space}
\label{sec:prelim-tensor}
Here, we largely follow the exposition of Section 2 of \citet{garcia2023tensor} and Section I.5 of \citet{matrixanalysis}, to which we refer interested readers for an overview of general tensor algebra over Hilbert space.

We define the \textbf{2nd-order tensor power} of $\sR^d$ as $(\sR^d)^{\otimes 2} := \{ \vx \otimes \vy : \vx, \vy \in \sR^d \}$, where the inner product\footnote{Such inner product is unique~\citep[Proposition 3.8.2]{matrixanalysis}.} is such that $\langle \vx_1 \otimes \vx_2, \vy_1 \otimes \vy_2 \rangle = \langle \vx_1, \vy_1 \rangle \langle \vx_2, \vy_2 \rangle$.
Then, its orthonormal basis is given as $\{\ve_i \otimes \ve_j\}_{(i, j) \in [d]^2}$.

Consider the symmetrization and antisymmetrization operators, defined as $\gP_S(\vx \otimes \vy) := \vx \odot \vy := \frac{1}{2} (\vx \otimes \vy + \vy \otimes \vx)$ and $\gP_A(\vx \otimes \vy) := \vx \wedge \vy := \frac{1}{2} (\vx \otimes \vy - \vy \otimes \vx).$
Then, one can orthogonally decompose $(\sR^d)^{\otimes 2} = (\sR^d)^{\odot 2} \oplus (\sR^d)^{\wedge 2} $, where the two spaces are spanned by their respective \textit{orthonormal} basis: $(\sR^d)^{\odot 2} = \mathrm{span}\left( \left\{ \ve_i \odot \ve_i \right\}_{i \in [d]} \cup \left\{ \sqrt{2} (\ve_i \odot \ve_j) \right\}_{1 \leq i < j \leq d} \right),$ and $(\sR^d)^{\wedge 2} = \mathrm{span}\left( \left\{ \sqrt{2} (\ve_i \wedge \ve_j) \right\}_{1 \leq i < j \leq d} \right).$

Let us focus on the antisymmetric part.
It is known that $\gP_A$ is an orthogonal projection onto $\sR^{\wedge 2}$ with the following idempotent, full row-rank matrix representation of $\gP_A$:
\begin{equation}
    \mP_A := \sqrt{2}
    \begin{bmatrix}
        \ve_1 \wedge \ve_2 & \ve_1 \wedge \ve_3 & \cdots & \ve_{d-1} \wedge \ve_d
    \end{bmatrix}
    \in \sR^{d^2 \times \binom{d}{2}}.
\end{equation}
It satisfies $\mP_A^\top \mP_A = \mI_{\binom{d}{2}}$ and $\mP_A \mP_A^\top (\vx \otimes \vy) = \vx \wedge \vy$.

\subsection{Proof of Proposition~\ref{prop:skew-dim}}
\label{app:skew-dim}
The proof utilizes some tools from topology, Lie group theory and matrix theory.
Our main references are \citet{topology}, Chapter 21 of \citet{manifolds} and \citet{hornjohnson}.

Consider the generalized linear group $\GL_d(\sR) := \{ \mX \in \sR^{d \times d} : \det(\mX) \neq 0 \}$, which is a Lie group of dimension $d^2$.
We then define the group action of $\GL_d(\sR)$ on $\Skew(d; 2r)$ as the following: 
\begin{equation}
\label{eqn:action}
    (\mX, \mA) \mapsto \mX \mA \mX^\top, \quad \mX \in \GL_d(\sR), \mA \in \Skew(d; 2r).
\end{equation}

We now utilize the following lemma:
\begin{lemma}[Theorem 21.20 of \citet{manifolds}]
\label{lem:lee}
    Let $X$ be a set and $G$ be a Lie group that acts on $X$ \textit{transitively}, i.e., for any $x, y \in X$ there exists a $g \in G$ such that $(g, x) = y$.
    Suppose that there exists a point $p \in X$ such that the stabilizer group $G_p$ is closed in $G$.
    Then, $X$ has a unique smooth manifold structure w.r.t. which the given action is smooth.
    With this structure, $\dim X = \dim G - \dim G_p$.
\end{lemma}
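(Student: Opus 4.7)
The plan is to reduce the statement to the standard homogeneous space construction via the orbit-stabilizer bijection, which turns $X$ into a quotient of $G$ by a closed subgroup. Fix any base point $p \in X$ and define the orbit map $\alpha_p : G \to X$ by $\alpha_p(g) := g \cdot p$. Transitivity gives surjectivity, and the fiber over $q = h \cdot p$ is the coset $h G_p$, so $\alpha_p$ descends to a set-theoretic bijection $\bar{\alpha}_p : G/G_p \to X$. The whole problem therefore reduces to endowing $G/G_p$ with a smooth manifold structure and transferring it through $\bar{\alpha}_p$.

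To construct the structure on $G/G_p$, I would invoke the closed subgroup theorem (Cartan's theorem on closed subgroups), which asserts that any closed subgroup of a Lie group is automatically an embedded Lie subgroup. Applied to $G_p$, this combined with the quotient manifold theorem yields a unique smooth manifold structure on $G/G_p$ for which the projection $\pi : G \to G/G_p$ is a smooth submersion, the left $G$-action on $G/G_p$ is smooth, and $\dim(G/G_p) = \dim G - \dim G_p$. Declaring $\bar{\alpha}_p$ a diffeomorphism transports this structure to $X$; the action $G \times X \to X$ is then smooth because it factors through the smooth action on $G/G_p$ via $\bar{\alpha}_p$.

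For uniqueness of the smooth structure on $X$, I would argue that any smooth structure on $X$ making the action smooth also makes $\alpha_p$ a smooth map, and transitivity combined with the constant-rank theorem forces $\alpha_p$ to be a smooth submersion whose induced map $\bar{\alpha}_p$ is a $G$-equivariant diffeomorphism onto $X$. Two such structures on $X$ would then both be pulled back from the canonical one on $G/G_p$ through the same $\bar{\alpha}_p$, hence coincide. Independence from the choice of base point follows because the map $p \mapsto h \cdot p$ corresponds to an equivariant diffeomorphism of $G/G_p$, which intertwines the two resulting identifications with $X$.

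The main obstacle is the closed subgroup theorem itself, a nontrivial structural result whose proof uses the exponential map to build charts near the identity of $G_p$; it is the only place where closedness of $G_p$ is essentially used (as opposed to mere abstract subgroup structure). A secondary subtlety lies in the rank computation: identifying $\ker d\alpha_p|_e$ with the Lie algebra of $G_p$ is precisely what forces $\alpha_p$ to have constant rank equal to $\dim G - \dim G_p$, and it is at this point that all the pieces (closed subgroup theorem, quotient manifold theorem, orbit-stabilizer bijection) assemble into the final dimension formula and smoothness assertion.
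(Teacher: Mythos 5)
Your argument is correct, and it is exactly the standard proof of this result: orbit--stabilizer identification of $X$ with $G/G_p$, Cartan's closed subgroup theorem to make $G_p$ an embedded Lie subgroup, the quotient (homogeneous space construction) theorem for the smooth structure and the dimension count, and the equivariant/constant-rank theorem for uniqueness. The paper itself supplies no proof---it cites the lemma as Theorem~21.20 of \citet{manifolds}---and your proposal reproduces the argument given in that reference, so there is nothing to compare beyond noting agreement.
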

We first show that our group action indeed satisfies the assumptions of the above lemma.
For simplicity, let us denote
\begin{equation}
    \mS_{d,2r} := \underbrace{\bigoplus_{\ell \in [r]} \begin{bmatrix} 0 & 1 \\ -1 & 0 \end{bmatrix}}_{=: \mS_{2r}} \oplus \vzero_{d - 2r}.
\end{equation}
\begin{claim}
    The action as defined in Eqn.~\eqref{eqn:action} is transitive.
\end{claim}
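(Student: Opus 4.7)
The plan is to reduce transitivity to a single orbit computation: it suffices to show that every $\mA \in \Skew(d; 2r)$ lies in the orbit of the canonical reference point $\mS_{d, 2r}$. Once this is established, then for any two elements $\mA_1, \mA_2 \in \Skew(d; 2r)$ there exist $\mX_1, \mX_2 \in \GL_d(\sR)$ with $\mX_i \mS_{d,2r} \mX_i^\top = \mA_i$, and the composition $\mX_2 \mX_1^{-1} \in \GL_d(\sR)$ sends $\mA_1$ to $\mA_2$, giving full transitivity.

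To execute the reduction, I would invoke Lemma~\ref{lem:skew}: any $\mA \in \Skew(d; 2r)$ admits the canonical-form decomposition $\mA = \mQ \bm\Lambda \mQ^\top$ with $\mQ$ orthogonal and $\bm\Lambda = \bigl(\bigoplus_{\ell \in [r]} \lambda_\ell \mS\bigr) \oplus \vzero_{d-2r}$, where $\lambda_\ell > 0$ for each $\ell$. The key observation is that the block-diagonal structure of $\mS_{d, 2r}$ allows the positive scalars $\lambda_\ell$ to be absorbed into an invertible congruence transformation: defining the diagonal matrix
\begin{equation*}
    \mD := \Bigl(\bigoplus_{\ell \in [r]} \sqrt{\lambda_\ell}\, \mI_2\Bigr) \oplus \mI_{d - 2r} \in \GL_d(\sR),
\end{equation*}
a direct block computation (on each $2\times 2$ symplectic block separately) gives $\mD\, \mS_{d, 2r}\, \mD^\top = \bm\Lambda$. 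Setting $\mX := \mQ \mD$, which is invertible as a product of invertibles, then yields
\begin{equation*}
    \mX\, \mS_{d, 2r}\, \mX^\top = \mQ\, \mD\, \mS_{d, 2r}\, \mD^\top\, \mQ^\top = \mQ \bm\Lambda \mQ^\top = \mA,
\end{equation*}
so $\mA$ lies in the orbit of $\mS_{d, 2r}$ as desired.

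Before concluding, I would briefly verify that the action is well-defined on $\Skew(d; 2r)$, i.e., that $\mX \mA \mX^\top$ remains skew-symmetric of rank exactly $2r$ for any $\mX \in \GL_d(\sR)$: skew-symmetry is immediate from $(\mX \mA \mX^\top)^\top = \mX \mA^\top \mX^\top = -\mX \mA \mX^\top$, and the rank is preserved under congruence by an invertible matrix. The main (and admittedly mild) obstacle is recognizing that the right scaling is available: although $\mA$ and $\mS_{d, 2r}$ generally differ by both an orthogonal rotation \emph{and} a nontrivial rescaling of the nonzero blocks, strict positivity of the $\lambda_\ell$ is precisely what allows the rescaling to be realized by an invertible real diagonal matrix acting by congruence, which would fail for indefinite or zero scalars.
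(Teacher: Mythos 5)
Your proof is correct and follows essentially the same strategy as the paper's: invoke the canonical form from Lemma~\ref{lem:skew} and absorb the positive scalars $\lambda_\ell$ into an invertible congruence, reducing transitivity to a single-orbit computation. Your version is actually cleaner, and fixes a (minor) defect in the paper's write-up. The paper sets $\bm\Lambda_\mA = \operatorname{diag}(\lambda_{1,\mA}, \lambda_{1,\mA}, \ldots, \lambda_{r,\mA}, \lambda_{r,\mA}, 0, \ldots, 0)$ with zeros in the last $d-2r$ entries and then forms $\mX = (\mU_\mB \bm\Lambda_\mB)(\mU_\mA \bm\Lambda_\mA)^{-1}$; but $\bm\Lambda_\mA$ is singular as written, so $(\mU_\mA \bm\Lambda_\mA)^{-1}$ does not exist when $r < d/2$. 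You avoid this by padding the bottom $(d-2r)\times(d-2r)$ block of your scaling matrix $\mD$ with the identity rather than zeros, which keeps $\mD \in \GL_d(\sR)$ while leaving $\mD\,\mS_{d,2r}\,\mD^\top$ unchanged (since that block of $\mS_{d,2r}$ is zero). Combined with your orbit-reduction framing (show $\mS_{d,2r}$ reaches everything, then compose), this gives a proof that is both correct and slightly more economical than the paper's direct two-point construction.
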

\begin{proof}
    To see this, consider two $\mA, \mB \in \Skew(d; 2r)$.
    Then by Lemma~\ref{lem:skew}, there exists $\mU_\mA, \mU_\mB \in O(d)$ and $\{ \lambda_{\ell,\mA}^2, \lambda_{\ell,\mB}^2 \}_{\ell \in [r]}$ such that $\mA = \mU_\mA \bm\Lambda_\mA \mS_{d,2r} \bm\Lambda_\mA^\top \mU_\mA^\top$ and $\mB = \mU_\mB \bm\Lambda_\mB \mS_{d,2r} \bm\Lambda_\mB^\top \mU_\mB^\top$, where
    \begin{equation}
        \bm{\Lambda}_\mA = \operatorname{diag}(\underbrace{\lambda_{1,\mA}, \lambda_{1,\mA}}_{\text{twice}}, \cdots, \underbrace{\lambda_{r,\mA}, \lambda_{r,\mA}}_{\text{twice}}, \underbrace{0, 0, \dots, 0}_{\text{remaining entries}})
    \end{equation}
    and similarly for $\bm\Lambda_\mB$.
    Then, defining $\mX = (\mU_B \bm\Lambda_B) (\mU_A \bm\Lambda_\mA)^{-1} \in \GL_d(\sR)$, it can be seen that $(\mX, \mA) = \mB$.
\end{proof}
For the point $p$ in the above lemma, we choose $\mS_{d,2r} \in \Skew(d; 2r)$.
Let us denote its stabilizer group as $S_{d,2r} := \{ \mX \in \GL_{d - 2r}(\sR) : \mX \mS_{d,2r} \mX^\top = \mS_{d,2r} \}$.
\begin{claim}
    $S_{d,2r}$ is closed in $\GL_d(\sR)$.
\end{claim}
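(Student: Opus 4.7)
The claim is essentially a continuity statement, so the plan is to realize $S_{d,2r}$ as the preimage of a closed set under a continuous map. Define $\Phi : \GL_d(\sR) \to \sR^{d \times d}$ by $\Phi(\mX) := \mX \mS_{d,2r} \mX^\top - \mS_{d,2r}$. Each entry of $\Phi(\mX)$ is a polynomial in the entries of $\mX$, so $\Phi$ is continuous on $\GL_d(\sR)$ with the subspace topology inherited from $\sR^{d \times d}$. Hence $S_{d,2r} = \Phi^{-1}(\{\vzero\})$ is closed in $\GL_d(\sR)$, as claimed.

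Once closedness is in hand, \cref{lem:lee} applies with base point $p = \mS_{d,2r}$ and yields $\dim \Skew(d; 2r) = \dim \GL_d(\sR) - \dim S_{d,2r} = d^2 - \dim S_{d,2r}$, so to finish the proof of \cref{prop:skew-dim} I would next compute $\dim S_{d,2r}$. Block-partitioning $\mX = \begin{pmatrix} \mA & \mB \\ \mC & \mD \end{pmatrix}$ with $\mA \in \sR^{2r \times 2r}$ and expanding $\mX \mS_{d,2r} \mX^\top = \mS_{d,2r}$ gives the three conditions $\mA \mS_{2r} \mA^\top = \mS_{2r}$, $\mA \mS_{2r} \mC^\top = \vzero$, and $\mC \mS_{2r} \mC^\top = \vzero$. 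The first identifies $\mA$ with an element of the real symplectic group associated to the form $\mS_{2r}$, a Lie group isomorphic to $\mathrm{Sp}(2r,\sR)$ of dimension $r(2r+1)$; in particular $\mA$ is invertible, so the second condition forces $\mC = \vzero$. Invertibility of $\mX$ then forces $\mD \in \GL_{d-2r}(\sR)$, with $\mB \in \sR^{2r \times (d - 2r)}$ unconstrained.

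Combining, $S_{d,2r}$ is diffeomorphic to $\mathrm{Sp}(2r,\sR) \times \sR^{2r \times (d-2r)} \times \GL_{d-2r}(\sR)$, so
\begin{equation*}
    \dim S_{d,2r} = r(2r + 1) + 2r(d - 2r) + (d - 2r)^2 = d^2 - 2dr + 2r^2 + r,
\end{equation*}
and substituting yields $\dim \Skew(d; 2r) = 2dr - (2r^2 + r)$, completing the proposition. The closedness claim itself is routine (polynomial map, preimage of a point); the main care required lies slightly downstream, in keeping the block partition consistent and invoking the classical value $\dim \mathrm{Sp}(2r,\sR) = r(2r+1)$.
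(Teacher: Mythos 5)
Your closedness argument is correct and essentially the same as the paper's: both realize $S_{d,2r}$ as the preimage of a singleton (a closed set) under the continuous polynomial map $\mX \mapsto \mX \mS_{d,2r} \mX^\top$, the only cosmetic difference being that you subtract $\mS_{d,2r}$ so the preimage is of $\{\vzero\}$. The downstream dimension count you sketch is also consistent with the paper (which counts constraints directly rather than quoting $\dim \mathrm{Sp}(2r,\sR) = r(2r+1)$), but that goes beyond the claim at issue.
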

\begin{proof}
    Consider a mapping $\rho : \mX \mapsto \mX \mS_{d,2r} \mX^\top$, which is continuous.
    Noting that $S_{d,2r} = \rho^{-1}(\{\mS_{d,2r}\})$ and that $\{\mS_{d,2r}\}$ is closed (in Hausdorff space, which $\GL_d(\sR)$ is), $S_{d,2r}$ is also closed by continuity.
\end{proof}

We now characterize $S_{d,2r}$.

Using block matrix notation, we need to characterize $\mX = \begin{bmatrix} \mX_{11} & \mX_{12} \\ \mX_{21} & \mX_{22} \end{bmatrix}$ such that $\mX$ is invertible and $\mX \mS_{2r} \mX^\top = \mS_{2r}$.
After some tedious computations, we have that
\begin{equation}
    \begin{bmatrix}
        \mX_{11} \mS_{2r} \mX_{11}^\top & \mX_{11} \mS_{2r} \mX_{21}^\top \\ \mX_{21} \mS_{2r} \mX_{11}^\top & \mX_{21} \mS_{2r} \mX_{21}^\top
    \end{bmatrix}
    =
    \begin{bmatrix}
        \mS_{2r} & \vzero_{2r \times (d - 2r)} \\ \vzero_{(d - 2r) \times 2r} & \vzero_{2r \times 2r}
    \end{bmatrix}.
\end{equation}
Consider the first block.
Taking the determinant, we can deduce that $\det(\mX_{11})^2 = 1 \neq 0$, i.e., $\mX_{11}$ should be invertible.
As $\mS_{2r}$ is also invertible, the antidiagonal blocks implies that $\mX_{21} = \vzero_{(d - 2r) \times 2r}$.

So far, we have that $\mX$ should be of the form
\begin{equation}
    \mX =
    \begin{bmatrix}
    \mX_{11} & \mX_{12} \\
    \vzero_{(d - 2r) \times 2r} & \mX_{22},
    \end{bmatrix}
\end{equation}
where $\mX_{11} \in \Sym(2p) := \{ \mX \in \GL_n(\sR) : \mX \mS_{2r} \mX^\top = \mX \}$.
By Schur's determinant formula, as $\mX$ must be invertible, we must have that
\begin{equation}
    \det(\mX) = \det(\mX_{11}) \det(\mX_{22}) \neq 0,
\end{equation}
i.e., $\mX_{22}$ should also be invertible.

We now derive the dimension of $\GL_{d - 2r}(\sR)$ $\Sym(2r)$.
\begin{claim}
    $\dim(\GL_{d - 2r}(\sR)) = (d - 2r)^2$.
\end{claim}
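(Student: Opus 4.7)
The plan is to recognize $\GL_{d-2r}(\sR)$ as an open submanifold of the Euclidean space $\sR^{(d-2r) \times (d-2r)}$ and read off its dimension accordingly. Concretely, I would first note that the determinant map $\det : \sR^{n \times n} \to \sR$ (with $n := d - 2r$) is a polynomial, hence continuous, and that $\GL_n(\sR) = \det^{-1}(\sR \setminus \{0\})$ is the preimage of an open set. This makes $\GL_n(\sR)$ an open subset of $\sR^{n \times n}$.

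Next, I would invoke the standard fact that any nonempty open subset $U$ of a smooth manifold $M$ inherits a canonical smooth manifold structure of the same dimension as $M$ (the inclusion chart restricts to a chart on $U$). Since $\sR^{n \times n}$ is a smooth manifold of dimension $n^2$, this immediately yields $\dim(\GL_n(\sR)) = n^2 = (d - 2r)^2$.

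Finally, I would check nonemptiness by exhibiting $\mI_n \in \GL_n(\sR)$, so the open submanifold is genuinely nontrivial and the dimension statement is well-defined. There is no real obstacle here; the only subtlety worth mentioning is consistency with the Lie group structure used in Lemma~\ref{lem:lee} --- namely, that the smooth manifold dimension of $\GL_n(\sR)$ as an open submanifold of $\sR^{n \times n}$ coincides with its dimension as a Lie group, which is the quantity needed for the dimension-counting formula $\dim \Skew(d; 2r) = \dim \GL_d(\sR) - \dim S_{d, 2r}$.
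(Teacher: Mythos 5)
Your proof is correct and follows essentially the same route as the paper: identify $\GL_n(\sR)$ as $\det^{-1}(\sR\setminus\{0\})$, conclude it is open in $\sR^{n\times n}$, and read off the dimension $n^2 = (d-2r)^2$. The extra remarks on nonemptiness and on the agreement between the open-submanifold dimension and the Lie group dimension are fine but not needed beyond what the paper states.
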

\begin{proof}
    Let $n = d - 2r$.
    Then, note that $\GL_n(\sR) = \det^{-1}(\sR \setminus \{0\})$.
    As $\det$ is continuous and $\sR \setminus \{0\}$ is open, $\GL_n(\sR) \subset \sR^{n \times n}$ is open, and we are done.
\end{proof}

\begin{claim}
    $\dim(\Sym(2r)) = 2r^2 + r$.
\end{claim}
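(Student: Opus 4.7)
My plan is to identify $\Sym(2r)$ with the real symplectic group associated to the nondegenerate skew form $\mS_{2r}$, and to compute its dimension via its Lie algebra, in the same spirit as the application of \cref{lem:lee} above.

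First, I would verify that $\Sym(2r)$ is a closed subgroup of $\GL_{2r}(\sR)$. It is the preimage of the closed singleton $\{\mS_{2r}\}$ under the continuous map $\rho : \mX \mapsto \mX \mS_{2r} \mX^\top$, and the group axioms follow from a short direct computation (closure under products and inverses uses only that $\mS_{2r}$ is preserved). By the closed subgroup theorem of Cartan, $\Sym(2r)$ then carries a canonical Lie subgroup structure, and its dimension equals that of its Lie algebra $\mathfrak{sym}(2r) \subseteq T_{\mI}\GL_{2r}(\sR) \cong \sR^{2r \times 2r}$.

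Next, I would characterize $\mathfrak{sym}(2r)$ by linearization. Plugging a smooth curve $\mX(t) = \mI + t \mY + O(t^2)$ through the identity into the defining equation $\mX(t) \mS_{2r} \mX(t)^\top = \mS_{2r}$ and differentiating at $t = 0$ yields
\begin{equation*}
    \mathfrak{sym}(2r) = \{ \mY \in \sR^{2r \times 2r} : \mY \mS_{2r} + \mS_{2r} \mY^\top = \vzero \}.
\end{equation*}
Using $\mS_{2r}^\top = -\mS_{2r}$, a short sign chase shows this is equivalent to $\mY \mS_{2r}$ being symmetric. Since $\mS_{2r}$ is invertible, right-multiplication $\mY \mapsto \mY \mS_{2r}$ is a linear automorphism of $\sR^{2r \times 2r}$, so
\begin{equation*}
    \dim \mathfrak{sym}(2r) \;=\; \dim\{ \mZ \in \sR^{2r \times 2r} : \mZ^\top = \mZ \} \;=\; \binom{2r+1}{2} \;=\; 2r^2 + r,
\end{equation*}
which matches the claim.

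I do not foresee a serious obstacle. The only subtle point is the sign bookkeeping when converting the antisymmetric linearized constraint $\mY \mS_{2r} + \mS_{2r} \mY^\top = \vzero$ into the equivalent statement that $\mY \mS_{2r}$ is symmetric; once that identification is made, the dimension count reduces to the trivial count of symmetric $2r \times 2r$ matrices, and the closed-subgroup-theorem step is standard.
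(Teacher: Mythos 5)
Your proof is correct, but it takes a genuinely different route from the paper's. You recognize $\Sym(2r)$ as the real symplectic group for the form $\mS_{2r}$, invoke Cartan's closed subgroup theorem, and compute the dimension of the Lie algebra $\{\mY : \mY\mS_{2r} + \mS_{2r}\mY^\top = \vzero\}$, which the invertibility of $\mS_{2r}$ identifies linearly with the space of symmetric $2r\times 2r$ matrices of dimension $\binom{2r+1}{2} = 2r^2+r$. The paper instead works directly at the group level: it partitions $\mX$ into $2\times 2$ blocks, uses the identity $\mA\mS\mA^\top = \det(\mA)\mS$ for $2\times 2$ matrices to write out the constraints $\mX\mS_{2r}\mX^\top = \mS_{2r}$ blockwise, counts $r$ scalar constraints from the diagonal blocks and $4\binom{r}{2}$ from the off-diagonal ones (using the skew-symmetry of $\mX\mS_{2r}\mX^\top$), and subtracts the total $2r^2-r$ from $\dim\GL_{2r}(\sR)=4r^2$. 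Your linearization argument is the standard textbook computation and is arguably more rigorous, since the paper's constraint count implicitly assumes the $2r^2-r$ scalar equations are independent (equivalently, that $\rho$ is a submersion onto the skew-symmetric matrices), whereas the Lie-algebra route establishes the tangent-space dimension directly; the one routine point you should still note is that every $\mY$ satisfying the linearized condition genuinely exponentiates into $\Sym(2r)$, which follows from $\mS_{2r}^{-1}\mY\mS_{2r} = -\mY^\top$. The paper's argument, in exchange, is elementary and self-contained, requiring no Lie theory beyond \cref{lem:lee}, which it already uses elsewhere in the same proof.
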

\begin{proof}
    We do this by counting the number of independent constraints, then subtracting it from $\dim(\GL_{2r}(\sR)) = 4r^2$.
    Let us denote $\mS := \begin{bmatrix}
            0 & 1 \\ -1 & 0
        \end{bmatrix}$ for simplicity.
    First, for a $\mA \in \sR^{2 \times 2}$, note that
    \begin{equation}
        \mA \mS \mA^\top = \det(\mA) \mS.
    \end{equation}
    Now consider a $\mX \in \mathrm{GL}_{2r}(\sR)$, consisting of $r$ number of $2 \times 2$ blocks:
    \begin{equation}
    \mX =
    \begin{bmatrix}
        \mX_{11} & \mX_{12} & \cdots & \mX_{1r} \\
        \mX_{21} & \mX_{22} & \cdots & \mX_{2r} \\
        \vdots & \vdots & \ddots & \vdots \\
        \mX_{r1} & \mX_{r2} & \cdots & \mX_{rr}
    \end{bmatrix}.
    \end{equation}
    Then, by the block matrix multiplication and the above result, we have that
    \begin{align}
        \left(\mX \mS_{2r} \mX^\top\right)_{i,j} = 
        \begin{cases}
            \left( \sum_{k=1}^r \det(\mX_{ik}) \right) \mS, \quad & i = j, \\
            \sum_{k=1}^r \mX_{ik} \mJ \mX_{kj}^\top, \quad &i \neq j
        \end{cases} = 
        \begin{cases}
            \mS, \quad & i = j, \\
            \vzero_{2 \times 2}, \quad &i \neq j
        \end{cases}.
    \end{align}
    where here, $(\cdot)_{i,j}$ refers to the $2 \times 2$ block at the $(i, j)$ location.

    There are $r$ constraints for $i = j$ and $4 \binom{r}{2} = 2r(r - 1)$ constraints for $i \neq j$, which amounts to $2r^2 - r$ constraints in total.
    Thus, the dimension of $\Sym(2r)$ becomes $4r^2 - (2r^2 - r) = 2r^2 + r$.
\end{proof}

All in all, we have that
\begin{align}
    \dim(S_{d,2r}) &= \underbrace{\dim(\Sym(2r))}_{\text{degrees of freedom for }\mX_{11}} + \underbrace{\dim(\sR^{2r \times (d - 2r)})}_{\text{degrees of freedom for }\mX_{12}} + \underbrace{\dim(\GL_{d - 2r}(\sR))}_{\text{degrees of freedom for }\mX_{22}} \\
    &= (2r^2 + r) + 2r(d - 2r) + (d - 2r)^2 \\
    &= d^2 + 2r^2 + r - 2dr.
\end{align}

Applying Lemma~\ref{lem:lee}, we have that
\begin{equation}
    \dim(\Skew(d; 2r)) = \dim(\GL_d(\sR)) - \dim(S_{d,2r})
    = 2dr - (2r^2 + r).
\end{equation}
\qed

\newpage
\section{ADDITIONAL DETAILS FOR BILINEAR DUELING BANDITS: REGRET ANALYSIS}
\label{app:bilinear2}

\subsection{\texorpdfstring{Proof of \cref{thm:borda-bound} -- Borda Regret Upper Bound for Bilinear Dueling Bandits}{Proof of Theorem 6.1 -- Borda Regret Upper Bound for Bilinear Dueling Bandits}}
\label{app:bilinear-alg}

We state the full version of the Borda regret bound and give its proof:
\begin{theorem}[Full Statement of \cref{thm:borda-bound}]\label{thm:borda-bound-full}
    Let us denote $\GL_{\min} := \GL_{\min}(\gX)$, $\bar{\kappa}(\pi) := \E_{\mX \sim \pi} \left[ \dmu(\langle \mX, \bm\Theta_\star \rangle) \right]$, and $\pi_\star \in \argmin_{\pi \in \gP} \GL(\pi; \bm\Theta_\star)$.
    Choose $N_1$ and $N_2$ as
    \begin{align}
        N_1 &\asymp \frac{\bar{\kappa}(\pi_\star) d r}{\lambda_{\min}(\mH(\pi_\star; \bm\Theta_\star))^2} \log\frac{d}{\delta} \vee R_s \frac{\bar{\kappa}(\pi_\star) r^2}{\lambda_{\min}(\mH(\pi_\star; \bm\Theta_\star))^3} \sqrt{\frac{N_2 \log\frac{d}{\delta}}{\GL_{\min}(\gX)}}, \\
        N_2 &= \left( \GL_{\min} \log\frac{d}{\delta} \right)^{1/3} (\kappa_\star^B T)^{2/3},
    \end{align}
    and let us assume that $T \geq N_1 + N_2$.
    Then, the following Borda regret bound of \texttt{BETC-GLM-LR}\footnote{This is an acronym for \emph{Borda Explore-Then-Commit for Generalized Linear Models with Low-Rank structure}.} holds with probability at least $1 - \delta$:
    \begin{equation}
        \Reg^B(T) \lesssim \left( \GL_{\min} \log\frac{d}{\delta} \right)^{1/3} (\kappa_\star^B T)^{2/3} + R_s L_\mu \left( \frac{\GL_{\min}}{\kappa_\star^B} \log\frac{d}{\delta} \right)^{2/3} T^{1/3} + N_1.
    \end{equation}
    Here, it is clear that the first term dominates when $T$ is sufficiently large.
\end{theorem}
\begin{proof}
We na\"{i}vely bound the instantaneous regret from the exploration phase with $1$, and thus, the cumulative regret up to the forced exploration is $N_1 + N_2$.

After the exploration phase, the instantaneous regret is the same as $B(\bm\phi_\star) - B(\widehat{\bm\phi})$.
This is bounded as follows:
\begin{align}
    B(\bm\phi_\star) - B(\widehat{\bm\phi}) &= \E_{\bm\phi' \sim \Unif(\gX)} \left[ \mu\left( \bm\phi_\star^\top \bm\Theta_\star \bm\phi' \right) - \mu( \widehat{\bm\phi}^\top \bm\Theta_\star \bm\phi') \right] \\
    &\leq \E_{\bm\phi' \sim \Unif(\gX)} \left[ \mu\left( \bm\phi_\star^\top \bm\Theta_\star \bm\phi' \right) - \mu( \bm\phi_\star^\top \widehat{\bm\Theta} \bm\phi') \right] \tag{Definition of $\widehat{\bm\phi}$} \\
    &\overset{(*)}{=} \underbrace{\E_{\bm\phi' \sim \Unif(\gX)} \left[ \dmu\left( \bm\phi_\star^\top \bm\Theta_\star \bm\phi' \right) \bm\phi_\star^\top (\bm\Theta_\star - \widehat{\bm\Theta}) \bm\phi' \right]}_{\triangleq Q_1} + \underbrace{\E_{\bm\phi' \sim \Unif(\gX)} \left[ - \left( \bm\phi_\star^\top (\bm\Theta_\star - \widehat{\bm\Theta}) \bm\phi' \right)^2 \tilde{\theta}(\bm\phi') \right]}_{\triangleq Q_2} \tag{First-order Taylor expansion with integral remainder}
\end{align}
where at $(*)$, we define
\begin{equation}
    \tilde{\theta}(\bm\phi') := \int_0^1 (1 - z) \ddmu\left( \bm\phi_\star^\top \left( (1 - z) \bm\Theta_\star + z \widehat{\bm\Theta} \right) \bm\phi' \right) dz.
\end{equation}
$Q_1$ can be bounded as
\begin{align}
    Q_1 &= \E_{\bm\phi' \sim \Unif(\gX)} \left[ \dmu\left( \bm\phi_\star^\top \bm\Theta_\star \bm\phi' \right) \bm\phi_\star^\top (\bm\Theta_\star - \widehat{\bm\Theta}) \bm\phi' \right] \\
    &\leq \left( \max_{\bm\phi' \in \gX} \left| \bm\phi_\star^\top (\bm\Theta_\star - \widehat{\bm\Theta}) \bm\phi' \right| \right) \E_{\bm\phi' \sim \Unif(\gX)} \left[ \dmu\left( \bm\phi_\star^\top \bm\Theta_\star \bm\phi' \right) \right] \\
    &\leq \kappa_\star^B \bignormop{\widehat{\bm\Theta} - \bm\Theta_\star} \tag{rectangular quotient relation for $\bignormop{\cdot}$ \& $\bm\phi_\star, \bm\phi' \in \gB^d(1)$ \& definition of $\kappa_\star^B$} \\
    &\lesssim \kappa_\star^B \sqrt{\frac{\GL_{\min}}{N_2} \log\frac{d}{\delta}}. \tag{\cref{thm:estimation-final}}
\end{align}

By self-concordance, we have that $|\tilde{\theta}(\bm\phi')| \leq \frac{1}{2} R_s L_\mu$ for any $\bm\phi' \in \gX$, and thus, $Q_2$ can be bounded as
\begin{align}
    Q_2 
    \leq \frac{1}{2} R_s L_\mu \E_{\bm\phi' \sim \Unif(\gX)} \left[ \left( \bm\phi_\star^\top (\bm\Theta_\star - \widehat{\bm\Theta}) \bm\phi' \right)^2 \right]
    \lesssim \frac{R_s L_\mu \GL_{\min}}{N_2} \log\frac{d}{\delta}.
\end{align}

Combining everything, we have that
\begin{equation}
     B(\bm\phi_\star) - B(\widehat{\bm\phi}) \lesssim \kappa_\star^B \sqrt{\frac{\GL_{\min}}{N_2} \log\frac{d}{\delta}} + \frac{R_s L_\mu \GL_{\min}}{N_2} \log\frac{d}{\delta}.
\end{equation}

All in all, we have
\begin{align}
    \Reg^B(T) &\lesssim N_1 + N_2 + (T - N_1 - N_2) \left( \kappa_\star^B \sqrt{\frac{\GL_{\min}}{N_2} \log\frac{d}{\delta}} + \frac{R_s L_\mu \GL_{\min}}{N_2} \log\frac{d}{\delta}. \right) \nonumber \\
    &\leq N_1 + N_2 + T \sqrt{\frac{\GL_{\min}}{N_2} \log\frac{d}{\delta}} \left( \kappa_\star^B + R_s L_\mu \sqrt{\frac{\GL_{\min}}{N_2} \log\frac{d}{\delta}} \right).
\end{align}
Let us optimize for $N_2$ using the last expression.

If we choose $N_2 = \left( \GL_{\min} \log\frac{d}{\delta} \right)^{1/3} (\kappa_\star^B T)^{2/3}$, we have
\begin{equation}
    \Reg^B(T) \lesssim N_1 + \left( \GL_{\min} \log\frac{d}{\delta} \right)^{1/3} (\kappa_\star^B T)^{2/3} + R_s L_\mu \left( \frac{\GL_{\min}}{\kappa_\star^B} \log\frac{d}{\delta} \right)^{2/3} T^{1/3}.
\end{equation}
With this, we have the following requirement on $N_1$, as stated in Eqn.~\eqref{eqn:N1-requirement}: recalling that $\bar{\kappa}(\pi) = \E_{\mX \sim \pi} \left[ \dmu(\langle \mX, \bm\Theta_\star \rangle) \right]$ and $\pi_\star \in \argmin_{\pi \in \gP} \GL(\pi; \bm\Theta_\star)$,
\begin{equation}
    N_1 \gtrsim \frac{\bar{\kappa}(\pi_\star) d r}{\lambda_{\min}(\mH(\pi_\star; \bm\Theta_\star))^2} \log\frac{d}{\delta} \vee R_s \frac{\bar{\kappa}(\pi_\star) r^2}{\lambda_{\min}(\mH(\pi_\star; \bm\Theta_\star))^3} \sqrt{\frac{N_2 \log\frac{d}{\delta}}{\GL_{\min}(\gX)}}
\end{equation}
\end{proof}

\subsection{\texorpdfstring{Relations to \citet{wu2024dueling}}{Relations to Wu et al. (2024)}}
\label{app:bilinear-regret}

\paragraph{Reduction to \citet{wu2024dueling}.}
To our knowledge, \citet{wu2024dueling} is the only comparable competitor in our setting of Borda regret minimization.
To do that, we first describe how to reduce our bilinear dueling bandits to their setting.
Recall that \citet{wu2024dueling} require vector-valued features for each pair of items, $\bm\phi_{i,j} = -\bm\phi_{j,i}$.
As $\bm\Theta_\star = \widetilde{\bm\Theta}_\star - \widetilde{\bm\Theta}_\star^\top$ for some $\widetilde{\bm\Theta}_\star \in \sR^{d \times d}$, one can rewrite the bilinear preference as
\begin{equation}
    \mu\left( \bm\phi_i^\top (\widetilde{\bm\Theta}_\star - \widetilde{\bm\Theta}_\star^\top) \bm\phi_j \right)
    = \mu\left( \left\langle \widetilde{\bm\Theta}_\star, \bm\phi_i \bm\phi_j - \bm\phi_j \bm\phi_i^\top \right\rangle \right).
\end{equation}
One may be tempted to set $\bm\phi_{i,j} = \mathrm{vec}(\bm\phi_i \bm\phi_j^\top - \bm\phi_j \bm\phi_i^\top)$.
However, recalling the discussions from \cref{sec:prelim-tensor}, one must set $\bm\phi_{i,j} = \mP_A^\top \mathrm{vec}(\bm\phi_i \bm\phi_j^\top - \bm\phi_j \bm\phi_i^\top)$ for $\bm\phi_{i,j}$'s to be able to fully span $\sR^{\wedge 2}$.
Setting $\bm\theta_\star = \mP_A^\top \mathrm{vec}(\widetilde{\bm\Theta}_\star) \in \sR^{d^2}$ and the reduction is complete.

\paragraph{Comparing Regret Upper Bounds.}
A na\"{i}ve application of the algorithm of \citet{wu2024dueling} using the above reduction attains a Borda regret bound of $\widetilde{\gO}(c_\mu^{-1} d^{4/3} T^{2/3})$ up to some epsilon-net error (see their Remark 5.3), where
\begin{equation}
    c_\mu := \min_{\bignorm{\vx}_2 \leq 1, \bignorm{\bm\theta - \bm\theta_\star} \leq 1} \dmu(\langle \vx, \bm\theta \rangle) > 0.
\end{equation}
They have also assumed that $\lambda_{\min}(\mV(\pi^U)) \geq \lambda_0$ for some constant $\lambda_0 > 0$, where $\pi^U \sim \Unif(\gA \times \gA)$~\citep[Assumption 3.1]{wu2024dueling}.
We remark that in many cases, $\lambda_0$ is \textit{not} constant and can be arbitrarily small dimension-wise. In particular, both \citet{wu2024dueling} and our work assumes $\|\phi_{i,j}\|_2 \leq 1$, one can prove that \textbf{$\lambda_0 \leq \frac{1}{d^2}$ for any $\gA$} under this assumption and it is \textit{impossible} to make $\lambda_0$ as a constant, since 
\begin{align}
    \mathrm{tr}\left(\mV(\pi) \right)
    &=\mathrm{tr}\left(\sum_{i,j} \pi(\bm\phi_{i,j}) \bm\phi_{i,j} \bm\phi_{i,j}^\top\right) \\
    &=\sum_{i,j}\pi(\bm\phi_{i,j})\mathrm{tr}\left(\bm\phi_{i,j} \bm\phi_{i,j}^\top\right) \tag{Linearity of $\mathrm{tr}$}\\
&\leq\sum_{i,j}\pi(\bm\phi_{i,j}) \tag{For a vector $\vv$, $\mathrm{tr}(\vv \vv^\top)=\|\vv\|_2^2$ and $\|\bm\phi_{i,j}\|_2 \leq 1$}=1
\end{align}
and $\mathrm{tr}(\mV(\pi))=\sum_{i=1}^{d^2} \lambda_i (\mV(\pi))$. 

Still, for a fair comparison, let us first compare with our bound under the same assumption and various arm-sets: utilizing \Cref{prop:unit-ball},
\begin{itemize}
    \item When $\gA = \gB^d(1)$ ($\gX = \gB^{d \times d}_{\mathrm{op}}(1)$), our regret bound becomes $\widetilde{\gO}\left( \kappa_\star^{-1/3} (\kappa_\star^B d T)^{2/3} \right)$. This is a strict improvement by a factor of $d^{2/3}$ and curvature-dependent quantities.
    \item When $\gA = \{ \ve_i : i \in [d] \}$ ($\gX = \gM$), our regret bound becomes $\widetilde{\gO}\left( \Harm(\bm\Theta_\star)^{-1/3} d (\kappa_\star^B T)^{2/3} \right)$. This is a strict improvement by a factor of $d^{1/3}$ and curvature-dependent quantities.
    \item For general $\gA \subseteq \gB^d(1)$ ($\gX \subseteq \gB^{d \times d}_{\mathrm{op}}(1)$), by \Cref{prop:improve}, our regret bound can be upper bounded with $\tilde{\gO}\left( \kappa_\star^{-1/3} (\kappa_\star^B T)^{2/3} (d \lambda_0)^{1/3} \right)$. This is a strict improvement by a factor of $(d \lambda_0)^{1/3}$, when $\lambda_0 \geq \frac{1}{d^3}$.\footnote{Of course, if the geometry of $\gA$ is ill-distributed, then $\lambda_0$ can be arbitrarily small, and we do not claim that ours is always good. Rather, when $\gA$ is sufficiently well-distributed (which is arguably the usual case). our \texttt{GL-LowPopArt} shows the benefit of exploiting the geometry and curvature.}
\end{itemize}
\begin{remark}[Dependency on $r$]
    A keen reader may notice that our regret bound is independent of the rank $r$ of the matrix $\bm\Theta_\star$, which is also the case for bilinear bandits~\citep[Theorem 4.6]{jang2021bilinear}, albeit for a different reason.
    This is because our \texttt{GL-LowPopArt} exploits the low-rankness of $\gA$ (which induces a matrix-valued arm-set of operator norm at most $1$) and the parameter space $\Skew(d; 2r)$, analogous to bilinear bandits~\citep{jun2019bilinear,jang2021bilinear} and low-rank bandits~\citep{jang2024lowrank,lu2021generalized,kang2022generalized}.
\end{remark}

We now elaborate on the curvature-dependent quantity $\frac{(\kappa_\star^B)^2}{\kappa_\star}$ in our Borda regret bound, an instance-specific scaling not previously reported in the dueling bandits literature.
Let us first recall their definitions:
\begin{equation}
    \kappa_\star := \min_{\bm\phi, \bm\phi' \in \gA} \dmu\left( \bm\phi^\top \bm\Theta_\star \bm\phi' \right), \quad \kappa_\star^B := \E_{\bm\phi' \sim \Unif(\gA)}[\dmu(\bm\phi_\star^\top \bm\Theta_\star \bm\phi')].
\end{equation}
Here, $\kappa_\star$ represents the worst-case flatness (minimum derivative) across all possible pairs of arms, while $\kappa_\star^B$ represents the average flatness when playing the true Borda winner $\bm\phi_\star$ against a uniformly drawn arm.
By definition, we have $\kappa_\star \leq \kappa_\star^B$.
The dueling nature of the regret reveals an interesting dichotomy depending on the relationship between these two quantities:

\begin{itemize}
    \item \textbf{Uniform Flatness ($\kappa_\star \asymp \kappa_\star^B$):} If the landscape's flatness is relatively uniform --meaning the hardness of estimating the preference for the worst-case pair is of the same order as estimating preferences involving the Borda winner -- then the curvature-dependent factor simplifies to $\frac{(\kappa_\star^B)^2}{\kappa_\star} \asymp \kappa_\star$. Consequently, our permanent regret scales as $\widetilde{\gO}(\kappa_\star^{1/3} T^{2/3})$. This implies that a \textit{flatter} problem (smaller $\kappa_\star$, which corresponds to smaller variance in the feedback) strictly \textit{reduces} the permanent regret, analogous to the findings in generalized linear bandits \citep{abeille2021logistic,lee2024glm}.
    
    \item \textbf{Adversarial Flatness ($\kappa_\star \ll \kappa_\star^B$):} Conversely, if there exists an adversarial pair of arms that is significantly flatter (and thus much harder to learn) than the pairs involving the Borda winner, the ratio $\frac{(\kappa_\star^B)^2}{\kappa_\star}$ blows up. In this regime, the difficulty of identifying the parameters in the worst-case flat region overwhelmingly dominates the variance-reduction we would otherwise enjoy around the Borda winner. As a result, the permanent regret no longer benefits from the general flatness of the problem.
\end{itemize}

\paragraph{Regret Lower Bound.}
\citet[Theorem 4.1]{wu2024dueling} obtain a regret lower bound of $\Omega(d^{2/3} T^{2/3})$ for $\bm\phi_{i,j}, \bm\theta_\star \in \sR^d$, and a similar lower bound for unstructured dueling bandits has been obtained by \citet[Theorem 16]{saha2021borda}; $T^{2/3}$ stems from the fact that the exploration and exploitation cannot be mixed.
This suggests that at least in terms of $T$, our \texttt{BETC-GLM-LR} is also optimal.

However, their lower bound cannot be directly applied to our setting, as our bilinear dueling bandits, in essence, constrain the matrix arm to be of rank-1.
It is clear that their hard instance, based on the lower bound for stochastic linear bandits~\citep{dani2008stochastic}, cannot be instantiated as our setting.
We leave obtaining a tight lower bound to future work, considering how even in stochastic bilinear bandits (non-dueling), the lower bound remains open~\citep{kotlowski2019banditpca,jang2021bilinear,jun2019bilinear}.
It would also be interesting from the curvature perspective on whether the above dichotomy is fundamental, i.e., whether one can derive a regret lower bound that depends on $\frac{(\kappa_\star^B)^2}{\kappa_\star}$.
A potential starting point may be from the regret lower bound of \citet[Theorem 6.1]{jang2024lowrank}, although they do not consider the Borda regret nor nonlinear link function.

\newpage
\section{PRELIMINARY EXPERIMENTS: 1-BIT MATRIX COMPLETION/RECOVERY}
\label{app:experiments}

In this appendix, we present preliminary numerical results on 1-bit matrix completion and recovery~\citep{davenport2014completion} to demonstrate the empirical effectiveness of \texttt{GL-LowPopArt}. For results in the Gaussian (i.e., linear) setting, we refer readers to the experiments in \citet{jang2024lowrank}. The source code to reproduce these experiments is publicly available on our GitHub repository.\footnote{\url{https://github.com/nick-jhlee/GL-LowPopArt}}

\subsection{Experimental Setting}
\paragraph{Dataset.}
We set the dimensions $d_1 = d_2 = 3$, the rank $r = 1$, and the tolerance level $\delta = 0.001$. We evaluate three different choices for the action set $\gX$:
\begin{enumerate}
    \item \textbf{Matrix Completion:} $\gX = \{ \ve_i \ve_j^\top : 1 \leq i, j \leq 3 \}$, where $\ve_i$ denotes the standard basis vector in $\sR^{d_1}$.
    \item \textbf{Matrix Recovery (Random):} $K = 50$ arms sampled uniformly at random from the unit sphere $\gS^{d_1 d_2 - 1}(1)$.
    \item \textbf{Matrix Recovery (Hard):} A hard instance proposed by \citet[Appendix C.3]{jang2024lowrank}, defined as:
    \begin{equation}
        \gX_{\text{hard}} = \left\{  \sqrt{\frac{1}{d}} \vec^{-1}(\ve_1) \right\} \cup \left\{ \vec^{-1}\left( \sqrt{\frac{1}{d+1}} \ve_1 + \sqrt{\frac{d}{d+1}} \ve_i \right) \ : \ i = 2, \ldots, d^2 \right\}.
    \end{equation}
\end{enumerate}
For each experimental setting, we repeat the process 30 times across a wide range of sample sizes $N$. In each repetition, the underlying true parameter is generated as $\bm\Theta_\star = 2 \mU \mU^\top$, where $\mU$ is obtained via the QR decomposition of $\mU' \sim \gN(0, 1)^{d \times r}$. For the random matrix recovery setting, the arm set $\gX$ is also resampled independently for each repetition.

\paragraph{Algorithms.}
We compare the following algorithms: (i) nuclear-penalized MLE with the uniform design (`U'), (ii) \texttt{GL-LowPopArt} with a uniform Stage~II design (`U+U'), and (iii) \texttt{GL-LowPopArt} with an optimal GL-design in Stage~II (`U+GL'). We employ the theoretically prescribed hyperparameters without further tuning: $\lambda_N = \sqrt{\frac{2}{N} \log\frac{12}{\delta}}$ for the nuclear-penalized MLE (\Cref{prop:lambda}), and $\nu = \sqrt{\frac{2}{5 \GL(\pi; \bm\Theta_0) N_2} \log\frac{24}{\delta}}$ for \texttt{GL-LowPopArt}.

To solve the Stage~I nuclear-penalized MLE, we use a restarted FISTA-style proximal gradient method with a backtracking line search~\citep{fista_restart,fista,cai2010thresholding}. For Stage~II and the optimal design computations, we use CVXPY~\citep{diamond2016cvxpy,agrawal2018rewriting} equipped with the MOSEK solver.

To ensure a fair comparison, we fix the total sample budget $N$ across all methods, enforcing $N_1 + N_2 = N$, where $N_i$ denotes the number of samples allocated to Stage $i$. Specifically, we set $N_1 = \floor{N / 10}$ and $N_2 = N - N_1$.\footnote{While the main text establishes that $N_1 \asymp \sqrt{N}$ suffices asymptotically, we allocate a larger proportional budget to $N_1$ here to account for finite-sample effects and ensure sufficient initial exploration.}

\begin{remark}
    We also experimented with the Burer-Monteiro factorization (BMF) approach using a small random initialization~\citep{stoger2021lowrank,kim2023lowrank}, factoring the parameter as $\bm\Theta = \mU \mU^\top$. However, this method yielded the worst performance across all scenarios. This suggests that the non-convex loss landscape is highly non-benign in the noisy setting, corroborating the observations of \citet{ma2023bmf}. We have omitted these results for brevity.
\end{remark}

\subsection{Results \& Discussion}
We report the mean operator norm error for each algorithm alongside $95\%$ two-sided confidence intervals, computed using the empirical standard error over the 30 independent repetitions (using the Student's $t$-distribution or normal approximation)~\citep{student1908probable}.

\Cref{fig:1-main} summarizes the updated results.\footnote{After the camera-ready submission, we revised the experiments in two ways: we now report the operator norm error, and we fixed an indexing bug in the previous implementation of optimal GL-design. We then re-ran all experiments, which led to some changes in the empirical trends, reflected in the new discussion.}
Across all three tasks, \texttt{GL-LowPopArt} substantially outperforms the nuclear-penalized MLE baseline.
In particular, the baseline error decreases much more slowly and exhibits a long finite-sample plateau (e.g., up to $N \approx 10^5$ for \textbf{Matrix Recovery (Hard)} and $N \approx 10^4$ for other tasks), whereas both Stage-II variants of \texttt{GL-LowPopArt} improve much more rapidly as $N$ grows. We now elaborate on the difference in performance between uniform and optimal GL-design for Stage~II.

For \textbf{matrix completion}, the gain from using the optimal GL-design in Stage~II is negligible: the uniform and GL-design perform very similarly, and the uniform Stage-II design is slightly better in the small-sample regime.
For \textbf{matrix recovery (random)}, the gain is also somewhat modest, but in the same small-sample regime, the GL-design is slightly better than uniform design.
All in all, the two Stage-II variants can be extremely close, with only a small edge for the GL-design in parts of the range.
The clearest benefit of the optimal GL-design appears in \textbf{matrix recovery (hard)}, where it consistently outperforms the uniform design by the largest margin in the small sample regime, but the gap narrows down as $N$ increases.
Thus, the empirical benefit of GL-design is most pronounced on geometrically anisotropic instances, but still modest compared to our initial expectations.
We leave a more systematic empirical study of this phenomenon to future work.

\begin{figure}[!h]
    \centering
    \includegraphics[width=0.6\linewidth]{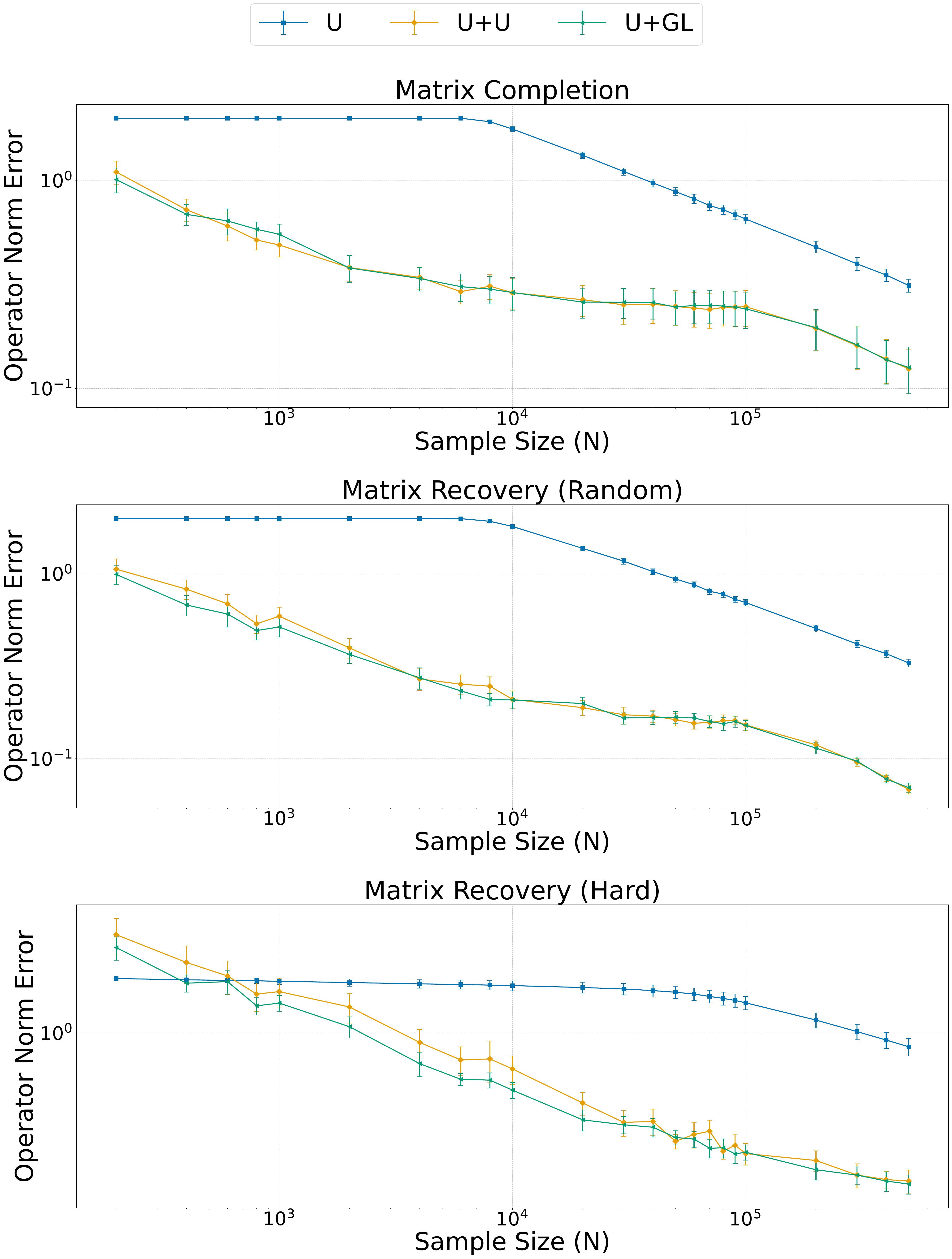}
    \caption{Log-log plots of operator norm errors across a wide range of sample sizes $N$, averaged over 30 independent repetitions. The first, second, and third rows correspond to the matrix completion, random matrix recovery, and hard matrix recovery settings, respectively.}
    \label{fig:1-main}
\end{figure}

\clearpage
\section{ADDITIONAL FUTURE DIRECTIONS}
\label{app:future}

\paragraph{(1) Computational Efficiency.}
Although \texttt{GL-LowPopArt} is computationally tractable, it is not efficient.
The main bottleneck stems from two $\mathcal{O}((d_1 d_2)^3)$ operations in Stage~II: inverting the Hessian $\mH(\pi_2; \bm\Theta_0)$ for matrix one-sample estimators and performing SVD on $\bm{\Theta}_1$.
Improving the computational efficiency of these steps \textit{while retaining statistical efficiency} remains an important future direction.

\paragraph{(2) Beyond Single-Switch Designs.}
We derived our estimation rate using a single-switch design, where the data collection policy is fixed within each of the two stages.
An interesting question is whether granting the learner full control over a history-dependent adaptive policy would enable a statistical rate that fundamentally outperforms the $\GL$-Design bound achieved by our \texttt{GL-LowPopArt}.
The study of valid statistical inference under such general adaptive data collection mechanisms is an active research front~\citep{zhang2021adaptively,zrnic2024active,bibaut2025adaptive}.
Investigating the precise trade-off between the frequency and flexibility of adaptive control and the theoretical limits of the error rate is an important direction for future work.

\paragraph{(3) Hard Instances under Fixed Passive Designs.}
One may wonder whether the bounds in our \Cref{prop:improve,prop:unit-ball} are tight for those specific arm-sets.
Moreover, readers familiar with \citet{jang2024lowrank} may wonder if, similar to their Lemma 3.6, we could construct a hard arm-set instance $\gA_{\mathrm{hard}}$ such that $\GL_{\min}\left( \gA_{\mathrm{hard}} \right) \asymp \frac{1}{H_{\min}\left( \gA_{\mathrm{hard}} \right)}$ where $H_{\min}\left( \gA_{\mathrm{hard}} \right) := \max_{\pi \in \gP(\gA)} \lambda_{\min}(\mH(\pi; \bm\Theta_\star))$ is the E-optimal design related to the true Hessian.
These questions, although we leave to future work, would help us understand better the tightness and importance of our proposed optimal GL-design

\paragraph{(4) Beyond Low Rank.}
Our estimator naturally adapts to low-rank structure, but many problems of interest exhibit additional or alternative structures, such as row/column sparsity~\citep{zhao2014structured}, joint low-rank and sparse decompositions~\citep{yang2013dirty,oymak2015simultaneous,richard2012simultaneous,zhao2017simultaneous}, or other structured priors.
Extending \texttt{PopArt}-style estimators~\citep{jang2022popart} to these settings could yield analogous instance-wise optimality guarantees, and may have immediate impact in high-dimensional applications such as genomics, recommender systems, and structured bandits.

\paragraph{(5) Robustness to Model Misspecification.}
Our analysis assumes a well-specified GLM, as is standard in statistical learning and bandits~\citep[Chapter~24.4]{banditalgorithms}.
Under misspecification, however, the Stage~I estimator converges not to the true $\bm\Theta_\star$, but to the KL projection of the true distribution onto the assumed GLM class~\citep{white1982misspecified}.
This may introduce a persistent bias that Stage~II cannot eliminate, leading to degraded performance.
While mild forms of misspecification (e.g., variance misestimation in the Gaussian case) may only result in conservative but still consistent estimates---see, e.g., adaptive procedures such as the square-root LASSO~\citep{klopp2014general}---more severe mismatches remain challenging.
An important future direction is to develop \texttt{GL-LowPopArt} variants that explicitly account for GLM uncertainty, either through Bayesian approaches~\citep{walker2013misspecified} or misspecification-robust estimators~\citep{robins1994robust,fortunati2017misspecified}.

\end{document}